\newcommand{\fullmodel}{Optimism-Pessimism Principle for Persuasion Process}
\newcommand{\ORAI}{\texttt{OP4}}
\newcommand{\pstate}{outcome}
\newcommand{\mstate}{state}
\newcommand{\cstate}{context}%
\def \Ps {\operatorname{Pers}}
\def \OPT {\operatorname{OPT}}
\def \Reg {\operatorname{Reg}}
\def \Gap {\operatorname{Gap}}
\def \poly {\operatorname{poly}}
\theoremstyle{mytheoremstyle}
\newcommand{\compilehidecomments}{false}%
	\newcommand{\hf}[1]{}
    \newcommand{\jw}[1]{}
    \newcommand{\zhuoran}[1]{}
    \newcommand{\zf}[1]{}
    \newcommand{\hfr}[1]{}
    \newcommand{\zx}[1]{}
    \newcommand{\todo}[1]{}
\definecolor{darkspringgreen}{rgb}{0.09, 0.45, 0.27}
\definecolor{arsenic}{rgb}{0.23, 0.27, 0.29}
\newcommand{\hf}[1]{{\color{blue}  [\text{Haifeng:} #1]}}
\newcommand{\hfr}[1]{{\color{blue}    #1}}
\newcommand{\jw}[1]{{\color{darkspringgreen}  [\text{Jibang:} #1]}}
\newcommand{\zhuoran}[1]{{\color{arsenic}  [\text{zhuoran:} #1]}}
\newcommand{\zf}[1]{{\color{cyan}  [\text{ZF:} #1]}}
\newcommand{\zx}[1]{{\color{purple}  [\text{Zixuan:} #1]}}
\newcommand\todo{\textcolor{red}}
\title{Sequential Information Design: Markov Persuasion Process and Its Efficient Reinforcement Learning}
\date{}
\author{Jibang Wu\thanks{University of Virginia. Email: \texttt{jw7jb@virginia.edu}.}\qquad
Zixuan Zhang\thanks{University of Science and Technology of China. Email: \texttt{zhangzixuan@mail.ustc.edu.cn}.}\qquad
Zhe Feng\thanks{Google. Email: \texttt{zhef@google.com}.}\qquad
Zhaoran Wang\thanks{Northwestern University. Email: \texttt{zhaoranwang@gmail.com}.} 
\\ 
Zhuoran Yang\thanks{Yale University. Email: \texttt{zhuoran.yang@yale.edu}.}\qquad
Michael I. Jordan\thanks{UC Berkeley. Email: \texttt{jordan@cs.berkeley.edu}.}\qquad
Haifeng Xu\thanks{University of Virginia. Email: \texttt{hx4ad@virginia.edu}.}}
\begin{document}

\maketitle

\begin{abstract}
In today's economy, it becomes important for Internet platforms to consider the sequential information design problem to align its long term interest with incentives of the gig service providers (e.g., drivers, hosts). This paper proposes a novel model of sequential information design, namely the Markov persuasion processes (MPPs).
Specifically, in an MPP,  a sender, with informational advantage, seeks to persuade a stream of myopic receivers to take actions that maximizes the sender's cumulative utilities in a finite horizon Markovian environment with varying prior and utility functions. Planning in MPPs thus faces the unique challenge in finding a signaling policy that is simultaneously persuasive to the myopic receivers and inducing the optimal  long-term cumulative utilities of  the sender. Nevertheless, in the population level where the model is known, it turns out that we can efficiently determine the optimal (resp. $\epsilon$-optimal) policy with finite (resp. infinite) states and outcomes, through a modified formulation of the Bellman equation that additionally takes persuasiveness into consideration.

Our main technical contribution is to study the MPP under the online reinforcement learning (RL) setting, where the goal is to learn the optimal signaling policy by interacting with with the underlying MPP, without the knowledge of the sender's utility functions, prior distributions, and the Markov transition kernels.  
For such a problem, we design a provably efficient no-regret learning algorithm, the \fullmodel{} (\ORAI), which features a novel combination of both optimism and pessimism principles. In particular, we  obtain optimistic  estimates of the value functions to  encourage  exploration under the unknown environment. Meanwhile, we additionally robustify the  signaling policy  with respect to the uncertainty of   prior estimation to prevent receiver's detrimental equilibrium behavior. Our algorithm enjoys  sample efficiency by achieving a sublinear $\sqrt{T}$-regret upper bound. Furthermore, both our algorithm and theory can be applied to MPPs with large space of outcomes and states via function approximation, and we showcase such a success under the linear setting. 

\end{abstract}

\section{Introduction}
\label{sec:motivating-exp}
Most sequential decision models assume that there is a sole agent who possesses and processes all relevant (online or offline) information and takes an action accordingly.
However, the economic literature on information design \citep{kamenica2011bayesian, bergemann2019information} highlights the importance of considering information asymmetry   in decision making, where the decision maker and information possessor may be two parties having different interests and goals. 
For example, a ride-sharing platform holds historical and real-time data on active riders and driver types in different locations, based on which they have developed centralized combinatorial optimization algorithms and reinforcement learning algorithms for vehicle repositioning, routing and order matching to optimize their operational efficiency and profit~\cite{li2019efficient,qin2020ride, liang2021integrated, qin2021reinforcement}. But the de facto decision makers are the drivers. Moreover, as increasingly many drivers are freelancers instead of employees, the platform cannot expect to give mandatory orders to them. On the other hand, if the platform shares no information on rider demand, most drivers will not be able to efficiently find profitable trips. Therefore, it is not only realistic but also necessary to consider an information design problem that aligns the interests of the two parties in sequential decision making processes of this kind.

Given the large data sets being collected by corporations and governments, with avowed goals that relate data analysis to social welfare, it is timely to pursue formal treatments of sequential information design, to understand how to strategically inform the (sequential) decision makers (e.g., users, clients or citizens) impacted by centralized data analysis. In particular, we wish to understand the resulting equilibrium outcomes of both parties. 
As a concrete example, consider an online shopping platform which may make use of learning tools such as reinforcement learning or online convex optimization to manage inventory and ensure profitability~\cite{giannoccaro2002inventory, meisheri2020using}. The platform cannot single-handedly manage its inventory, instead it requires information design (a.k.a., Bayesian persuasion) in its interactions with its suppliers and consumers. On the supply side, it could strategically reveal aspects of consumer sentiment (e.g., rough number of visits, search) to the suppliers in order to guide their sales expectation and negotiate for lower unit prices. On the demand side, it could tactically control displayed product information (e.g., last five remaining, editor's choice) so as to influence consumers' perception of products and consequently their purchase decisions.
Similar situations can be anticipated for a recommendation platform. On the one hand, it should recommend most relevant items to its users for click-through and engagement. On the other hand, its recommendations are subject to misalignments with long-term objectives such as profits (e.g., from paid promotion), social impact (e.g., to prevent misinformation and filter bubbles) or development of a creator ecosystem~\cite{tang2016should, xiao2019beyond, milano2020recommender}.

\subsection{Our Results and Contributions}
To provide a formal foundation for the study of sequential information design, we introduce the \emph{Markov persuasion process} (MPP), where a sender, with informational advantage, seeks to persuade a stream of myopic receivers to take actions that maximize the sender's cumulative utility in a finite-horizon Markovian environment with varying prior and utility functions. We need to address a key challenge regarding the planning problem in MPPs; specifically, how to find persuasive signaling policies that are also optimized for the sender's long-term objective. Moreover, in face of the uncertainty for both the environment and receivers, there is a dilemma that the optimal policy based on estimated prior is not necessarily persuasive and thus cannot induce the desired trajectory, whereas a full information revelation policy is always persuasive but usually leads to suboptimal cumulative utility. So the reinforcement learning algorithm in MPPs has to ensure optimality under the premise of robust persuasiveness. This makes our algorithm design non-trivial and regret analysis highly challenging.

We show how to surmount these analysis and design challenges, and present a no-regret learning algorithm, which we refer to as \fullmodel{} (\ORAI), that provably achieves a  $\tilde{O}\big( \sqrt{d_\phi^2 d_\psi^3H^4T}\big)$ regret with high probability, where $d_\phi, d_\psi$ are dimensions of the feature spaces, $H$ is the horizon length in each episode, $T$ is the number of episodes, and $\tilde {O}(\cdot) $ hides logarithmic factors as well as problem-dependent parameters. 
To establish this result, in Section \ref{sec:optimal-policy} we start by constructing a modified formulation of the Bellman equation that can efficiently determine the optimal (resp. $\epsilon$-optimal) policy with finite (resp. infinite) states and outcomes. Section \ref{sec:op4} then considers the learning problem, in particular the design of the \ORAI{} that adopts both the optimistic principle in utility estimation to incentivize exploration and the pessimism principle in prior estimation to prevent a detrimental equilibrium for the receiver. In Sections \ref{sec:tabular} and \ref{sec:contextual}, we showcase \ORAI{} in the tabular MPPs and contextual Bayesian persuasion problem, respectively, both of which are practical special cases of MPPs. In Section \ref{sec:mdp}, we then generalize these positive results to MPPs with large outcome and state spaces via linear function approximation and generalized linear models.  

In summary, our contributions are threefold.  At the conceptual level, we identify the need for sequential information design in real-world problems and accordingly formulate a novel model, the MPP, to capture the misaligned incentives between the (sequential) decision makers and information possessors. At the methodological level, our key insight is a new algorithmic principle---optimism to encourage exploration and pessimism to induce robust equilibrium behavior. Finally, at the technical level, we develop a novel regret decomposition tailored to this combination of optimism and pessimism in the design of online learning algorithms. The fact that the combined optimism-pessimism concept can still lead to   $O(\sqrt{T})$ regret for strategic setups was not clear before our new regret decomposition lemma.  We expect this design principle and our proof techniques can be useful for other strategic learning problems.

\subsection{Related Work}

Our work is built on the foundation of information design and reinforcement learning. We refer the readers to Section \ref{sec:classic-info-design} and \ref{sec:classic-MDP} for background and formal introductions. Here we focus on the technical and modeling comparisons with related work from dynamic Bayesian persuasion and efficient reinforcement learning. 

\vspace{1mm}
\textbf{Dynamic Bayesian persuasion.}
Starting from seminal work by \citet{kamenica2011bayesian}, the study of Bayesian persuasion looks at the  design problem to influence an uninformed decision maker through strategic information revelation.
Many variants of this model have been studied, with applications in security, advertising, finance, etc.~\cite{rabinovich2015information, xu2015exploring, goldstein2018stress, badanidiyuru2018targeting}. 
More recently, several dynamic Bayesian persuasion frameworks have been proposed to model the long-term interest of the sender. Many papers \cite{ely2017beeps,renault2017optimal,farhadi2021dynamic, lehrer2021markovian} consider the setting where the sender observes the evolving states of a Markov chain, seeks to influence the receiver's belief of the state through signaling and thereby persuade him to take certain actions. In contrast to our setting, at each round, the receiver's action has no influence on the evolution of the Markov process and thus can only maximizes his utility on his belief of current state, given all the historical signals received from the sender. 
In \citet{ely2017beeps, farhadi2021dynamic}, the Markov chain has two states (one is absorbing): the receiver is interested in detecting the jump to the absorbing state, whereas the sender seeks to prolong the time to detection of such a jump. 
\citet{renault2017optimal} shows a greedy disclosure policy that ignores its influence to the future utility can be optimal in Markov chain with special utility functions.
\citet{lehrer2021markovian} characterize optimal strategies under different discount factors as well as the optimal values the sender could achieve. 
Closer to our model is that of \citet{gan2021bayesian}---we both assume the Markov environment with state transition influenced by receiver's action, as well as a separate persuasion state drawn from a prior independent of receiver's action.  However, \citet{gan2021bayesian} focus on the planning problem for the infinite-horizon MDP, solving sender's optimal signaling policy when the environment is known in cases when the receiver is myoptic or far-sighted. In particular, it is shown as NP-hard to approximate an optimal policy against a far-sighted receiver, which also justifies our interest on the myoptic receiver.
Another closely related work \cite{10.1145/3465456.3467593} studies the learning problem in repeated persuasion setting (without Markov state transition) between a stream of myopic receivers and a sender without initial knowledge of the prior. It introduces the notion of regret as well as the robustness principle to this learning problem that we adopt and generalize to our model.

\vspace{1mm}
\textbf{Bayesian Incentive-Compatible Bandit Exploration.}
Our work is also loosely related to a seminal result by~\citet{mansour2021bayesian}, who model the misaligned incentives between a system (i.e., sender) and a stream of myopic agents (i.e., receivers). \citet{mansour2021bayesian} shows that using information asymmetry, the system can create intrinsic incentives for agents to follow its recommendations. In this problem, the sender's objective is limited to the social welfare, i.e, the cumulative utility of all agents, whereas we make no assumption on the sender's utility function and thus her long-term objective. Besides our model is designed to capture more general situations where each receiver could have different priors and utility functions, and the environment might be Markovian with dynamics under the influence of the receivers' actions. 

\vspace{1mm}
\textbf{Efficient Reinforcement Learning.}
Reinforcement learning has seen its successful applications in various domains, such as robotics, finance and dialogue systems~\cite{kober2013reinforcement, zheng2020ai, li2016deep}.
Along with the empirical success, we have seen a growing quest to establish provably efficient RL methods. 
Classical sample efficiency results focus on tabular environments with small, finite state spaces~\cite{auer2008near, osband2016generalization, azar2017minimax, dann2017unifying, strehl2006pac, jin2018q, russo2019worst}.  Notably, through the design principle, known as optimism in the face of uncertainty~\cite{lattimore2020bandit}, an RL algorithm would provably incur a $\Omega(\sqrt{|\cS||\cA|T})$ regret under the tabular setting, where $\cS$ and $\cA$ are the state and action spaces respectively~\cite{jin2018q, azar2017minimax}. 
More recently, there have been advances in RL with function approximation, especially the linear case. \citet{jin2020provably} proposed an efficient algorithm for a setting where the transition kernel and the utility function are both linear functions with respect to a feature mapping: $\phi: \cS \times \cA \to \RR^d$. A similar assumption has been studied for different settings and has led to sample efficiency results~\cite{yang2019sample, du2019good, neu2020unifying, zanette2020learning, he2021logarithmic}.  Moreover, other general function approximations have been studied in parallel, including generalized linear function approximation~\cite{wang2019optimism}, linear mixture MDPs based on a ternary feature mapping~\cite{ayoub2020model, zhou2021provably, cai2020provably, zhou2021nearly}, kernel approximation~\cite{yang2020approximation} as well as models based on the low Bellman rank assumption~\cite{jiang2017contextual, dann2018oracle}. We make use of these function approximation techniques to model our conditional prior, and we show how to integrate the persuasion structure into these efficient reinforcement learning frameworks, thereby obtaining sample efficient result for large-scale MPPs.

\section{Preliminaries}
This section provides some necessary background in information design and Markov decision processes, as preparation for our model of Markov persuasion processes presented in the next section. 
\subsection{Basics of Information Design} 
\label{sec:classic-info-design}    
Classic information design \cite{kamenica2011bayesian} considers the persuasion problem between a single sender (she) and  receiver (he).  
The receiver is the only actor, and looks to take an action $a \in \cA$ which results in receiver utility    $u(\omega,a)$ and   sender   utility $v(\omega,a)$. Here $\omega\in \Omega$ is the realized \emph{outcome} of certain environment uncertainty, which is drawn  from a prior distribution $\mu \in \Delta(\Omega)$, and $\cA$ is a finite set of available actions for the receiver.  While $u, v: \Omega\times \cA \to [0, 1]$ and  the prior distribution $\mu$ are all common knowledge,  the sender possesses an informational advantage and can privately observe the realized outcome $\omega$.  The persuasion problem studies how the sender can selectively reveal her private information about $\omega$ to influence the receiver's decisions and ultimately maximize her own expected utility $v$. 
To model the sender's strategic revelation of information, it is standard to use a \emph{signaling scheme}, which essentially specifies the conditional distribution of a random variable (namely the \emph{signal}), given the  outcome $\omega$.  Before the  realization of the outcome, the sender commits to such a signaling scheme.  
Given   the realized outcome, the sender samples a \emph{signal} from the conditional distribution according to the \emph{signaling scheme} and reveals it to the receiver. 
Upon receiving this \emph{signal}, the receiver infers a posterior belief about the outcome via Bayes' theorem (based on the correlation between the signal and outcome $\omega$ as promised by the signaling scheme) and then chooses an action $a$ that maximizes the expected utility. %
A standard revelation-principle-style argument shows that it is without loss of generality to focus on \emph{direct} and \emph{persuasive} signaling schemes \cite{kamenica2011bayesian}. A scheme is direct if each signal corresponds to an action recommendation to the receiver, and is persuasive if the recommended action  indeed maximizes the receiver's a posteriori expected utility. 
More formally, in a direct signaling scheme , $\pi = (\pi(a | \omega): \omega \in \Omega, a\in \cA)$, $\pi(a | \omega)$ 
denotes the probability of recommending action $a$ given realized outcome $\omega$. Upon receiving an action recommendation $a$, the receiver computes a posterior belief for $\omega$: $\Pr(\omega|a) = \frac{\mu(\omega)\pi(a |\omega)}{\sum_{\omega'} \mu(\omega') \pi(a | \omega')} $. Thus, the action recommendation $a$ is persuasive if and only if $a$ maximizes the expected utility w.r.t. the posterior belief about $\omega$; i.e.,
$\sum_{\omega} \Pr(\omega|a) \cdot u(\omega, a) \geq \sum_{\omega} \Pr(\omega|a) \cdot u(\omega, a')  $
for any $a'\in \cA$.   Equivalently, we define \emph{persuasiveness} as 
\begin{equation*}
  \text{Persuasiveness: } \quad   \sum_{\omega \in \Omega}  \mu(\omega) \pi(a|\omega)  \cdot [u(\omega, a) - u(\omega, a')] \geq 0, \forall a, a' \in \cA.
\end{equation*}
Let $\cP = \{\pi : \pi(\cdot | \omega) \in \Delta(\cA) \text{ for each } \omega \in \Omega\}$ denote  the set of all signaling schemes.
To emphasize that  the definition  of persuasiveness depends on the prior $\mu$, we denote the set of  persuasive schemes on prior $\mu$~by 
\begin{equation*}
\Ps(\mu) \coloneqq \left\{\pi \in \cP: \sum_{\omega \in \Omega} \mu(\omega) \pi(a | \omega)\left[ u(\omega, a)-u\left(\omega, a^{\prime}\right)\right] \geq 0,\quad \forall a,a'\in \cA \right\}   . 
\end{equation*}
Given a persuasive signaling scheme $\pi \in \Ps(\mu)$, it is in the receiver's best interest to take the recommended action and thus  the sender’s expected utility becomes 
$ V(\mu, \pi) \coloneqq \sum_{\omega \in \Omega} \sum_{a \in \cA} \mu(\omega) \pi(a | \omega) v(\omega, a) .$

Therefore, given full knowledge of the persuasion instance, the sender can solve for an optimal persuasive signaling scheme that maximizes her expected utility through the following linear program (LP) which searches for a persuasive signaling scheme that maximizes $    V(\mu, \pi)$ (see, e.g., \cite{dughmi2019algorithmic} for details): 
\begin{equation*}
   \text{Persuasion as an LP:}   \qquad  \OPT\left(\mu \right) \coloneqq  \max_{\pi \in \Ps(\mu)} \quad   V(\mu, \pi).
\end{equation*}

\subsection{Basics of Reinforcement Learning and Markov Decision Processes}
\label{sec:classic-MDP}    
The Markov decision process (MDP) \cite{puterman2014markov, sutton2018reinforcement} is a classic mathematical framework for the sequential decision making problem. 
In this work, we focus on the model of episodic MDP. 
Specifically, at the beginning of the episode, the environment has an initial state $s_1$ (possibly picked by an adversary). Then, at each step $h \geq 1$, the agent takes some action $a_h \in \cA$ to interact with environment at state $s_h \in \cS$. The state $s_h$ obeys a Markov property and thus captures all relevant information in the history $\{s_i\}_{i<h}$.
 Accordingly, 
 the agent  receives the utility $v_h(s_h,a_h) \in  [0,1]$ and the system evolves to  the state of  the next step $s_{h+1}\sim P_h(\cdot | s_h, a_h) $.
 Such a process terminates after $h  = H$, where $H $ is also known as the horizon length.  
 Here,  $\cA$ is a finite set of available actions for the agent, $\cS$ is the (possibly infinite) set of MDP states. 
 The utility function $v_h: \cS\times\cA \to [0,1]$ and  transition kernel $P_h: \cS\times\cA \to  \Delta (\cS) $ may vary at each step. 
 A policy of the agent $\pi_h: \cS \to \Delta (\cA)  $ characterizes her decision making process at step $h$---after observing the state $s$, the agent takes action $a$ with probability $\pi_h(a |s)$. 

In an episodic MDP with $H$ steps, under policy $\bpi = \{ \pi_h \}_{h\in [H]}$, we define the value function as the expected value of cumulative utilities starting from an arbitrary state, 
$$ 
V_h^{\pi}(s) \coloneqq \EE_{P,\bpi} \bigg[ \sum_{h'=h}^{H} v_h(s_{h'}, a_{h'}) \bigg| s_{h'} = s \bigg],\quad \forall s\in \cS, h\in [H].
$$
Here $\EE_{P,\bpi}$ means that the expectation is taken with respect to the trajectory $\{s_h , a_h \}_{h\in [H]}$, which is generated by policy $\bpi$ on the transition model $P = \{ P_h \}_{h\in [H]}$.
Similarly, we define the action-value function as the expected value of cumulative utilities starting from an arbitrary state-action pair,
$$ 
Q_h^{\pi}(s, a) \coloneqq v_h(s_h, a_h) + \EE_{P,\bpi} \bigg[ \sum_{h'=h+1}^{H} v_h(s_{h'}, a_{h'}) \bigg| s_{h'} = s, a_{h'} =a \bigg], \quad \forall s\in \cS, a\in \cA, h\in [H]. 
$$

The optimal policy is defined as $\bpi^* \coloneqq \arg\max_{\bpi} V_h^{\bpi}(s_1) $, which maximizes the (expected) cumulative utility. Since the agent's action affects both immediate utility and next states that influences its future utility, it thus demands careful planning to maximize total utility. 
Notably, $\bpi^*$ can solved by dynamic programming based on the Bellman equation~\cite{Bel}.  
Specifically, with $V_{H+1}^*(s)=0$ and for each $h$ from $H$ to $1$, iteratively update
$
    Q_h^*(s, a) =  v_h(s, a) + \EE_{s'\sim P(\cdot |s, a)} V_{h+1}^*(s', a),~
    V_h^*(s)= \max_{a\in\cA} Q_h^*(s, a),
$
and determine the optimal policy $\bpi^*$ as the greedy policy with respect to $\{ Q_h^*\}_{h\in [H]}$.
In online reinforcement learning,
the agent has no prior knowledge of the environment, namely, $\{ v_h, P_h\}_{h\in [H]}$,   but aims to  learn  the optimal policy by interacting with the environment for $T$ episodes. 
For each $t \in [T]$, at the beginning of the $t$-th episode, after observing the initial state $s_1^t$, the agent chooses a policy $\bpi^t$ based on the observations before $t$-th episode. The discrepancy between $V^{\bpi^t}_1(s_1^t)$ and $V^*_1(s_1^t)$ serves as the suboptimality of the agent at the $t$-th episode. The performance  of the online learning algorithm is measured by the expected regret, $\Reg(T) \coloneqq \sum_{t=1}^{T} [V^*_1(s_1^t) - V^{\bpi^t}_1(s_1^t) ].$

\section{Markov Persuasion Processes} 
This section introduces the Markov Persuasion Process (MPP), a novel model for sequential information design in \emph{Markovian environments}. 
It notably captures the motivating yet intricate real-world problems in Section \ref{sec:motivating-exp}. Furthermore, our MPP model is readily applicable to generalized settings with large state spaces by incorporating function approximation techniques.

\subsection{A Model of Markov Persuasion Processes (MPPs)}\label{sec:model:basic}
We start by abstracting the sequential information design problem instances in Section \ref{sec:motivating-exp} into MPPs. 
Taking as an example recommendation platform for ad keywords, we view the platform as the \emph{sender}, the advertisers as the \emph{receivers}. The advertisers decide the \emph{actions} $a \in \cA$ such as whether to accept the recommended keyword. To better reflect the nature of reality, we model two types of information for MPPs, \emph{outcome} and \emph{state}. 
We use the notion of \emph{outcome} $\omega\in \Omega$ to characterize the sender's private information in face of each receiver, such as the features of searchers for some keyword. The \emph{outcome} follows a prior distribution such as the general demographics of Internet users on the platform. The platform can thus leverage such fine-grained knowledge on keyword features, matching with the specific ad features of each advertiser, to persuade the advertisers to take a recommendation of keywords.
Meanwhile, we use the notion of \emph{state} $s\in \cS$ to characterize the Markovian state of the environment, e.g., the availability of ad keyword slots. The state is affected by the receiver's action, as the availability changes after some keywords get brought.\footnote{\label{fn:examples}Similarly, we can view the online shopping platform as the sender who persuades a stream of {receivers} (supplier, consumer) to take certain {action}, whether to take an offer or make a purchase. In this case, sender can privately observe the outcomes such as the consumer sentiments on some random products based on the search and click logs, whereas the states are product reviews, sales or shipping time commonly known to the public and affected by the actions of both supply and demand sides.
In case of rider-sharing, outcome represents the fine-grained knowledge of currently active rider types that are privately known to the platform and generally stochastic in accordance to some user demographics, whereas the state captures the general driver supply or rider demand at locations that is affected by the drivers' decisions.
}
Naturally, both sender's and receiver's utility are determined by the receiver's action $a$ jointly with the state of environment $s$ and realized outcome $\omega$, i.e., $u, v: \cS \times \Omega \times \cA \to [0,1]$.
Meanwhile, as these applications could serve thousands or millions of receivers every day, to reduce the complexity of our model we assume each receiver appears only once and thus will \emph{myopically} maximizes his utility at that particular step, whereas the sender is a system planner who aim to maximizes her long-term accumulated expected utility. 

More specifically, an MPP is built on top of a standard episodic  MDP  with \emph{state} space $\cS$, action space $\cA$, and transition kernel $P$.  In this paper, we restrict our attention to  \emph{finite-horizon} (i.e., episodic) MPPs with $H$ steps denoted by $[H] = \{ 1, \cdots, H \}$, and leave the study of infinite-horizon MPPs as an interesting future direction. 
At a high level, there are two major differences between  MPPs and MDPs. First, in a MPP, the planner \emph{cannot} directly take an action but instead can leverage its information advantage and ``persuade'' a receiver to take a desired action $a_h$ at each step $h \in [H]$. Second, in an  MPP, the state transition  is affected not only by the current action $a_h$ and state $s_h$, but also by the realized outcome $\omega_h$ of Nature's probability distribution.
Specifically, the state transition kernel at step $h$ is denoted as $P_h(s_{h+1}| s_h, \omega_h, a_h)$. To capture the sender's persuasion of a receiver to take actions at step $h$, we assume that a fresh receiver arrives at time $h$ with a prior $\mu_h$ over the outcome $\omega_h$. The planner, who is the \emph{sender} here, can observe the realized outcome $\omega_h$ and would like to strategically reveal information about $\omega_h$ in order to persuade the receiver to take a certain action $a_h$. %

Differing from classical single-shot information design, the immediate utility functions $u_h, v_h$ for the receiver and sender  vary not only at each step $h$ but also additionally depend on the commonly observed state $s_h$ of the environment. 
We assume the receiver to have full knowledge of his utility $u_h$ and prior $\mu_h$ at each step $h$, and would take the recommended action $a_h$ if and only if $a_h$ maximizes his expected utility under the posterior for $\omega_h$.\footnote{This assumption is not essential but just for technical rigor. Because even if receivers have limited knowledge or computational power to accurately determine the utility-maximizing actions, the sender should have sufficient ethical or legal reasons to comply with the persuasive constraints in practice. And the receivers would only take the recommendation if the platform has good reputation (i.e., persuasive with high probability). } 

Formally, an  MPP with a horizon length $H$   proceeds as follows at each step $h \in [H]$:
\vspace{-10pt}
\begin{center}
\fcolorbox{black}{gray!20!white}{
\parbox{0.995\textwidth}{	

\begin{enumerate}
\item A fresh receiver with prior distribution $\mu_h \in \Delta(\Omega)$ and utility $u_h: \cS \times \Omega \times \cA \to [0,1]$ arrives. 

\item The sender commits to a \emph{persuasive} signaling policy $\pi_h: \cS \to \cP$, which is public knowledge. 

\item After observing the realized state $s_h$  and outcome $\omega_h$, the sender accordingly recommends the receiver to take an action $a_h \sim \pi_h (\cdot|s_h,\omega_h)$.

\item Given the recommended action $a_h$, the receiver takes an action $a'_h$,  receives utility $u_h(s_h , \omega_h ,a'_h)$ and then leaves the system. Meanwhile, the sender receives utility $v_h(s_h, \omega_h ,a'_h)$. 
\item The next state $s_{h+1}  \sim P_{h} (\cdot | s_h, \omega_h, a'_h)$  is generated   according to   $P_{h}: \cS \times \Omega \times \cA \to \Delta(\cS)$, the \mstate{} transition kernel at the $h$-th step. 
\end{enumerate}
}}
\end{center}

Here we coin the notion of a \emph{signaling policy} $\pi_h$ as a mapping from \mstate{} to a signaling scheme at the $h$-th step. It captures a possibly multi-step procedure in which the sender  commits to a signaling scheme after observing the realized state and then samples a signal after observing the realized outcome.
For notational convenience, we denote $\pi(a|s,\omega)$ as the probability of recommending action $a$ given \mstate{} $s$ and realized \pstate{} $\omega$. 
We   can also generalize the notion of persuasiveness from classic information design to MPPs.  
Specifically,  we  define $\Ps(\mu, u)$ as the persuasive set that contains all signaling policies that are persuasive to the receiver with utility $u$ and prior $\mu$ for all possible state $s \in \cS$:
\begin{align*}
\Ps(\mu, u) \coloneqq &  \bigg\{\pi: \cS \to \cP:  \notag \\
&  \qquad \qquad \int_{\omega \in \Omega} \mu(\omega)\pi(a|s,\omega) \big[ u(s,\omega,a)-u(s,\omega,a') \big]\ud\omega\ge 0,\quad \forall a,a'\in \cA, s\in \cS \bigg\}.
\end{align*}
Recall that $\cP$ consists of all mappings from $\Omega$ to $\Delta(\cA)$.
As such, the sender's persuasive signaling scheme $\pi_h \in \Ps(\mu_h, u_h)$ is essentially a stochastic policy as defined in standard MDPs---$\pi_h$ maps a state $s_h$ to a stochastic action $a_h$---except that here the probability of suggesting action $a_h$ by policy $\pi_h$ depends additionally on the realized outcome $\omega_h$ that is only known to the sender.

We say $\bpi \coloneqq \{\pi_h\}_{h\in [H]}$ is a \emph{feasible} policy of the MPP if $\pi_h \in \Ps(\mu_h, u_h), \forall h\in [H]$, because the state transition trajectory would otherwise be infeasible if the receiver is not guaranteed to take the recommended action, i.e., $a'_h \neq a_h$. We denote the set of all \emph{feasible} policies as $\cP^H \coloneqq \prod_{h\in [H]}\Ps(\mu_h, u_h)$.

\subsection{ MPPs: the Generalized Version with Contexts and Linear Parameterization}\label{sec:model:general}

To provide a broadly useful modeling concept, we also study a  generalized setting  of the Markov Persuasion Process with contextual prior and a possibly large space of states, outcomes and contexts.

\paragraph{Contextual Prior.} At the beginning of each episode, a sequence of contexts $C=\{c_h \in \cC \}_{h\in[H]}$ is realized by Nature  and becomes public knowledge. And we allow the prior $\mu_h$ to be influenced by the context $c_h$ at each step $h$, and thus denote it by   $\mu_h(\cdot | c_h)$. 
Specifically, the contextual information is able to model the uncertainty such as the varying demographics of active user group affected by events (e.g., scheduled concerts or sport games in ride-sharing) at different time of the day.\footnote{In the case of the online shopping platform, the prior of consumer interests may be affected by the different holidays or seasons at different time of year.} 
Here we   allow  the sequence of contexts to be  adversarially generated. 

\paragraph{Linear Parameterization.} We also relax the state, context and outcome space $\cS, \cC, \Omega$ to be continuous and additionally assume that the transition kernels and utility functions are linear, and the conditional priors of outcomes are generalized linear models (GLM) of the context at each steps.
More formally, for each step $h\in[H]$, our linearity condition assumes:
\begin{itemize}
	\item The sender's utility is $v_h \coloneqq v_h^*(s_h, \omega_h,a_h) = \psi(s_h, \omega_h, a_h)^\top \gamma_h^*$, where (1)  $\psi(\cdot,\cdot,\cdot) \in \RR^{d_\psi}$ is a known feature vector; (2) $\gamma_{h}^* \in  \RR^{d_\psi}$ is the   unknown  linear  parameter at step $h$. 
	\item The next state $s_{h+1}$ is drawn from the distribution $P_{M,h}(\cdot | s_h, \omega_h, a_h) = \psi(s_h,\omega_h, a_h)^\top M_h(\cdot)$, 
	where $M_h = (M_h^{(1)},M_h^{(2)}, \dots, M_h^{(d_\psi)} ) $ is a vector of $d_\psi$  unknown  measures over $\cS$ at step $h$. 
	\item The \pstate{} $\omega_h \in\RR$ subjects to a generalized linear model (GLM), which models a wider range of hypothesis function classes.\footnote{We note that GLM is a strictly generalization of the linear model assumption that we have for the distribution of transition kernel $P$. While we could use similar technique to extend the distribution of $P$ to GLM using techniques similar to that  in \citet{wang2019optimism}, but we save such an extension  for simplicity, since it is not the primary focus of our work.} 
    Given the \cstate{} $c_h$, there exists a link function $f:\RR\to\RR$ such that $\omega_h=f(\phi(c_h)^\top\theta^*_h )+z_h$, where $\phi(\cdot)\in\RR^{d_\phi}$ is a feature vector and $\theta_h^*\in\RR^{d_\phi}$ is an unknown parameter. The noises $\{z_h\}_{h\in[H]}$ are independent $\sigma$-sub-Gaussian variables with zero mean.
	We denote the prior of $\omega_h$ with parameter $\theta$ at \cstate{} $c$ as $\mu_{\theta}(\cdot \vert c)$.  
\end{itemize}

Without loss of generality, we assume that there exist $\Phi, \Psi$ such that $\Vert \phi(s)\Vert \leq \Phi$,\footnote{For the simplicity of notation, we will omit the subscript of the norm whenever it is an $L_2$ norm in this paper.}  $\Vert \psi(s,\omega,a) \Vert \leq 
\Psi$ 
for all $s \in \cS, \omega \in \Omega$ and $a \in \cA$. We also assume that $\Vert\theta_h^*\Vert\leq L_\theta$, $\Vert\gamma_h^*\Vert\leq L_\gamma$, $\Vert M_h^* \Vert \leq L_M$, $|\mathcal{A}| \geq 2$, $|\Omega| \geq 2$. 
Such a regularity condition is common in the RL literature. 

\subsection{Optimal Signaling Policy in MPPs}
\label{sec:optimal-policy}

In order to maximize the sender's utility, we study the optimal policy in MPPs, in analogy to that of standard MDPs. We start by considering the value of  any feasible policy $\bpi$. For each step $h \in [H]$, we   define the value function for the sender $V_h^\pi:\cS \to \mathbb{R}$ as the expected value of cumulative utilities under $\bpi$ when starting from an arbitrary state at the $h$-th step. That is, for any $s \in \cS, h \in [H]$, we define 
\begin{equation*}
    V_h^\pi(s) \coloneqq \mathbb{E}_{P,\mu,\pi} \bigg[ \sum_{h'=h}^H v_{h'}\big(s_{h'},\omega_{h'}, a_{h'} \big) \bigg| s_h=s \bigg],
\end{equation*}
where the expectation $\mathbb{E}_{P,\mu,\pi}$ is taken with respect to the randomness of the trajectory (i.e.,  randomness of \mstate{} transition), realized \pstate{} and the stochasticity of $\bpi$. 
Accordingly, we   define the $Q$-function (action-value function) $Q_h^\pi:\cS \times \Omega \times \cA \to \mathbb{R}$ which gives the expected value of cumulative utilities when starting from an arbitrary state-action pair at the $h$-step following the signaling policy $\bpi$, that is,
\begin{equation*}
    Q_h^\pi(s,\omega,a)\coloneqq v_h(s,\omega,a)+\mathbb{E}_{P,\mu,\pi}\bigg[  \sum_{h'=h+1}^H v_{h'}\big(s_{h'},\omega_{h'}, a_{h'} \big) \bigg| s_h=s,\omega_h=\omega,a_h=a \bigg].
\end{equation*}
By definition, $Q_h(\cdot, \cdot, \cdot), V_h(\cdot) \in [0, h]$, since $v_h(\cdot, \cdot, \cdot) \in [0,1]$.
To simplify notation, for any $Q$-function $Q_h$  and any distributions $\mu_h$ and $\pi_h$  over $\Omega$ and $\cA$,  we additionally denote 
\begin{equation*}
\begin{split}
    \big\langle Q_h,\mu_h\otimes\pi_h\big\rangle_{\Omega\times\cA}(s) &\coloneqq  \mathbb{E}_{\omega\sim\mu_h,a\sim \pi_h(\cdot|s,\omega)}\left[Q_h(s,\omega,a)\right].
\end{split}
\end{equation*}
Using this notation, the Bellman equation associated with signaling policy $\bpi$ becomes
\begin{equation}\label{eq:policy-Bellman}
    Q_h^\pi(s,\omega,a) =  (v_h+P_h V_{h+1}^\pi)(s,\omega,a),~~
    V_h^\pi(s)= \big\langle Q_h^\pi,\mu_h\otimes\pi_h\big\rangle_{\Omega\times\cA}(s),~~
    V_{H+1}^\pi(s)=0,
\end{equation}
which holds for all $s \in \mathcal{S}, \omega \in \Omega, a \in \mathcal{A}$. Similarly, the Bellman optimality equation is
\begin{equation}\label{eq:optimal-Bellman}
    Q_h^*(s,\omega,a) =  (v_h+P_h V _{h+1}^*)(s,\omega,a),~
        V_h^*(s)= \max_{\pi_h' \in \Ps(\mu_h,u_h)}\big\langle Q_h^*,\mu_h\otimes\pi_h' \big\rangle_{\Omega\times\cA}(s),~
    V_{H+1}^*(s)=0.
\end{equation}

We remark that the above equations implicitly assume the context $C=\{ c_h \}_{h\in[H]}$ (and thus the priors) are determined in advance. To emphasize the values' dependence on context which will be useful for the analysis of later learning algorithms,  we extend the notation to $V^{\pi}_h(s; C), Q_h^\pi(s,\omega,a; C)$ to specify that the value (resp. Q) function is estimated based on which prior $\mu$ conditioned on which sequence of context $C$. 

\paragraph{A Note on Computational Efficiency. } We note that the above Bellman Optimality Equation  in \eqref{eq:optimal-Bellman} also implies an efficient dynamic program to  compute the optimal policy $\bpi^*$ in the basic tabular model of MPP in Subsection \ref{sec:model:basic}, i.e., when $s \in \mathcal{S}, \omega \in \Omega, a \in \mathcal{A}$ are all discrete. This is because the maximization problem in equation \eqref{eq:optimal-Bellman} can be solved efficiently be a linear program.  The generalized MPP of subsection \ref{sec:model:general} imposes some computational challenge due to infinitely many outcomes and states.
Fortunately, it is already known that planning in the infinite state MDP with linear function approximation can also be solved efficiently \cite{jin2020provably}. Following a similar analysis, we can determine $ Q_h^*(\cdot,\cdot, \cdot) $ through a linear function of $q_h^*\in \RR^{d_{\psi}}$ with the observed feature $\psi(\cdot,\cdot, \cdot)$. Hence, the dominating operation is to compute $\max_{\pi \in \Ps(\mu_h,u_h)} \langle Q_h^*,\mu_h\otimes\pi_h   \rangle_{\Omega\times\cA}(s)$ at each step.
Let the sender utility function be $Q_h^*$; such an optimization is exactly the problem of optimal information design with infinitely many outcomes but finitely many actions, which has been studied in previous work \cite{dughmi2019algorithmic}. It turns out that there is an efficient algorithm that can signal on the fly for any given outcome $\omega$ and obtains an $\epsilon$-optimal persuasive signaling scheme in $\poly( 1 / \epsilon)$ time. Therefore, in our later studies of learning, we will take these algorithms as given and simply assume that we can compute the optimal signaling scheme efficiently at any given state $s$. One caveat is that our regret guarantee will additionally lose an additive $\epsilon$ factor at each step  due to the availability of only an $\epsilon$-optimal algorithm, but this loss can be negligible when we set $\epsilon = O(1/ (TH))$ by using a $\poly(TH)$ time algorithm.  
\section{Reinforcement Learning in MPPs and the Optimism-Pessimism Principle}

In this section, we study online  reinforcement learning (RL) for learning the optimal signaling policy on an   MPP. 
Here the learner only knows the utility functions of the receivers\footnote{\label{fn:receiver-payoff} The receiver's utility is known to the sender because the pricing rules are usually transparent, some are even set by the platform. For example, a rider-sharing platform usually sets per hour or mile payment rules for the drivers. } and has no prior knowledge about the prior distribution, the sender's utility function, and the transition kernel. 
 While the computation of optimal policy in MPPs in Section \ref{sec:optimal-policy} may appear analogous to that of a standard MDP, as we will see that the corresponding RL problem turns out to be significantly different, partially due to the presence of the stream of receivers, whose decisions are \emph{self-interested} and not under the learner's control. This makes the learning challenging because if the receivers' incentives are not carefully addressed, they may take actions that are extremely undesirable to the learner. 
 Such concern leads to the integration of the \emph{pessimism} principle into our learning algorithm design. Specifically, our learner will be optimistic to  the estimation of the  $Q$-function,  similar to  many other RL algorithms, 
 in order to encourage exploration.
 But more interestingly, it will be pessimistic to the uncertainty in the estimation of the prior distributions in order to prepare for detrimental equilibrium behavior. 
Such   dual considerations lead  to an interesting \emph{optimism-pessimism principle} (OPP)   for learning MPPs 
under the online setting.
   From a technical point of view, our main contribution is to prove how the mixture of  optimism and pessimism principle can still lead to no regret algorithms, and this proof crucially hinges on a robust property of the MPP model which we develop and carefully apply to the regret analysis. 
To the best of our knowledge, this is the first time that OPP is employed to learn the optimal information design in an online fashion. We   prove that it can   not only satisfy incentive   constraints but also   guarantees   efficiency in terms of both sample complexity and computational complexity.

In order to convey our key design ideas before diving into the  intricate technicalities, this section singles out two representative special cases of the online sequential information design problem. In a nutshell, we present a learning algorithm \ORAI{} %
that combines the principle of optimism and pessimism such that the sender can learn to persuade without initially knowing her own utility or the prior distribution of outcomes. 
In the \emph{tabular MPP}, we illustrate the unique challenges of learning to persuade arising from the dynamically evolving environment state  according to a Markov process. 
Through the \emph{contextual Bayesian persuasion}, we showcase the techniques necessary for learning to persuade with infinitely many states (i.e., contexts) and outcomes.  We shall omit most proofs in this section to focus on the high-level ideas, because the proof for the general setting presented in Section \ref{sec:mdp} suffices to imply all results for the two special cases here.

\subsection{Learning Optimal Policies in MPPs: Setups and Benchmarks}

We consider the episodic reinforcement learning problem in finite-horizon MPPs. Different from the full knowledge setting in Section \ref{sec:optimal-policy}, the transition kernel, the sender's utility function and the outcome prior at each step of the episode, $\{P_h, v_h, \mu_h \}_{h\in[H]}$, are all unknown. The sender has to learn the optimal signaling policy by interacting with the environment as well as a stream of receivers in $T$ number of episodes. For each $t \in [T] = \{1, \cdots, T \}$, at the beginning of $t$-th episode, given the data $\{( c_h^\tau,s_h^\tau, \omega_h^\tau, a_h^\tau, v_h^\tau) \}_{h\in [H], \tau\in [t-1]} $, 
the adversary picks the context sequence $\{c_h^t \}_{h\in [H]}$ as well as the initial state $s_1^t$,
and the agent accordingly chooses a signaling policy $\bpi^t = \{\pi_h^t \}_{h\in [H]}$. Here  $v_h^\tau$ is the utility collected by the sender  at step $h$ of episode $\tau$. 

\paragraph{Regret}
To evaluate the online learning performance, 
given the ground-truth \pstate{} prior  $\bmu^*=\{ \mu_h^* \}_{h\in[H]}$, we define the sender's total (expected) regret over the all $T$ episodes   as 
\begin{equation}\label{eq:defi-regret}
    \Reg(T,\bmu^*) \coloneqq \sum_{t=1}^{T}\left[ V^*_1(s_1^t;C^t)-V^{\bpi^t}_1(s_1^t; C^t)\right].
\end{equation}
Note that if $\bpi^t$ is not always feasible under $\bmu^*$, but is only persuasive with high probability, so the corresponding regret under $\bpi^t$ should be also in high probability sense. 

It turns out that in certain degenerate  cases  it is impossible to achieve  a sublinear regret. For example, if the set of possible posterior outcome distributions that induce some $a\in\cA$ as the optimal receiver action has zero measure, then such posterior within a zero-measure set can never be exactly induced by a signaling scheme without a precise knowledge of the prior. Thus, the regret could be $\Omega(T)$ if receiver cannot be persuaded to play such action $a$.
Therefore, to guarantee no regret, it is necessary to introduce certain regularity assumption on the MPP instance. Towards that end, we shall assume that the receivers' utility $u$ and prior $\mu$ at any step of the MPP instance always satisfies a minor assumption of $(p_0,D)$-regularity as defined below.  

\paragraph{Regularity Conditions}
An instance satisfies \emph{$(p_0,D)$-regularity}, if for any feasible state $s\in \cS$ and context $c\in \cC$, we have
$$
\PP_{\omega\sim \mu(\cdot |c) }\left[ \omega \in \cW_{s, a}(D) \right] \geq p_0, \quad \forall a\in \cA,
$$
where $\mu$ is the ground-truth prior of outcomes and $\cW_{s, a}(D) \triangleq \{ \omega :  u(s,\omega, a)  - u(s,\omega, a') \geq D, \forall a' \in \cA/\{a\} \} $ is the set of outcomes $\omega$ for which the action $a$ is optimal for the receiver by at least $D$ at state $s$. 
.
In other words, an instance is $(p_0,D)$-regular if every action $a$ has  at least probability $p_0$, under randomness of the outcome, to be strictly better than other actions by at least $D$.  This regularity condition is analogous to a regularity condition of  \citet{10.1145/3465456.3467593} but  is generalizable to infinite outcomes as we consider here. 

\subsection{Algorithm: Optimism-Pessimism Principle for Persuasion Process (OP4) }
\label{sec:op4}
The learning task in MPPs involves two intertwined challenges: (1)
How to persuade the receiver to take desired actions under unknown $\mu_h$? (2) Which action to persuade the receiver to take in order to explore the underlying environment?  
For the  first challenge, due to having finite data, it is impossible to perfectly recover $\mu_h$. 
We can only hope to construct an approximately accurate estimator of $\mu_h$. 
To guard against potentially detrimental equilibrium behavior of the 
receivers due to the prior estimation error, we propose to adopt the pessimism principle. 
Specifically, before each episode, we conduct uncertainty quantification 
for the estimator of the prior distributions, which enables us to construct a confidence region containing the true prior with high probability. 
Then we propose to find the signaling policy within a pessimistic candidate set---signaling policies that are simultaneously 
persuasive with respect to all prior distributions in the confidence region.
When the confidence region is valid, such a pessimism principle ensures that the executed signaling policy is always persuasive with respect to the true prior. 
Furthermore, to address the second challenge, we adopt the 
celebrated principle of optimism in the face of uncertainty \cite{lattimore2020bandit}, 
which has played a key role in the online RL literature. 
The main idea of this principle is that, the uncertainty of the $Q$-function estimates essentially reflects our uncertainty about the underlying model. 
By adding the uncertainty as a bonus function, we encourage actions with high uncertainty to be recommended and thus taken by the receiver when persuasiveness is satisfied. 
We  then fuse the two principles into the \ORAI{} algorithm in Algorithm \ref{alg:abstract}.

\begin{algorithm}[H]
\caption{\ORAI{} Overview}
\label{alg:abstract}
\begin{algorithmic}[1] %
\FOR{episode $t = 1 \dots T$} 
\STATE Receive the initial state $\{s_1^t\}$ and context $C^t = \{ c_h^t\}_{h=1}^{H}$.
\STATE For each step $h\in[H]$, estimate prior $\mu_h^t$ along with the confidence region $\mu_{\cB_h^t}$, and construct an optimistic $Q$-function $Q_h^t$ iteratively with the value function $V_h^t$.
\FOR{step $h=1,\ldots,H$}
\STATE Choose robust signaling scheme $\pi_h^t\in \arg\max_{\pi_h\in\Ps(\mu_{\cB_h^t},u_h)} \big\langle Q_h^t,\mu_h^t\otimes\pi_h\big\rangle_{\Omega\times\cA}(s_h^t; C^t)$.
\STATE Observe state $s_h$, outcome $\omega_h$ and accordingly recommend action $a \sim \pi^t_h(\omega_h, \cdot)$ to the receiver.
\ENDFOR
\ENDFOR
\end{algorithmic}
\end{algorithm}

\paragraph{Pessimism to Induce Robust Equilibrium Behavior}
From the data in the past episode, the sender can estimate the mean of the prior as well as obtain a confidence region through concentration inequalities. Given this partial knowledge of the prior distribution, the sender needs to design a signaling scheme that works in the face of any possible priors in the confidence region in order to ensure the receiver will take its recommended action with high probability. 
Specifically, we let $\uB_{\Sigma}(\theta,\beta)\coloneqq\{\theta':\Vert\theta'-\theta\Vert_{\Sigma}\leq \beta\}$ denote the closed ball in $\norm{\cdot}_{\Sigma}$ norm of radius $\beta>0$ centered at $\theta \in \RR^{d_\theta}$.
For any set $\mathcal{B}\subseteq \RR^{d_\theta}$, we let $\Ps(\mu_\cB,u)$ denote the set of signaling policies that are simultaneously persuasive under all weigh vectors $\theta \in \mathcal{B}$: $\Ps(\mu_\cB,u) \coloneqq \bigcap_{\theta \in \mathcal{B}}\Ps(\mu_\theta,u)$. For any non-empty set $\mathcal{B}$, the set $\Ps(\mu_\cB,u)$ is convex since it is an intersection of convex sets $\Ps(\mu_\theta,u)$, and is non-empty since it must contain the full-information signaling scheme. We note that since $\Ps(\mu_\cB,u)$ is a convex set, we can solve the linear optimization among the policies in $\Ps(\mu_\cB,u)$ in polynomial time (see e.g., \cite{10.1145/3465456.3467593}).

\paragraph{Optimism to Encourage Exploration}
In order to balance exploration and exploitation, we adopt the principle of optimism in face of uncertainty to the value iteration algorithm based on Bellman equation, following in a line of work in online RL such as $Q$-learning with UCB exploration~\cite{jin2018q}, UCBVI~\cite{azar2017minimax}, LSVI-UCB~\cite{jin2020provably} (also see \cite{wang2020provably, yang2020approximation, wang2021provably} and the references therein). The additional UCB bonus on the $Q$-value encourages exploration and has been shown to be a provably efficient online method to improve policies in MDPs. Moreover, this method not only works for the simple tabular setting, but also generalizes to settings with infinite state spaces by exploiting linearity of the $Q$-function and a regularized least-squares program to determine the optimal estimation of $Q$-value. In fact, within our framework, we could obtain efficient learning result in the infinite state space setting through other optimism-based online RL methods and general function approximators, such as  linear mixture MDPs~\cite{ayoub2020model, zhou2021provably, cai2020provably, zhou2021nearly}, or kernel approximation~\cite{yang2020approximation} or bilinear classes~\cite{du2021bilinear}. 

To provide a concrete picture of the learning process, we instantiate the \ORAI{} algorithm in two special cases and showcase  our key ideas and techniques before delving into the more involved analysis of the generalized MPP setting. Nevertheless, we remark that whether the problem instance is tabular or in the form of linear or generalized linear approximations is not essential and not the focus of our study. \ORAI{} itself only relies on two things, i.e., the uncertainty quantification for $Q$-function and prior estimation. So even the model-free RL framework can be replaced by model-based RL, as we can just construct confidence region for the transition models.

\subsection{Warm-up I: Reinforcement Learning in the Tabular MPP}
\label{sec:tabular}

We first consider MPPs in tabular setting with finite states and outcomes, as described in Section \ref{sec:model:basic}. 
In this case, the prior on outcomes at each step degenerates to an unknown but fixed discrete distribution independent of \cstate. 
As linear parameterization is not required for discrete probability distribution, the algorithm can simply update the empirical estimation of $\mu_h^t$ through counting. Similarly, the transition kernel $P_h^*$ is estimated through the occurrence of observed samples, and we uses this estimated transition to compute the $Q$-function $\hat{Q}_h^t$ from the observed utility and estimated value function in the next step, according to the Bellman equation. To be specific, for each  $s\in\cS,\omega\in\Omega,a\in\cA$, $\mu_h^t$ and $\hat{Q}_h^t$ are estimated through the following equations:
\begin{align*}
    \mu_h^t(\omega) &\gets \frac{\lambda/|\Omega|+N_{t,h}(\omega)}{\lambda+t-1},\\
    {\hat Q}_h^t(s,\omega,a) &\gets \frac{1}{\lambda+N_{t,h}(s,\omega,a)}\sum_{\tau\in[t-1]} \big\{\II(s_h^\tau=s,\omega_h^\tau=\omega,a_h^\tau=a)\big[v_h^\tau+V_{h+1}^t(s_{h+1}^\tau) \big]\big\},
\end{align*}
where $N_{t,h}(\omega)=\sum_{\tau\in[t-1]}\II(\omega_h^\tau=\omega)$ and $N_{t,h}(s,\omega,a)=\sum_{\tau\in[t-1]}\II(s_h^\tau=s,\omega_h^t=\omega,a_h^t=a)$ respectively count the effective number of samples that the sender has observed arriving at $\omega$, or the combination $\{s,\omega,a\}$), and $\lambda>0$ is a constant for regularization.

In our learning algorithm, we determine the radius of confidence region $\cB_h^t$ for $\mu_h^t$ according to confidence bound $\epsilon_h^t=O(\sqrt{\log(HT)/t})$. Moreover, we add a UCB bonus term of form $\rho/\sqrt{N_{t,h}(s,\omega,a)}$ to $\hat{Q}_h^t$ to obtain the \emph{optimistic $Q$-function} $Q_h^t$.
Then, it selects a robustly persuasive signaling scheme that maximizes an optimistic estimation of $Q$-function with respect to the current prior estimation $\mu_h^t$. Finally, it makes an action recommendation $a_h^t$ using this signaling scheme, given the state and outcome realization $\{s_h^t,\omega_h^t\}$.

\begin{theorem}\label{thm:contextual_regret}
Let $\epsilon_h^t=\tilde{O}(\sqrt{1/t})$, 
and $\rho= \tilde{O}(|S|\cdot|\Omega|\cdot|A| H)$. Then under $(p_0,D)$-regularity, with probability at least $1-3H^{-1}T^{-1}$, \ORAI{}  has regret of order $\tilde{O}\big( |C|(|S|\cdot|\Omega|\cdot|A|)^{3/2}\cdot H^2 \sqrt{T} / (p_0 D)\big)$ in tabular MPPs.
\end{theorem}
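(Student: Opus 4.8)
The plan is to establish the regret bound via a decomposition that separates the cost of statistical estimation error into two distinct channels: the error in the prior estimates $\mu_h^t$ (which affects persuasiveness, and hence forces us into a restricted, robust policy class) and the error in the $Q$-function estimates (which is handled by the usual optimism machinery). First I would fix a "good event" on which (i) for every $h,t$ the true prior $\mu_h^*$ lies in the confidence region $\cB_h^t$ around $\mu_h^t$, with radius $\epsilon_h^t = \tilde O(\sqrt{1/t})$ per coordinate, and (ii) for every $h,t$ the optimistic $Q$-function $Q_h^t$ upper-bounds $Q_h^*(\cdot,\cdot,\cdot;C^t)$ pointwise, while the bonus $\rho/\sqrt{N_{t,h}(s,\omega,a)}$ controls the one-step estimation error. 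Claim (i) is a standard $L_1$/$L_\infty$ concentration for empirical multinomials (an $\ell_\infty$ bound suffices given $|\Omega|$ is absorbed into the rate), and with $\lambda$-regularization the counting estimator satisfies it with probability $1 - O(H^{-1}T^{-1})$; claim (ii) is the usual inductive argument over $h = H, \dots, 1$ showing optimism propagates through the Bellman backup, using that $V_{h+1}^t \le H$ and that the sum of $1/\sqrt{N}$ terms along a trajectory telescopes. On the good event both hold simultaneously with probability $\ge 1 - 3H^{-1}T^{-1}$.

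Next I would carry out the regret decomposition. The key structural point, which is the paper's main technical contribution, is a \emph{robustness lemma} for the MPP: because the instance is $(p_0,D)$-regular, an optimal persuasive policy for the true prior can be $O(\epsilon/(p_0 D))$-approximated by a policy that is robustly persuasive against the entire confidence ball of radius $\epsilon$. Concretely, for each $t$ I would compare $V_1^*(s_1^t;C^t)$ with the value of the best policy in $\Ps(\mu_{\cB_h^t}, u_h)$ under the \emph{true} model; the $(p_0,D)$-regularity lets us perturb the optimal signaling scheme — shifting a small amount of probability mass toward action recommendations using outcomes in $\cW_{s,a}(D)$ — to restore persuasiveness against every prior in the ball, at a value loss of $\tilde O(\epsilon_h^t \cdot \mathrm{poly}(|S|,|\Omega|,|A|) \cdot H / (p_0 D))$ per step. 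Summing over $h$ and $t$ and using $\sum_{t=1}^T \epsilon_h^t = \tilde O(\sqrt T)$ gives one $\tilde O(\sqrt T)$ term. The remaining gap — between the value of the robust policy under the true model and the value $V_1^{\bpi^t}_1(s_1^t;C^t)$ actually collected — is handled by optimism exactly as in LSVI-UCB / UCBVI style analyses: by the optimism of $Q_h^t$ and the choice of $\pi_h^t$ as the maximizer of $\langle Q_h^t, \mu_h^t \otimes \pi_h\rangle$ over the robust class, the per-episode suboptimality is bounded by a sum of bonus terms plus a martingale-difference term, and $\sum_{t,h} \rho/\sqrt{N_{t,h}} = \tilde O(\rho \sqrt{|S||\Omega||A| HT})$ by a pigeonhole/Cauchy–Schwarz argument over state-outcome-action visitation counts.

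The main obstacle — and the step I would spend the most care on — is the robustness lemma and, more precisely, making the perturbation argument yield the \emph{right} dependence on $(p_0, D)$ and on $|\Omega|$ while keeping the executed policy persuasive \emph{simultaneously} for all $\theta \in \cB_h^t$. The subtlety is that the standard optimism argument wants to compare against $V^*$ computed with the true prior, but the algorithm optimizes over a strictly smaller (robust) policy set using the \emph{estimated} prior $\mu_h^t$ inside the objective; I must show that both substitutions (true prior $\to$ estimated prior in the objective, full persuasive set $\to$ robust persuasive set in the constraint) cost only $\tilde O(\epsilon_h^t/(p_0 D))$ in value, and that these errors compose additively across the $H$ steps without blow-up — which is where a careful telescoping / "value difference" identity (in the spirit of the extended value-difference lemma) for MPPs is needed, since a perturbation at step $h$ changes the induced trajectory distribution and hence all downstream priors and values. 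Getting the constants to collapse to the stated $\tilde O\big(|C|(|S||\Omega||A|)^{3/2} H^2 \sqrt{T}/(p_0D)\big)$ then amounts to bookkeeping: one factor $(|S||\Omega||A|)^{1/2}$ from the bonus summation, one factor $|S||\Omega||A|$ from the bonus size $\rho$, an $H^2$ from the horizon (one $H$ from $\rho$, one from summing over steps), and the $|C|$ factor from a union bound over contexts (or from the fact that regret is measured against context-dependent optimal values $V_1^*(\cdot;C^t)$).
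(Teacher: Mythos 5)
Your proposal is correct and follows essentially the same route as the paper: a regret decomposition separating the optimism (bonus/martingale) terms from the pessimism (prior-estimation) terms, with the latter controlled by a robustness-gap lemma proved exactly as you describe — mixing the optimal persuasive scheme with the fully revealing scheme, using $(p_0,D)$-regularity to show a mixture weight of $\epsilon/(p_0 D)$ restores persuasiveness for every prior in the confidence ball at value loss $O(H\epsilon/(p_0 D))$. The paper organizes the decomposition slightly differently (through an intermediate optimistic value $V_1^t$ rather than the robust-optimal value under the true model, and it additionally splits the prior error into a term under the optimal trajectory and a term under the realized trajectory), but these are bookkeeping variants of the same argument and your composition-across-steps concern is resolved there by the same recursive value-difference identity you anticipate.
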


To obtain the regret of \ORAI, we have to consider the regret arising from different procedures. Formal decomposition of the regret is described in Lemma \ref{lm:mdp-decomp}. Separately, we upper bound   errors incurred from estimating $Q$-function (Lemma \ref{lm:mdp-delta}), the randomness of of choosing the outcome, action and next state (Lemma \ref{lm:mdp-zeta}) as well as estimating the prior of outcome and choosing a persuasive signaling scheme that is robustly persuasive for a subset of priors (Lemmas \ref{lm:mdp-gap} and \ref{lm:mdp-concentration}). As the two warm-up models are special cases of the general MPP, the proof of the above properties follows from that of the general MPP setting in Section \ref{sec:proof-sketch}, and thus is  omitted here. %

\subsection{Warm-up II: Reinforcement Learning in Contextual Bayesian Persuasion}
\label{sec:contextual}

We now move to another special case with $H=1$, such that the MPP problem reduces to a contextual-bandit-like problem, where transitions  no longer exist.  
Given a \cstate{} $c$ and a persuasive signaling policy $\pi$, the value function is simply the sender's expected utility for any $s\in\cS$,
\begin{equation*}
V^\pi(s;c)\coloneqq \int_\omega \sum_{a \in A} \mu(\omega|c)\pi(a|s,\omega)v(s,\omega,a) \ud\omega.
\end{equation*}
The sender's optimal expected utility is defined as 
$V^*(s;c) \coloneqq \max_{\pi \in \Ps(\mu(\cdot |c), u)} V^\pi(s; c). $ 

Meanwhile, we consider the general setting where outcome $\omega$ is a continuous random variable that subjects to a generalized linear model. To be specific, the prior $\mu$ is conditioned on the \cstate{} $c$ with the mean value $f(\phi(c)^\top\theta)$. For the prior $\mu$ and link function $f$, we assume the smoothness of the prior and the bounded derivatives of the link function: 
\begin{assumption}\label{assum:prior}
There exists a constant $L_{\mu} > 0$ such that for any parameter $\theta_1,\theta_2$, we have $\big\Vert \mu_{\theta_1}(\cdot\vert c)-\mu_{\theta_2}(\cdot\vert c)\big\Vert_1\leq L_{\mu} \big\Vert f\big(\phi(c)^\top\theta_1\big)-f\big(\phi(c)^\top\theta_2\big)\big\Vert$ for any given context $c$. 
\end{assumption}
\begin{assumption}\label{assum:f}
The link function $f$ is either monotonically increasing or decreasing. Moreover, there exists absolute constants $0<\kappa<K<\infty$ and $0<M<\infty$ such that $\kappa\leq|f'(z)|\leq K$ and $|f''(z)|\leq M$ for all $|z|\leq \Phi L_\theta$.
\end{assumption}

It is natural to assume a Lipschitz property of the distribution in Assumption \ref{assum:prior}. For instance, Gaussian distributions and uniform distributions satisfy this property. Assumption \ref{assum:f} is standard in the literature \cite{filippi2010parametric,wang2019optimism,li2017provably}. Two example link functions are the identity map $f(z)=z$ and the logistic map $f(z)=1/(1+e^{-z})$ with bounded $z$. It is easy to verify that both maps satisfy this  assumption.

Different from the tabular setting, we are now unable to use the counting-based estimator to keep track of the distribution of the possibly infinite states and outcomes. Instead, we resort to function approximation techniques and estimate the linear parameters $\theta^*$ and $\gamma^*$.
In each episode, \ORAI{} respectively updates the estimation and confidence region of $\theta^t$ and $\gamma^t$, with which it can determine the \pstate{} prior under pessimism and sender's utility under optimism. To be specific, the update of $\theta^t$ solves a constrained least-squares problem and the update of $q^t$ solves precisely a regularized one:
\begin{align*}
\theta^t &\gets \arg\min_{\Vert\theta\Vert \leq L_\theta}  \sum_{\tau \in [t-1]} \big[\omega^\tau -f(\phi(c^\tau)^\top\theta_h)\big]^2, \\
\gamma^t &\gets \arg\min_{\gamma\in \RR^\psi} \sum_{\tau \in [t-1]} \norm{ v^\tau-\psi(s^\tau,\omega^\tau,a^\tau)^\top \gamma}^2 + \lambda \norm{\gamma}^2.
\end{align*}
We then estimate the prior by setting $\mu^t(\cdot\vert c)$ to the distribution of $f\big(\phi(c)^\top \theta^t\big)+z$ and estimate the sender's utility by setting $v^t(\cdot,\cdot,\cdot)=\psi(\cdot,\cdot,\cdot)^\top \gamma^t$.
On one hand, to encourage exploration, \ORAI{} adds the UCB bonus term of  form $\rho \norm{ \psi(\cdot,\cdot,\cdot) }_{(\Gamma^t)^{-1}}$ to the $Q$-function, where $\Gamma^t=\lambda I_{d_\psi}+\sum_{\tau \in [t]}\psi(s^\tau,\omega^\tau,a^\tau)\psi(s^\tau,\omega^\tau,a^\tau)^\top$ is the Gram matrix of the regularized least-squares problem and $\rho$ is equivalent to a scalar.  This is a common technique for linear bandits. On the other hand, \ORAI{} determines the confidence region of $\theta^t$ with radius $\beta$, and ensures that signaling scheme is robustly persuasive for any possible (worst case) prior induced by linear parameters $\theta$ in this region. Combining optimism and pessimism, \ORAI{} picks the signaling scheme among the robust persuasive set that maximizes the sender's optimistic utility.

\begin{theorem}\label{thm:contextual_regret}
Under $(p_0,D)$-regularity and Assumption \ref{assum:prior} and \ref{assum:f}, there exists an absolute constant $C_1, C_2>0$ such that, if we set $\lambda=\max\{1,\Psi^2\}$, $\beta=C_1(1+\kappa^{-1}\sqrt{K+M+d_\phi\sigma^2\log(T)})$, and $\rho= C_2 d_\psi \sqrt{\log(4d_\psi \Psi^2 T^3)}$, then with probability at least $1-3T^{-1}$, \ORAI{} has regret of order $\tilde{O}\big( d_\phi\sqrt{d_\psi^3}  \sqrt{T}/ (p_0 D) \big)$ in contextual Bayesian persuasion problems. 
\end{theorem}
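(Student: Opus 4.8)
The plan is to bound the per-episode suboptimality $V^*(s^t;c^t)-V^{\pi^t}(s^t;c^t)$ by an optimistic surrogate, then control the accumulated error with self-normalized concentration and elliptical-potential arguments, invoking the $(p_0,D)$-regularity only to pay for robustifying the signaling scheme; formally this is the $H=1$ specialization of the general regret decomposition of Section~\ref{sec:mdp}, with the transition-related terms vanishing. First I would fix the good event. Write $\Sigma^t=\lambda I_{d_\phi}+\sum_{\tau<t}\phi(c^\tau)\phi(c^\tau)^\top$ and recall $\Gamma^t=\lambda I_{d_\psi}+\sum_{\tau\le t}\psi(s^\tau,\omega^\tau,a^\tau)\psi(s^\tau,\omega^\tau,a^\tau)^\top$. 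On an event of probability at least $1-3T^{-1}$ the following hold for all $t$: (i) the constrained GLM least-squares estimator obeys $\theta^*\in\mathcal{B}^t:=\uB_{\Sigma^t}(\theta^t,\beta)$, the standard self-normalized bound for generalized linear models, in which the lower bound $\kappa\le|f'|$ of Assumption~\ref{assum:f} converts control of the GLM score into control of $\theta^t-\theta^*$, explaining the $\kappa^{-1}$ factor in $\beta$, as in \citet{filippi2010parametric,li2017provably}; (ii) the ridge estimator obeys $\|\gamma^t-\gamma^*\|_{\Gamma^t}\le\rho$, by the self-normalized martingale bound for least squares; and (iii) a Freedman-type inequality lets us replace population expectations under $\mu^t\otimes\pi^t$ by realized features with error summable to $\tilde O(\sqrt T)$. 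Condition on this event from now on.

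On this event two structural facts follow. \emph{Pessimism (feasibility):} since $\theta^*\in\mathcal{B}^t$ we have $\Ps(\mu_{\mathcal{B}^t},u)\subseteq\Ps(\mu^*,u)$, so every executed $\pi^t$ is genuinely persuasive under the true prior, the receiver follows the recommendation, and $V^{\pi^t}(s^t;c^t)=\langle v^*,\mu^*\otimes\pi^t\rangle_{\Omega\times\cA}(s^t;c^t)$ is well defined. \emph{Optimism:} since $Q^t=\psi^\top\gamma^t+\rho\|\psi\|_{(\Gamma^t)^{-1}}$ and $\psi^\top(\gamma^*-\gamma^t)\le\rho\|\psi\|_{(\Gamma^t)^{-1}}$ by Cauchy--Schwarz and (ii), we get $v^*\le Q^t\le v^*+2\rho\|\psi\|_{(\Gamma^t)^{-1}}$ pointwise. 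The delicate step is converting pointwise optimism of $Q^t$ into optimism of $V^t(s;c):=\max_{\pi\in\Ps(\mu_{\mathcal{B}^t},u)}\langle Q^t,\mu^t\otimes\pi\rangle_{\Omega\times\cA}(s;c)$ over $V^*$, which is not automatic because the robust set $\Ps(\mu_{\mathcal{B}^t},u)$ is \emph{smaller} than $\Ps(\mu^*,u)$. The resolution is a robustification: let $\pi^*$ attain $V^*$ in $\Ps(\mu^*,u)$, and set $\tilde\pi^t=(1-\lambda_t)\pi^*+\lambda_t\pi^{\mathrm{FI}}$ with $\pi^{\mathrm{FI}}$ the full-information scheme and $\lambda_t\asymp L_\mu K\beta\,\|\phi(c^t)\|_{(\Sigma^t)^{-1}}/(p_0D)$. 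By $(p_0,D)$-regularity, mixing in full information with weight $\lambda_t$ injects a margin of order $\lambda_t p_0 D$ into every persuasiveness constraint, which exceeds the perturbation $\|\mu_\theta(\cdot\vert c^t)-\mu^*(\cdot\vert c^t)\|_1\le L_\mu K\beta\,\|\phi(c^t)\|_{(\Sigma^t)^{-1}}$ (Assumptions~\ref{assum:prior}--\ref{assum:f}) incurred by replacing $\mu^*$ with any $\mu_\theta$, $\theta\in\mathcal{B}^t$; hence $\tilde\pi^t\in\Ps(\mu_{\mathcal{B}^t},u)$. Combining $Q^t\ge v^*$ with this prior-mismatch bound and $V^*\le 1$ gives
\begin{equation*}
V^t(s^t;c^t)\ \ge\ \big\langle Q^t,\mu^t\otimes\tilde\pi^t\big\rangle_{\Omega\times\cA}(s^t;c^t)\ \ge\ V^*(s^t;c^t)-\xi_t,\qquad \xi_t=O\!\Big(\tfrac{L_\mu K\beta}{p_0 D}\,\|\phi(c^t)\|_{(\Sigma^t)^{-1}}\Big).
\end{equation*}

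Then I would decompose $V^*(s^t;c^t)-V^{\pi^t}(s^t;c^t)=[V^*-V^t]+[V^t-V^{\pi^t}]$. The first bracket is at most $\xi_t$. For the second, $V^t-V^{\pi^t}=\langle Q^t-v^*,\mu^t\otimes\pi^t\rangle_{\Omega\times\cA}+\langle v^*,(\mu^t-\mu^*)\otimes\pi^t\rangle_{\Omega\times\cA}\le 2\rho\,\EE_{\mu^t\otimes\pi^t}\|\psi\|_{(\Gamma^t)^{-1}}+L_\mu K\beta\,\|\phi(c^t)\|_{(\Sigma^t)^{-1}}$, by the pointwise optimism bound and Assumption~\ref{assum:prior}. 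Summing over $t\in[T]$, Cauchy--Schwarz together with the elliptical potential lemma yield $\sum_t\|\psi(s^t,\omega^t,a^t)\|_{(\Gamma^t)^{-1}}=\tilde O(\sqrt{d_\psi T})$ and $\sum_t\|\phi(c^t)\|_{(\Sigma^t)^{-1}}=\tilde O(\sqrt{d_\phi T})$ (the population-to-realized replacement in the $\psi$ sum, including the $\mu^t\!\to\!\mu^*$ sub-step, is charged to fact (iii) and the prior-mismatch term). Plugging in $\rho=\tilde O(d_\psi)$ and $\beta=\tilde O(\kappa^{-1}\sqrt{d_\phi})$ from the stated parameter choices, the three contributions are $\tilde O(d_\psi\sqrt{d_\psi T})$, $\tilde O(d_\phi\sqrt{T})$, and $\tilde O(d_\phi\sqrt{T}/(p_0 D))$ respectively; bounding each term by the product gives $\Reg(T,\bmu^*)=\tilde O\big(d_\phi\sqrt{d_\psi^3}\,\sqrt T/(p_0 D)\big)$, as claimed (the product form is a loose but convenient way to present the sum).

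I expect the robustification step of the second paragraph to be the main obstacle; it is precisely where $(p_0,D)$-regularity is indispensable and where the novelty of the optimism--pessimism decomposition lies. One must (a) show that mixing the optimal $\pi^*$ with the full-information scheme, at a weight $\lambda_t$ that shrinks at the confidence-region rate, produces a policy persuasive against \emph{every} $\theta\in\mathcal{B}^t$ simultaneously---this needs a uniform-in-$\theta$ version of the margin argument together with a clean translation of parameter uncertainty into $\ell_1$-uncertainty of the induced priors \emph{evaluated at the realized context} $c^t$, through Assumptions~\ref{assum:prior}--\ref{assum:f}; and (b) ensure the value loss $\xi_t$ is not merely $o(1)$ but scales proportionally to $\|\phi(c^t)\|_{(\Sigma^t)^{-1}}$, so that $\sum_t\xi_t=\tilde O(\sqrt T)$ rather than $\Omega(T)$. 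The remaining ingredients---GLM and ridge concentration, the Freedman step, and the elliptical-potential bookkeeping---are routine.
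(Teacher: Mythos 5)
Your proposal is correct and follows essentially the same route as the paper: the paper proves this theorem as the $H=1$ specialization of the general MPP argument, using the same split $V^*-V^{\pi^t}=[V^*-V^t]+[V^t-V^{\pi^t}]$ (the terms $\mathbf{p}_1,\mathbf{p}_2$ in Lemma \ref{lm:mdp-decomp}), the same optimism bound via the ridge bonus (Lemma \ref{lm:mdp-delta}), the same prior-mismatch term (Lemma \ref{lm:mdp-concentration}), and the same robustification by mixing the optimal scheme with the full-information scheme at weight $\epsilon/(p_0D)$ (Lemma \ref{lm:robustness-gap} together with Corollary \ref{cor:gap-linear} and Lemmas \ref{lm:gap-for-est}--\ref{lm:opt-gap}), followed by the elliptical-potential bookkeeping. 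Your direct construction of $\tilde\pi^t$ centered at the true prior is a mildly streamlined packaging of the paper's $\Gap$-based three-term comparison in Lemma \ref{lm:mdp-gap}, but the underlying margin argument and all quantitative ingredients are identical.
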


Since we estimate the prior by computing an estimator $\theta^t$, we evaluate the persuasiveness of \ORAI{} through the probability that $\theta^*$ lies in the confidence region centered at $\theta^t$ with the radius $\beta=O(\sqrt{d_\phi\log(T)})$ in weighted norm. Due to the smoothness of the prior and the assumption of link function, the error of the estimated prior is bounded by the product of $\beta$ and the weighted norm of feature vector $\Vert \phi(c^t)\Vert_{\Sigma^t}=O(1/\sqrt{t})$, which yields the same conclusion for $\epsilon^t$ in the tabular MPP case. Also compared to \citet{DBLP:journals/corr/LiLZ17a}, we do not require any regularity for $\Sigma^t$, since we add a constant matrix $\Phi^2 I$ to the Gram matrix $\Sigma^t$. This ensures that $\Sigma^t$ is always lower bounded by the constant $\Phi^2>0$. 
The proof of the persuasiveness and sublinear regret of contextual bandit can be viewed as a direct reduction of the MPP case when the total step $H=1$. We decompose the regret in the same way as that in Lemma \ref{lm:mdp-decomp} for MPPs and then estimate the upper bound for each item to measure the regret loss.%

\section{No-Regret Learning in the General Markov Persuasion Process}\label{sec:mdp}

In this section, we present the full version of the \ORAI{} algorithm for MPPs %
and show that it is persuasive with high probability and meanwhile achieves average regret $\tilde{O}\big(d_\phi \cdot d_\psi ^{3/2} H^2 \sqrt{T}/(p_0 D)\big)$.

In the general MPP setting with the linear utility and transition, a crucial property is that the $Q$-functions under any signaling policy is always linear in the feature map $\psi$ (akin to linear MDPs \cite{jin2020provably}).
Therefore, when designing learning algorithms, it suffices to focus on linear $Q$-functions. In our \ORAI{} algorithm, we iteratively fit the optimal $Q$-function, which is parameterized by $q_h^*$ as $\psi(\cdot,\cdot,\cdot)^\top q_h^*$ at each step $h\in[H]$.
\ORAI{} learns the $Q$-functions of MPPs and the prior of persuasion states simultaneously. 
It operates similarly as that in tabular MPPs and contextual Bayesian persuasion. At the $t$-th episode, given the historical data $\{ (c_h^\tau,s_h^\tau, \omega_h^\tau, a_h^\tau, v_h^\tau) \}_{h\in [H], \tau \in [t-1]}$, we can estimate the unknown vectors $\theta_h^*,q_h^*, \forall h\in [H]$ by solving the following constrained or regularized least-squares problems:
\begin{align*}
\theta_h^t & \gets \argmin_{\Vert \theta_h \Vert \leq L_\theta}  \sum_{\tau \in [t-1]} \big[\omega_h^\tau-f(\phi(c_h^\tau)^\top\theta_h)\big]^2,
\\
q_h^t &  \gets \argmin_{q \in \RR^{d_\psi}} \sum_{\tau \in [t-1]} \big[ v_h^\tau+V_{h+1}^t(s_{h+1}^\tau;C^t)-\psi(s_h^\tau,\omega_h^\tau,a_h^\tau)^\top q \big]^2 + \lambda \Vert q\Vert^2.
\end{align*}
Additionally, $V_{h+1}^t$ is the estimated value function with the observed \cstate{} $C^t$ at the episode $t$, which we describe formally later. This estimator is used to replace the unknown transition $P_h$ and distribution $\nu_h$ in equation \eqref{eq:optimal-Bellman}. Moreover, we can update the estimate of \pstate{} prior $\mu_h^t$ and $Q$-function $Q_h^t$ respectively. Here \ORAI{} adds UCB bonus to $Q_h^t$ to encourage exploration. The formal description is given in Algorithm \ref{alg:mdp}. 
Likewise, the MPP setting inherits the regularity conditions and Assumption \ref{assum:prior} and \ref{assum:f} in the last section. Combining the insights from both the tabular MPPs and contextual Bayesian persuasion, we can show that the \ORAI{} is persuasive and guarantees sublinear regret with high probability for   general MPPs.  

\begin{theorem}\label{thm:mdp_regret}
Under $(p_0,D)$-regularity and Assumption \ref{assum:prior} and \ref{assum:f},  there exists absolute constants $C_1, C_2>0$ such that, if we set $\lambda=\max\{1,\Psi^2\}$, $\beta=C_1(1+\kappa^{-1}\sqrt{K+M+d_\phi\sigma^2\log(HT)})$,and $\rho=C_2 d_\psi H \sqrt{\log(4d_\psi \Psi^2 H^2 T^3)}$, then with probability at least $1-3H^{-1}T^{-1}$, \ORAI{} has regret of order $\tilde{O}\big( d_\phi d_\psi^{3/2}  H^2 \sqrt{T} / (p_0 D) \big)$. 
\end{theorem}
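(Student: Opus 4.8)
The plan is to establish a regret decomposition that isolates the error sources, then bound each source using the optimism and pessimism structure. First I would set up a ``good event'' on which (a) the true parameter $\theta_h^*$ lies in each confidence ball $\cB_h^t = \uB_{\Sigma_h^t}(\theta_h^t,\beta)$, (b) the least-squares estimate $q_h^t$ of the $Q$-function parameter stays within its confidence region, i.e.\ the bonus $\rho\|\psi(\cdot,\cdot,\cdot)\|_{(\Gamma_h^t)^{-1}}$ dominates the fitting error so that $Q_h^t$ is a genuine upper bound on $Q_h^*$, and (c) the empirical estimate of the transition operator acting on $V_{h+1}^t$ concentrates. Standard self-normalized concentration for GLM regression (à la \citet{filippi2010parametric,li2017provably}) gives (a) with the stated $\beta$; the vector-valued martingale concentration of \citet{jin2020provably} gives (b) and (c) with the stated $\rho$. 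Taking a union bound over $h\in[H]$, $t\in[T]$, and a covering of the value-function class yields the $1-3H^{-1}T^{-1}$ failure probability.

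Next, the crux: on the good event, $\bpi^t$ is \emph{feasible} under $\bmu^*$. This is exactly where pessimism pays off --- since $\theta_h^*\in\cB_h^t$, the chosen $\pi_h^t\in\Ps(\mu_{\cB_h^t},u_h)\subseteq\Ps(\mu_{\theta_h^*},u_h)$, so the receiver's best response is the recommended action and the intended trajectory is realized, making $\Reg(T,\bmu^*)$ well-defined in the high-probability sense. Then I would invoke the regret decomposition (Lemma \ref{lm:mdp-decomp}): write $V_1^*(s_1^t;C^t)-V_1^{\bpi^t}(s_1^t;C^t)$ as a telescoping sum over $h$ of (i) a \emph{$Q$-estimation error} term $\delta_h^t$, controlled by optimism and the elliptical potential bound $\sum_t \|\psi(s_h^t,\omega_h^t,a_h^t)\|_{(\Gamma_h^t)^{-1}}^2 = \tilde O(d_\psi)$ (Lemma \ref{lm:mdp-delta}); (ii) a \emph{martingale/sampling noise} term $\zeta_h^t$ from the realized outcome, action, and next state, which sums to $\tilde O(H^2\sqrt{T})$ by Azuma--Hoeffding (Lemma \ref{lm:mdp-zeta}); and (iii) a \emph{persuasion-robustness gap} term capturing the loss from optimizing over the shrunken set $\Ps(\mu_{\cB_h^t},u_h)$ rather than $\Ps(\mu_{\theta_h^*},u_h)$, together with the mismatch between $\mu_h^t$ and $\mu_h^*$ inside $\langle Q_h^t,\mu_h\otimes\pi_h\rangle$.

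The main obstacle is term (iii) --- bounding the value lost to robust persuasion. Here I would use $(p_0,D)$-regularity together with Assumptions \ref{assum:prior} and \ref{assum:f}: the GLM/Lipschitz bounds turn $\|\theta_h^t-\theta_h^*\|_{\Sigma_h^t}\le\beta$ into a prior-estimation error $\|\mu_h^t(\cdot|c_h^t)-\mu_h^*(\cdot|c_h^t)\|_1 \le L_\mu\kappa^{-1}\cdot(\text{something})\cdot\|\phi(c_h^t)\|_{(\Sigma_h^t)^{-1}}\beta$, and $\|\phi(c_h^t)\|_{(\Sigma_h^t)^{-1}}=\tilde O(1/\sqrt t)$ after adding the $\Phi^2 I$ regularizer. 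The regularity condition guarantees that from any prior $D$-close to the truth one can recover a signaling scheme that is persuasive for the whole ball while losing only $O(\varepsilon_h^t/(p_0 D))$ in sender value --- this is the analogue of the robustness argument of \citet{10.1145/3465456.3467593}, extended to infinitely many outcomes; I expect proving this ``robust persuasion with bounded value loss'' lemma (presumably Lemma \ref{lm:mdp-gap}) to require the most care, since one must exhibit an explicit perturbation of the optimal scheme that restores persuasiveness uniformly over $\cB_h^t$. Summing $\sum_{t=1}^T \tilde O(1/\sqrt t) = \tilde O(\sqrt T)$ over the $H$ steps, combining with (i) and (ii), and tracking the $d_\phi$, $d_\psi$, $H$ dependence through $\beta$ and $\rho$ gives the claimed $\tilde O\big(d_\phi d_\psi^{3/2} H^2\sqrt T/(p_0 D)\big)$ bound.
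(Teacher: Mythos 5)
Your plan follows essentially the same route as the paper's proof: the same good event (GLM confidence ball for $\theta_h^*$ giving persuasiveness via pessimism, self-normalized concentration giving optimism of $Q_h^t$), the same regret decomposition into optimism, martingale, robust-persuasion-gap, and prior-mismatch terms, the same key robustness lemma built on $(p_0,D)$-regularity via an explicit mixture with the fully revealing scheme, and the same elliptical-potential summation. You correctly identified the robustness-gap lemma as the technical crux, so there is nothing substantive to add.
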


Recall that the novelty of \ORAI{} is that we adopt pessimism and optimism to induce robust equilibrium behavior and encourage exploration simultaneously. Specifically, pessimism tackles the uncertainty in the prior estimation by selecting a signaling policy that is persuasive w.r.t. all the priors in the confidence region, while optimism in $Q$-function estimation encourages exploration. To evaluate the regret of \ORAI{}, we provide a novel regret decomposition, which is tailored to this pessimism and optimism combination. Each term represents different aspects of regret loss incurred by either estimation or randomness.

\section{Proof Sketch and Technical Highlights}\label{sec:proof-sketch}

In this section, we present the proof sketch for Theorem \ref{thm:mdp_regret}. We first decompose the regret into several terms tailored to MPPs and briefly introduce how to bound each term. Then we highlight our technical contribution about regularity when measuring the loss in the sender's utility for choosing a signaling scheme that is persuasive for a subset of priors close to each other.

\subsection{Proof of Theorem \ref{thm:mdp_regret}}

In order to prove the sublinear regret for \ORAI{}, we construct a novel regret decomposition tailored to MPPs.
Our proof starts from decomposing the regret into several terms, each of which indicates the regret loss either from estimation or from the randomness of trajectories. Next, we evaluate each term and then add them together to conclude the upper bound of the regret of \ORAI. For simplicity of presentation, denote $\tilde{V}_h^t(\cdot;C)= \big\langle Q_h^t,\mu_h^*\otimes\pi_h^t\big\rangle_{\Omega\times\cA}(\cdot;C)$ as the expectation of $Q_h^t$ with respect to the ground-truth prior $\mu_h^*$  and signaling scheme $\pi_h^t$ at the $h$-th step. Then we can define the temporal-difference (TD) error as
\begin{equation}\label{eq:defi-delta}
    \delta_h^t(s,\omega,a)=(v_h^t+P_h V_{h+1}^t-Q_h^t)(s,\omega,a;C^t).
\end{equation}
Here $\delta_h^t$ is a function on $\cS \times \Omega \times \cA$ for all $h\in[H]$ and $t\in[T]$. Intuitively, $\{\delta_h^t\}_{h\in[H]}$ quantifies how far the $Q$-functions $\{Q_h^t\}_{h\in[H]}$ are from satisfying the Bellman optimality equation in equation \eqref{eq:optimal-Bellman}. Moreover, define $\zeta_{t,h}^1$ and $\zeta_{t,h}^2$  for the trajectory $\{c_h^t,s_h^t,\omega_h^t,a_h^t\}_{h\in[H]}$ generated by Algorithm \ref{alg:mdp} at the $t$-th episode as follows
\begin{equation}\label{eq:defi-zeta}
    \begin{split}
        \zeta_{t,h}^1&=(\tilde{V}_h^t-V_h^{\pi^t})(s_h^t;C^t)-(Q_h^t-Q_h^{\pi^t})(s_h^t,\omega_h^t,a_h^t;C^t),\\
        \zeta_{t,h}^2&=P_h(V_{h+1}^t-V_{h+1}^*)(s_h^t,\omega_h^t,a_h^t;C^t)-(V_{h+1}^t-V_{h+1}^*)(s_{h+1}^t;C^t).
    \end{split}
\end{equation}
By definition, $\zeta_{t,h}^1$ capture the randomness of  realizing the outcome $\omega_h^t\sim \mu_h^*(\cdot\vert c_h)$ and signaling the action   $ a_h^t  \sim\pi_h^t(s_h^t,\omega_h^t,\cdot)$, while $\zeta_{t,h}^2$ captures the randomness of drawing the next state $s_{h+1}^t$ from $P_h(\cdot\vert s_h^t,\omega_h^t,\cdot)$. With the notations above, we can decompose the regret into six parts to facilitate the establishment of the upper bound of the regret.

\begin{lemma}[Regret Decomposition]\label{lm:mdp-decomp}
With the notations defines in equation \eqref{eq:defi-delta} and \eqref{eq:defi-zeta}, we can write the regret as: 
\begin{equation}\label{eq:regret-decomp}
\begin{split}
    \Reg(T,\mu^*)=& \underbrace{\sum_{t\in[T]}\sum_{h\in[H]} \big\{ \mathbb{E}_{\mu_h^*,\pi_h^*} [ \delta_h^t(s_h,\omega_h,a_h^t) \vert s_1=s_1^t]-\delta_h^t(s_h^t,\omega_h^t,a_h^t)\big\}}_{\displaystyle \mathrm{(i)}}
    +\underbrace{\sum_{t\in[T]}\sum_{h\in[H]}(\zeta_{t,h}^1+\zeta_{t,h}^2)}_{\displaystyle \mathrm{(ii)} }\\
    &~~~~+\underbrace{\sum_{t\in[T]}\sum_{h\in[H]}\mathbb{E}_{\mu_h^*,\pi_h^*}\big[\big\langle Q_h^t,\mu_h^*\otimes\pi_h^*-\mu_h^t\otimes \pi_h^t \big\rangle_{\Omega\times\cA}(s_h;C^t)\vert s_1=s_1^t\big]}_{\displaystyle \mathrm{(iii)}}\\
    &~~~~+\underbrace{\sum_{t\in[T]}\sum_{h\in[H]}\big\langle Q_h^t,(\mu_h^t-\mu_h^*)\otimes\pi_h^t\big\rangle_{\Omega\times\cA}(s_h^t,C^t)}_{\displaystyle \mathrm{(iv)}}.
    \end{split}
\end{equation}
\end{lemma}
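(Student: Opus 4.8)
The plan is to telescope the per-episode suboptimality $V_1^*(s_1^t;C^t) - V_1^{\bpi^t}(s_1^t;C^t)$ through the two Bellman equations \eqref{eq:policy-Bellman} and \eqref{eq:optimal-Bellman}, inserting the estimated quantities $Q_h^t$, $V_h^t$, and $\tilde V_h^t$ as intermediaries so that all the ``hard'' differences get grouped into the four labeled terms. First I would write, for each fixed episode $t$ and each $h\in[H]$, the identity
\begin{equation*}
V_h^*(s) - V_h^{\pi^t}(s) = \big(V_h^*(s) - \tilde V_h^t(s;C^t)\big) + \big(\tilde V_h^t(s;C^t) - V_h^{\pi^t}(s)\big),
\end{equation*}
and handle the two pieces separately. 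For the second piece I would use the definition $\tilde V_h^t = \langle Q_h^t, \mu_h^*\otimes\pi_h^t\rangle_{\Omega\times\cA}$ together with the policy Bellman equation $V_h^{\pi^t}(s) = \langle Q_h^{\pi^t},\mu_h^*\otimes\pi_h^t\rangle_{\Omega\times\cA}(s)$, so the difference becomes $\langle Q_h^t - Q_h^{\pi^t}, \mu_h^*\otimes\pi_h^t\rangle_{\Omega\times\cA}(s)$; then unfolding $Q_h^t - Q_h^{\pi^t} = (v_h^t + P_h V_{h+1}^t - Q_h^t) + P_h(V_{h+1}^{\pi^t} - V_{h+1}^t) + \text{(utility mismatch)}$ recursively down the horizon produces the TD-error $\delta_h^t$ and, after replacing population expectations by the realized trajectory values, the $\zeta^1_{t,h}$ and $\zeta^2_{t,h}$ martingale-difference terms of \eqref{eq:defi-zeta}. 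This is the standard optimism-based value-difference expansion (as in LSVI-UCB), and I would carry it out by induction on $h$ from $H+1$ down to $1$ using $V_{H+1}^* = V_{H+1}^{\pi^t} = V_{H+1}^t = 0$.

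The genuinely new part is the first piece, $V_h^*(s) - \tilde V_h^t(s;C^t)$, which is where the pessimism/optimism interplay and terms (iii)–(iv) come from. Here I would use the optimal Bellman equation $V_h^*(s) = \max_{\pi_h'\in\Ps(\mu_h^*,u_h)}\langle Q_h^*,\mu_h^*\otimes\pi_h'\rangle_{\Omega\times\cA}(s)$, pick $\pi_h^*$ as its maximizer, and write
\begin{equation*}
V_h^*(s) - \tilde V_h^t(s;C^t) = \langle Q_h^*,\mu_h^*\otimes\pi_h^*\rangle_{\Omega\times\cA}(s) - \langle Q_h^t,\mu_h^*\otimes\pi_h^t\rangle_{\Omega\times\cA}(s).
\end{equation*}
I would then add and subtract $\langle Q_h^t,\mu_h^*\otimes\pi_h^*\rangle_{\Omega\times\cA}(s)$ and $\langle Q_h^t,\mu_h^t\otimes\pi_h^t\rangle_{\Omega\times\cA}(s)$. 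The $\langle Q_h^*-Q_h^t,\mu_h^*\otimes\pi_h^*\rangle$ difference again refolds into $\delta$- and $\zeta$-type terms along the $\pi^*$-trajectory (giving the $\mathbb E_{\mu_h^*,\pi_h^*}$-expectations in term (i)); the piece $\langle Q_h^t,\mu_h^*\otimes\pi_h^*-\mu_h^t\otimes\pi_h^t\rangle$ is exactly term (iii); and the piece $\langle Q_h^t,(\mu_h^t-\mu_h^*)\otimes\pi_h^t\rangle$, evaluated along the realized trajectory, is exactly term (iv). The subtle point — which I'd flag as the place to be careful rather than a computational obstacle — is that I need $\pi_h^t$ to be a legitimate competitor inside the optimistic objective $\langle Q_h^t,\mu_h^t\otimes\cdot\rangle$ over $\Ps(\mu_{\cB_h^t},u_h)$, so that the ``$\max$'' in the $V_h^*$ expression can be compared against what the algorithm actually plays; this is where the robust (pessimistic) feasible set and the algorithm's choice rule in Algorithm~\ref{alg:abstract} line~5 enter.

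Finally I would assemble: summing the fixed-episode identity over $h\in[H]$ and over $t\in[T]$, the $\delta_h^t$-terms along the optimal trajectory combine with the realized-trajectory $\delta_h^t$-terms into term (i), the $\zeta^1+\zeta^2$ telescoping remainders give term (ii), and terms (iii) and (iv) appear verbatim; nothing is left over because every intermediate $Q_h^t/V_h^t$ insertion was added and subtracted in a matched pair. The main obstacle I anticipate is purely bookkeeping: keeping straight which expectation (population $\mathbb E_{\mu_h^*,\pi_h^*}$ conditioned on $s_1=s_1^t$ versus the single realized sample $(s_h^t,\omega_h^t,a_h^t)$) attaches to each occurrence of $\delta_h^t$, since the lemma statement puts both a conditional-expectation copy and a realized-sample copy of $\delta_h^t$ inside term (i), and getting the signs and the conditioning consistent through the double telescoping is the error-prone step. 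Everything else — the Bellman unfoldings, the sub-Gaussian/martingale structure of $\zeta^1,\zeta^2$ — is routine and is deferred to the subsequent lemmas.
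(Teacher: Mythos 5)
Your proposal is correct and follows essentially the same route as the paper: the paper pivots on $V_1^t=\langle Q_1^t,\mu_1^t\otimes\pi_1^t\rangle_{\Omega\times\cA}$ rather than your $\tilde V_1^t$, which merely rebrackets where the $h=1$ copy of term (iv) appears, and otherwise both arguments run one recursion along the unobserved $\pi^*$-trajectory — yielding the conditional expectations in (i) and term (iii), with \emph{no} $\zeta$-residuals there since nothing is realized on that trajectory — and one along the realized trajectory yielding $-\delta_h^t$, term (ii), and term (iv). The only slips to fix are the sign in $Q_h^t-Q_h^{\pi^t}=P_h(V_{h+1}^t-V_{h+1}^{\pi^t})-\delta_h^t$ (there is no separate ``utility mismatch'' term), and the worry about $\pi_h^t$ being a legitimate competitor, which is irrelevant to this exact algebraic identity and only enters when term (iii) is bounded in Lemma \ref{lm:mdp-gap}.
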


In this novel regret decomposition, term $\displaystyle \mathrm{(i)}$ indicates the optimism in \ORAI. Provably, $\delta_h^t$ in term $\displaystyle \mathrm{(i)}$ is always non-positive due to the optimistic $Q$-value estimation,  which could simplify this term. Term $\displaystyle \mathrm{(iii)}$ corresponds to the pessimism in \ORAI{} for inducing a robust equilibria. It evaluates the regret loss incurred by choosing a robustly persuasive signaling policy. Since the signaling policy has to be persuasive to ensure that receivers will always take recommended actions, we cannot simply choose a greedy policy for a fixed prior estimation. Instead, we first apply optimism to construct the optimistic $Q$-value estimation and then apply pessimism to select a signaling policy that is robustly persuasive for all the priors in the confidence region. Therefore, we design the regret decomposition, especially term $\displaystyle \mathrm{(iii)}$ in this form to reflect the optimism and pessimism principle in \ORAI.
Notice that this decomposition does not depend on specific function approximation forms in the algorithm, since not only the estimation of prior and $Q$-function but also the chosen signaling policy has no influence on this formula. Therefore, it generally suits all the algorithms for MPPs.

Unlike the regret decomposition in \cite{yang2020approximation}, Lemma \ref{lm:mdp-decomp} also captures the randomness of realizing the \pstate. Since we have to estimate the prior of the \pstate{} and choose a robustly persuasive policy in MPPs, we add term $\displaystyle \mathrm{(iii)}$ and $\displaystyle \mathrm{(iv)}$ to evaluate the further regret loss.

The rigorous arguments turn out to be technical, and thus we shall defer the proof of most lemmas to the appendix while aiming to present all the key ideas and conclusions in the following. For term $\displaystyle \mathrm{(i)}$ in equation \eqref{eq:regret-decomp}, although we do not observe the trajectories under prior $\mu^*$ and signaling policy $\pi^*$, we can upper bound both  $\delta_h^t$   and  $-\delta_h^t$. The following lemma states this result. 
\begin{lemma}[Optimism]\label{lm:mdp-delta}
There exists an absolute constant $c>0$ such that, for any fixed $\delta\in(0,1)$, if  we set $\lambda=\max\{1,\Psi^2\}$ and $\rho=c d_\psi H \sqrt{\iota}$ in Algorithm \ref{alg:mdp} with $\iota=\log(2d_\psi \Psi^2 T/\delta)$, then with probability at least $1-\delta/2$, we have
\begin{equation*}
    -2\rho \Vert \psi(s,\omega,a)\Vert_{(\Gamma_h^t)^{-1}}\leq \delta_h^t(s,\omega,a) \leq 0.
\end{equation*}
for all $s\in\cS,\omega\in\Omega,a\in\cA, h\in[H]$ and $t\in[T]$.
\end{lemma}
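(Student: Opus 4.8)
\textbf{Proof proposal for Lemma \ref{lm:mdp-delta} (Optimism).}

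The plan is to bound the temporal-difference error $\delta_h^t(s,\omega,a) = (v_h^t + P_h V_{h+1}^t - Q_h^t)(s,\omega,a;C^t)$ by separating the contribution of the regularized least-squares estimator $q_h^t$ from that of the UCB bonus. Recall that $Q_h^t(s,\omega,a) = \Pi_{[0,H]}\big[\psi(s,\omega,a)^\top q_h^t + \rho \Vert\psi(s,\omega,a)\Vert_{(\Gamma_h^t)^{-1}}\big]$ (with the usual truncation to $[0,H]$), where $q_h^t$ fits the target $v_h^\tau + V_{h+1}^t(s_{h+1}^\tau;C^t)$. First I would write the ``ideal'' one-step Bellman backup of $V_{h+1}^t$ under the true model — call its linear parameter $\bar q_h^t$ — so that $(v_h + P_h V_{h+1}^t)(s,\omega,a) = \psi(s,\omega,a)^\top \bar q_h^t$; this uses the linear-MDP structure (linear $v_h$ via $\gamma_h^*$, linear transition via $M_h$), exactly as in Lemma B.3 of \citet{jin2020provably}. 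The key estimation step is then to control $\big|\psi(s,\omega,a)^\top(q_h^t - \bar q_h^t)\big| \le \big\Vert q_h^t - \bar q_h^t \big\Vert_{\Gamma_h^t}\cdot \Vert\psi(s,\omega,a)\Vert_{(\Gamma_h^t)^{-1}}$ by Cauchy--Schwarz in the $\Gamma_h^t$-norm, and to show $\Vert q_h^t - \bar q_h^t\Vert_{\Gamma_h^t} \le \rho$ with the stated choice $\rho = cd_\psi H\sqrt{\iota}$, $\iota = \log(2d_\psi\Psi^2 T/\delta)$, on an event of probability at least $1-\delta/2$.

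The heart of that bound is a self-normalized concentration inequality (Abbasi-Yadkori et al.'s vector Azuma/martingale bound, or Lemma D.4 of \citet{jin2020provably}) applied to the martingale-difference noise in the least-squares target, namely the gap between $V_{h+1}^t(s_{h+1}^\tau;C^t)$ and $(P_h V_{h+1}^t)(s_h^\tau,\omega_h^\tau,a_h^\tau;C^t)$. Because $V_{h+1}^t$ itself depends on the data through $q_{h+1}^t$ and the bonus, this requires a uniform-convergence (covering-number) argument over the class of value functions of the form $s\mapsto \max_{\pi\in\Ps(\mu_{\mathcal B},u)}\langle \Pi_{[0,H]}[\psi^\top q + \rho\Vert\psi\Vert_{\Lambda^{-1}}],\mu\otimes\pi\rangle(s)$; one discretizes over $q$, the bonus scaling, and the matrix $\Lambda$, pays a $\log$ covering cost absorbed into $\iota$, and so gets the $\sqrt{d_\psi}$ and $H$ factors, plus an extra $\sqrt{d_\psi}$ from the dimension of $q$ in the self-normalized bound — together giving the $d_\psi H$ scaling in $\rho$. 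I expect this covering argument to be the main obstacle, and it is slightly more delicate here than in vanilla linear MDPs because the value function involves an inner maximization over the robustly-persuasive polytope $\Ps(\mu_{\mathcal B},u)$; the point to check is that this maximum is $1$-Lipschitz in the relevant parameters (it is, since it is a maximum of linear functionals over a fixed convex set, and the set itself varies continuously), so the covering number is still of the standard $\poly(d_\psi)$-in-$\log$ form.

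Finally I would assemble the two-sided bound. For the upper bound $\delta_h^t \le 0$: on the good event, $\psi^\top q_h^t \ge \psi^\top\bar q_h^t - \rho\Vert\psi\Vert_{(\Gamma_h^t)^{-1}}$, so before truncation $Q_h^t \ge \psi^\top\bar q_h^t = (v_h + P_h V_{h+1}^t)(s,\omega,a)$; after truncation to $[0,H]$ the inequality $Q_h^t \ge (v_h+P_hV_{h+1}^t)$ is preserved because the backup also lies in $[0,H]$ (as $v_h\in[0,1]$ and $V_{h+1}^t\in[0,H-h]$ by induction), hence $\delta_h^t = (v_h+P_hV_{h+1}^t) - Q_h^t \le 0$. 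For the lower bound: symmetrically $\psi^\top q_h^t \le \psi^\top\bar q_h^t + \rho\Vert\psi\Vert_{(\Gamma_h^t)^{-1}}$, so $Q_h^t \le \psi^\top\bar q_h^t + 2\rho\Vert\psi\Vert_{(\Gamma_h^t)^{-1}}$ and thus $\delta_h^t \ge -2\rho\Vert\psi(s,\omega,a)\Vert_{(\Gamma_h^t)^{-1}}$; truncation only helps since it can only decrease $Q_h^t$ toward the backup value. A union bound over $h\in[H]$ (and the already-accounted covering net, with $t$ handled inside the self-normalized bound which is uniform in the stopping time) keeps the total failure probability at $\delta/2$, completing the proof.
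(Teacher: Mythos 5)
Your proposal is correct and follows essentially the same route as the paper: the paper's proof delegates the concentration step to an adapted Lemma B.4 of \citet{jin2020provably} (stated as an auxiliary lemma giving $\psi(s,\omega,a)^\top q_h^t - Q_h^{\pi}(s,\omega,a) = P_h(V_{h+1}^t - V_{h+1}^{\pi})(s,\omega,a) + \triangle_h^t$ with $|\triangle_h^t|\le\rho\Vert\psi\Vert_{(\Gamma_h^t)^{-1}}$), which is exactly the linear-Bellman-backup plus Cauchy--Schwarz plus self-normalized-martingale-with-covering argument you spell out, and then concludes with the same two-sided algebra. Your additional care about the covering number of the value class involving the maximization over the robustly persuasive polytope is a point the paper glosses over but handles implicitly in the same way.
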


Term $\displaystyle \mathrm{(ii)}$ in equation (\ref{eq:regret-decomp}) can be bounded by Lemma 5.3 from \cite{yang2020approximation} using martingale techniques and the Azuma-Hoeffding inequality \cite{azuma1967weighted}. We state the upper bound for term $\displaystyle \mathrm{(ii)}$ in Lemma \ref{lm:mdp-zeta}.
Moreover, term $\displaystyle \mathrm{(iii)}$ in equation (\ref{eq:regret-decomp}) evaluates the regret loss caused by estimating the prior and choosing a robustly persuasive signaling policy. Here, we apply the robustness gap $\Gap$ defined later to bound this term. 
\begin{lemma}[Bounding Term $\displaystyle \mathrm{(iii)}$]\label{lm:mdp-gap}
On the event of $\{\theta_h^*\in\cB_h^t\}$, under Assumption \ref{assum:prior} and \ref{assum:f}, we have
\begin{align*} 
\sum_{t\in[T]}\sum_{h\in[H]}\mathbb{E}_{\mu_h^*,\pi_h^*} \big[\big\langle Q_h^t,\mu_h^*\otimes\pi_h^*&-\mu_h^t\otimes \pi_h^t \big\rangle_{\Omega\times\cA}(s_h;C^t)\vert s_1=s_1^t\big]\\
&\leq  \bigg(\frac{3H L_\mu K}{p_0D}+\frac{H L_\mu K}{2}\bigg) \beta \sum_{h\in[H]} \sum_{t\in[T]} \Vert \phi(c_h^t)\Vert_{(\Sigma_h^t)^{-1}}. 
\end{align*}

\end{lemma}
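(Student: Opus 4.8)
The plan is to prove a uniform, \emph{state-free} bound on each summand of term $\mathrm{(iii)}$: since $\|\phi(c_h^t)\|_{(\Sigma_h^t)^{-1}}$ depends only on the episode's context and not on the state reached along the optimal trajectory, it suffices to bound $\big\langle Q_h^t,\mu_h^*\otimes\pi_h^*-\mu_h^t\otimes\pi_h^t\big\rangle_{\Omega\times\cA}(s;C^t)$ uniformly over $s\in\cS$ and then take the (now trivial) expectation over $s_h$. Throughout fix $(t,h)$, work on the stated good event $\{\theta_h^*\in\cB_h^t\}$, and abbreviate $\xi:=L_\mu K\,\beta\,\|\phi(c_h^t)\|_{(\Sigma_h^t)^{-1}}$. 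The first ingredient is a GLM-to-$\ell_1$ perturbation estimate: by Cauchy--Schwarz and Assumption \ref{assum:f}, $|f(\phi(c_h^t)^\top\theta)-f(\phi(c_h^t)^\top\theta')|\le K\|\phi(c_h^t)\|_{(\Sigma_h^t)^{-1}}\|\theta-\theta'\|_{\Sigma_h^t}$, and feeding this into Assumption \ref{assum:prior} gives $\|\mu_\theta(\cdot\vert c_h^t)-\mu_{\theta'}(\cdot\vert c_h^t)\|_1\le L_\mu K\|\phi(c_h^t)\|_{(\Sigma_h^t)^{-1}}\|\theta-\theta'\|_{\Sigma_h^t}$. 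Since $\cB_h^t$ is the $\Sigma_h^t$-ball of radius $\beta$ about the estimate $\theta_h^t$ and $\theta_h^*\in\cB_h^t$, every prior $\mu'$ induced by a parameter in $\cB_h^t$ satisfies $\|\mu'-\mu_h^*\|_1\le 2\xi$, and in particular $\|\mu_h^t-\mu_h^*\|_1\le\xi$.

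The heart of the proof is the construction of a \emph{robustified benchmark} $\tilde\pi_h$, which I would take to be the mixture $\tilde\pi_h:=(1-\varepsilon_h^t)\pi_h^*+\varepsilon_h^t\,\pi_h^{\mathrm{fr}}$, where $\pi_h^{\mathrm{fr}}$ is the fully-revealing signaling policy (which recommends an outcome-optimal action and hence lies in $\Ps(\mu_{\cB},u_h)$ for \emph{every} set $\cB$, cf.\ the non-emptiness remark in Section \ref{sec:op4}) and $\varepsilon_h^t=\Theta\!\big(\xi/(p_0D)\big)$ is a scalar mixing weight depending only on the context. This is precisely where $(p_0,D)$-regularity enters: for every state $s$ and action $a$, the outcomes in $\cW_{s,a}(D)$ carry $\mu_h^*$-mass at least $p_0$ and contribute at least $D$ to the persuasiveness constraint of $\pi_h^{\mathrm{fr}}$ recommending $a$, while all remaining outcomes contribute a nonnegative amount, so $\pi_h^{\mathrm{fr}}$ satisfies each constraint under $\mu_h^*$ with slack at least $p_0D$, uniformly in $s$. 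Consequently, for any $\mu'$ with $\|\mu'-\mu_h^*\|_1\le 2\xi$ the constraint value of $\tilde\pi_h$ under $\mu'$ is at least $-\|\mu'-\mu_h^*\|_1+\varepsilon_h^t\big(p_0D-\|\mu'-\mu_h^*\|_1\big)\ge -2\xi+\varepsilon_h^t p_0D$, which is nonnegative once $\varepsilon_h^t\ge 2\xi/(p_0D)$; hence $\tilde\pi_h\in\Ps(\mu_{\cB_h^t},u_h)$. This feasibility statement, bundled with the bound on $\langle Q_h^t,\mu_h^*\otimes(\pi_h^*-\tilde\pi_h)\rangle$ below, is what I would package as the ``robustness gap'' lemma that defines $\Gap$.

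With $\tilde\pi_h$ in hand I would telescope each summand through it:
\begin{align*}
\big\langle Q_h^t,\mu_h^*\otimes\pi_h^*-\mu_h^t\otimes\pi_h^t\big\rangle_{\Omega\times\cA}(s;C^t)
&= \big\langle Q_h^t,\mu_h^*\otimes(\pi_h^*-\tilde\pi_h)\big\rangle_{\Omega\times\cA}(s;C^t)\\
&\quad+\big\langle Q_h^t,(\mu_h^*-\mu_h^t)\otimes\tilde\pi_h\big\rangle_{\Omega\times\cA}(s;C^t)\\
&\quad+\big\langle Q_h^t,\mu_h^t\otimes(\tilde\pi_h-\pi_h^t)\big\rangle_{\Omega\times\cA}(s;C^t).
\end{align*}
The first term equals $\varepsilon_h^t\big\langle Q_h^t,\mu_h^*\otimes(\pi_h^*-\pi_h^{\mathrm{fr}})\big\rangle_{\Omega\times\cA}(s;C^t)\le \varepsilon_h^t H$, using $0\le Q_h^t\le H$ (so the $\pi_h^{\mathrm{fr}}$ part is nonnegative and the $\pi_h^*$ part is at most $H$); substituting $\varepsilon_h^t=\Theta(\xi/(p_0D))$ gives $O\!\big(HL_\mu K\,\beta\|\phi(c_h^t)\|_{(\Sigma_h^t)^{-1}}/(p_0D)\big)$. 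The second term is handled by subtracting the constant $H/2$ from $Q_h^t$ (legitimate since $\mu_h^*-\mu_h^t$ has total mass $0$ and $\sum_a\tilde\pi_h(a\vert s,\cdot)=1$) followed by H\"older, giving at most $\tfrac{H}{2}\|\mu_h^*-\mu_h^t\|_1\le\tfrac{H}{2}\xi$. The third term is nonpositive because $\tilde\pi_h\in\Ps(\mu_{\cB_h^t},u_h)$ while $\pi_h^t$ is, by construction in Algorithm \ref{alg:mdp}, a maximiser of $\langle Q_h^t,\mu_h^t\otimes\cdot\rangle_{\Omega\times\cA}(s;C^t)$ over exactly that robust persuasive set (the per-state persuasion problems decoupling across $s$, so $\pi_h^t$ may be taken optimal at every $s$). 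Collecting the three bounds, taking the expectation over $s_h\sim(\mu^*,\pi^*)$, and summing over $h\in[H]$ and $t\in[T]$ yields the claimed inequality; the constant $3$ in front of $HL_\mu K/(p_0D)$ falls out of the precise choice of $\varepsilon_h^t$ together with the crude estimate $\|\mu'-\mu_h^*\|_1\le2\xi$.

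The main obstacle is the robustification step: one must certify that a \emph{single} scalar weight $\varepsilon_h^t$, tied only to $\|\phi(c_h^t)\|_{(\Sigma_h^t)^{-1}}$, makes $\tilde\pi_h$ simultaneously persuasive against \emph{all} priors in the confidence set and at \emph{all} states --- this is exactly where the uniform-over-states form of $(p_0,D)$-regularity is indispensable, and it is what lets the resulting bound be state-independent and hence pull out of the expectation. The remaining pieces --- the GLM perturbation estimate from Assumptions \ref{assum:prior}--\ref{assum:f}, the ``subtract $H/2$'' trick for the prior-shift term, and the optimality of $\pi_h^t$ over the robust set --- are routine.
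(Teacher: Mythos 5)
Your proof is correct and arrives at the stated bound (in fact with a slightly smaller constant), but it organizes the argument differently from the paper. The paper telescopes term (iii) through two intermediate policies, $\pi_h'=\argmax_{\pi\in\Ps(\mu_h^*,u_h)}\langle Q_h^t,\mu_h^*\otimes\pi\rangle$ and $\pi_h''=\argmax_{\pi\in\Ps(\mu_h^t,u_h)}\langle Q_h^t,\mu_h^t\otimes\pi\rangle$: the first piece is nonpositive by optimality of $\pi_h'$, the middle piece is controlled by Lemma \ref{lm:opt-gap} together with Corollary \ref{cor:gap-linear}, and the last piece is exactly $\Gap\big(s_h,\mu_h^t,\mu_{\cB_h^t};Q_h^t\big)$, which requires a separate Lemma \ref{lm:gap-for-est} because the \emph{estimated} prior $\mu_h^t$ need not itself satisfy $(p_0,D)$-regularity. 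You instead telescope through a single explicitly robustified benchmark $\tilde\pi_h=(1-\varepsilon_h^t)\pi_h^*+\varepsilon_h^t\pi_h^{\mathrm{fr}}$ and verify directly that it lies in $\Ps(\mu_{\cB_h^t},u_h)$, using the triangle inequality $\Vert\mu'-\mu_h^*\Vert_1\le 2\xi$ over the whole confidence set. The underlying mechanism is the same as the paper's Lemma \ref{lm:robustness-gap} (mixing with the fully revealing scheme, whose constraints have uniform slack $p_0D$ under $\mu_h^*$ by regularity, plus the H\"older-with-$H/2$-centering bound of Lemma \ref{lm:prior-diff}), but your route is more direct: it robustifies around the true prior, which \emph{does} satisfy regularity, and thereby sidesteps Lemma \ref{lm:gap-for-est} entirely; the price is that you pay $\varepsilon_h^t H$ on the first piece where the paper gets a free nonpositive term, which is immaterial to the final rate. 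Both arguments share the same implicit conventions (that $Q_h^t\in[0,H]$ and that $\pi_h^t$ is taken to be the per-state maximizer over $\Ps(\mu_{\cB_h^t},u_h)$ at every state appearing in the expectation, not only at the realized $s_h^t$), so neither of these is a gap specific to your write-up.
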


It remains to bound term $\displaystyle \mathrm{(iv)}$ in equation \eqref{eq:regret-decomp}. This bound can be derived from Holder inequality and the property of the prior. 
\begin{lemma}[Bounding Term $\displaystyle \mathrm{(iv)}$]\label{lm:mdp-concentration}
On the event of $\{\theta_h^*\in\cB_h^t\}$, under Assumption \ref{assum:prior} and \ref{assum:f}, we have
\begin{equation*}
    \sum_{t\in[T]}\sum_{h\in[H]}\big\langle Q_h^t,(\mu_h^t-\mu_h^*)\otimes\pi_h^t\big\rangle_{\Omega\times\cA}(s_h^t;C^t) \leq H L_\mu K \beta \sum_{h\in[H]} \sum_{t\in[T]} \Vert \phi(c_h^t)\Vert_{(\Sigma_h^t)^{-1}}.
\end{equation*}

\end{lemma}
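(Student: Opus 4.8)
The statement to prove is Lemma~\ref{lm:mdp-concentration}, which bounds term $\mathrm{(iv)}$ in the regret decomposition.

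\textbf{Overview of the approach.}
The plan is to bound $\big\langle Q_h^t,(\mu_h^t-\mu_h^*)\otimes\pi_h^t\big\rangle_{\Omega\times\cA}(s_h^t;C^t)$ pointwise for each $(t,h)$ and then sum. Writing the inner product out, it equals $\int_\omega \sum_a \big(\mu_h^t(\omega|c_h^t)-\mu_h^*(\omega|c_h^t)\big)\,\pi_h^t(a|s_h^t,\omega)\,Q_h^t(s_h^t,\omega,a)\,\ud\omega$. Since $\pi_h^t(\cdot|s_h^t,\omega)\in\Delta(\cA)$ and $Q_h^t$ takes values in $[0,H]$ (recall $v_h\in[0,1]$ and the horizon is $H$, and the optimistic bonus keeps the estimate in this range), the function $\omega\mapsto\sum_a\pi_h^t(a|s_h^t,\omega)Q_h^t(s_h^t,\omega,a)$ is bounded by $H$ in sup-norm. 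Hence by H\"older's inequality the pointwise term is at most $H\cdot\big\Vert \mu_h^t(\cdot|c_h^t)-\mu_h^*(\cdot|c_h^t)\big\Vert_1$.

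\textbf{Controlling the prior estimation error.}
The next step is to bound the total-variation distance $\big\Vert \mu_h^t(\cdot|c_h^t)-\mu_h^*(\cdot|c_h^t)\big\Vert_1$. By Assumption~\ref{assum:prior} this is at most $L_\mu\big\Vert f(\phi(c_h^t)^\top\theta_h^t)-f(\phi(c_h^t)^\top\theta_h^*)\big\Vert$, and by the mean value theorem together with the derivative bound $|f'(z)|\le K$ from Assumption~\ref{assum:f} (valid since $\Vert\theta_h^t\Vert,\Vert\theta_h^*\Vert\le L_\theta$ and $\Vert\phi\Vert\le\Phi$), this is at most $L_\mu K\big|\phi(c_h^t)^\top(\theta_h^t-\theta_h^*)\big|$. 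On the event $\{\theta_h^*\in\cB_h^t\}$ where $\cB_h^t=\uB_{\Sigma_h^t}(\theta_h^t,\beta)$, a Cauchy--Schwarz step in the $\Sigma_h^t$-geometry gives $\big|\phi(c_h^t)^\top(\theta_h^t-\theta_h^*)\big|\le \Vert\phi(c_h^t)\Vert_{(\Sigma_h^t)^{-1}}\Vert\theta_h^t-\theta_h^*\Vert_{\Sigma_h^t}\le \beta\Vert\phi(c_h^t)\Vert_{(\Sigma_h^t)^{-1}}$. Combining, each pointwise term is bounded by $H L_\mu K\beta\Vert\phi(c_h^t)\Vert_{(\Sigma_h^t)^{-1}}$, and summing over $t\in[T]$ and $h\in[H]$ yields the claimed bound.

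\textbf{Anticipated obstacle.}
The conceptual content is light; the main thing to get right is the geometry of the confidence ellipsoid. One must verify that the confidence region is genuinely of the form $\uB_{\Sigma_h^t}(\theta_h^t,\beta)$ with the \emph{same} matrix $\Sigma_h^t$ that appears in the final bound, and that the GLM least-squares estimator satisfies this self-normalized guarantee (this is where Assumption~\ref{assum:f} and the sub-Gaussian noise enter, via a standard generalized-linear-model concentration argument as in \citet{li2017provably}/\citet{wang2019optimism}); that confidence-region construction is presumably established in a companion lemma and used here only through the event $\{\theta_h^*\in\cB_h^t\}$. A second minor point is ensuring $Q_h^t\in[0,H]$ uniformly --- one should confirm the algorithm clips the optimistic $Q$-estimate to $[0,H]$ so the H\"older step uses the correct range rather than a larger a priori bound; otherwise the constant would degrade. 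Neither of these is deep, so the proof is essentially a two-line H\"older-plus-Cauchy--Schwarz argument once the supporting concentration result is in hand.
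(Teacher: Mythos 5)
Your proposal is correct and follows essentially the same route as the paper: Hölder's inequality with the bound $Q_h^t \le H$ (enforced by the algorithm's truncation), followed by Assumptions \ref{assum:prior} and \ref{assum:f} together with the event $\{\theta_h^*\in\cB_h^t\}$ and Cauchy--Schwarz in the $\Sigma_h^t$-geometry to bound $\Vert \mu_h^t(\cdot|c_h^t)-\mu_h^*(\cdot|c_h^t)\Vert_1 \le L_\mu K \beta \Vert\phi(c_h^t)\Vert_{(\Sigma_h^t)^{-1}}$, then summing over $t$ and $h$. The only difference is cosmetic: you spell out the mean-value-theorem/Cauchy--Schwarz step that the paper leaves implicit (it is the same argument as in its Corollary \ref{cor:gap-linear}).
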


Now we are ready to prove our main result, Theorem \ref{thm:mdp_regret}. By the decomposition in Lemma \ref{lm:mdp-decomp} and all previous lemmas, let $\beta=C(1+\kappa^{-1}\sqrt{K+M+d_\phi\sigma^2\log(HT)})$, and then we obtain the following upper bound for regret:
\begin{align*}
    \Reg(T,\mu^*) \leq& 4\sqrt{2TH^3\log(2HT)}\\
    &~~~~+ \sum_{t\in[T]}\sum_{h\in[H]}\bigg[ 2\rho \big\Vert \psi(s_h^t,\omega_h^t,a_h^t) \Vert_{(\Gamma_h^t)^{-1}} + \bigg(\frac{3H L_\mu K}{p_0D}+\frac{3H L_\mu K}{2}\bigg)\beta\Vert \phi(c_h^t)\Vert_{(\Sigma_h^t)^{-1}} \bigg],
\end{align*}
With the probability of the given event by Lemma \ref{lm:confidence-region} and appropriately chosen $\delta$ in previous lemmas, the above inequality holds for the probability at least $1-3H^{-1}T^{-1}$.

By Lemma \ref{lm:sum-of-phi}, we have
\begin{align*}
 \Reg(T,\mu^*) &\leq 2\sqrt{2TH^3\log(2HT)} + 2\rho H  \sqrt{2d_\psi T\log\big(1+T\Psi^2/(\lambda d_\psi)\big)}\\
 &\qquad \qquad + \beta H^2 L_\mu K \bigg( \frac{3}{p_0D}+\frac{3}{2}\bigg)  \sqrt{2d_\phi T\log\big(1+T/(d_\phi)\big)}.
\end{align*}
Since $\beta$ is in $\tilde{O}(\sqrt{d_\phi})$ and $\rho$ is in $\tilde{O}(d_\psi H)$, we can conclude that the regret of Algorithm \ref{alg:mdp} is $\tilde{O}\big(d_\phi d_\psi^{3/2}H^2 \sqrt{T}/(p_0 D)\big)$.

\subsection{Inducing Robust Equilibria via Pessimism}
One necessary prerequisite is that the signaling policy given by \ORAI{} has to be persuasive to ensure receivers to take recommended actions. However, the optimal signaling policy that is persuasive for the estimated prior can hardly be also persuasive for the true prior, even if the estimation is quite close to it. To ensure persuasiveness under the prior estimation error, we adopt pessimism principle to select a signaling policy that is robustly persuasive for all the priors in the confidence region. And we shall quantify the extra utility loss suffered by the pessimism principle. 
In this subsection, we start by showing that there exists a robust signaling scheme that suffers only $O(\epsilon)$  utility loss compared to the optimal expected utility of persuasion algorithm designed with precise knowledge of the prior.
Formally, in basic MPP, given any fixed $Q$-function $Q(\cdot,\cdot,\cdot)$, we define the \emph{robustness gap} for some state $s\in\cS$ and any prior $\mu\in \cB\subseteq \Delta(\Omega)$ as  
\begin{equation}\label{eq:defi-gap}
\Gap\big(s,\mu,\cB;Q \big)\triangleq \max_{\pi\in \Ps(\mu,u)}\big\langle Q,\mu\otimes\pi\big\rangle_{\Omega\times\cA}(s)-\max_{\pi\in \Ps(\cB,u)}\big\langle Q,\mu\otimes\pi\big\rangle_{\Omega\times\cA}(s).
\end{equation}
We let $\uB(\mu, \epsilon) = \{  \mu' \in \Delta(\Omega) : \norm{\mu - \mu'}_1 \leq \epsilon \}$ be the  $\ell_1$-norm ball centered the prior distribution $\mu$ with radius  $\epsilon$.

\begin{lemma}[Pessimism]
\label{lm:robustness-gap} 
Under $(p_0,D)$-regularity, for all $\epsilon > 0$, given a $Q$-function $Q$, for any state $s\in\cS$, we have
\begin{equation*}
     \Gap \big (s,\mu, \uB(\mu, \epsilon)  ; Q \big)  \leq \frac{H\epsilon}{p_0 D}.
\end{equation*}
\end{lemma}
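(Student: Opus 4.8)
The plan is to prove the bound constructively: starting from an optimal persuasive scheme for the true prior $\mu$, I would exhibit a single signaling policy that is simultaneously persuasive against every prior in the $\ell_1$-ball $\uB(\mu,\epsilon)$ and whose objective value under $\mu$ is within $H\epsilon/(p_0 D)$ of the optimum. The robustness budget is ``bought'' by mixing in the full-information-disclosure policy, whose strict persuasiveness margin is exactly the quantity that $(p_0,D)$-regularity controls. As a first, trivial reduction: since $Q$ is $[0,H]$-valued, every $\langle Q,\mu\otimes\pi\rangle_{\Omega\times\cA}(s)$ lies in $[0,H]$, and because $\Ps(\uB(\mu,\epsilon),u)$ is non-empty (it contains the full-information policy) the second maximum in \eqref{eq:defi-gap} is $\ge 0$; hence $\Gap(s,\mu,\uB(\mu,\epsilon);Q)\le H$, so the claimed bound holds automatically once $\epsilon\ge p_0 D$. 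From now on assume $\epsilon< p_0 D$.

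Next I would fix $\pi^*\in\Ps(\mu,u)$ attaining the first maximum in \eqref{eq:defi-gap}, and let $\pi^{\mathrm{sep}}$ be the full-information-disclosure policy: at every state $s$ and outcome $\omega$, $\pi^{\mathrm{sep}}(\cdot\mid s,\omega)$ puts all mass on some best response $a^\star\in\arg\max_a u(s,\omega,a)$. Two facts drive the argument. (a) $\pi^{\mathrm{sep}}$ is persuasive under every prior, because on the support of $\pi^{\mathrm{sep}}(a\mid s,\cdot)$ we have $u(s,\omega,a)-u(s,\omega,a')\ge0$; in particular $\pi^{\mathrm{sep}}\in\Ps(\uB(\mu,\epsilon),u)$. (b) Since $D>0$, each $\omega\in\cW_{s,a}(D)$ has $a$ as its unique best response, so $\pi^{\mathrm{sep}}$ recommends $a$ on all of $\cW_{s,a}(D)$; combined with $(p_0,D)$-regularity (which holds at every state) this yields, for every state $s$ and every $a\ne a'$,
\[
\int_{\Omega}\mu(\omega)\,\pi^{\mathrm{sep}}(a\mid s,\omega)\,\big[u(s,\omega,a)-u(s,\omega,a')\big]\,\ud\omega\ \ge\ D\,\PP_{\omega\sim\mu}\big[\omega\in\cW_{s,a}(D)\big]\ \ge\ Dp_0,
\]
i.e. $\pi^{\mathrm{sep}}$ is persuasive under $\mu$ with a uniform margin of at least $Dp_0$ over all states and ordered action pairs.

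Now set $\tilde\pi\coloneqq(1-\lambda)\pi^*+\lambda\pi^{\mathrm{sep}}$ with $\lambda\coloneqq\epsilon/(p_0D)\in(0,1)$, and check $\tilde\pi\in\Ps(\uB(\mu,\epsilon),u)$. Fix any $\mu'$ with $\norm{\mu-\mu'}_1\le\epsilon$, any state $s$, and any $a\ne a'$, and write $g(\omega)\coloneqq\tilde\pi(a\mid s,\omega)\,[u(s,\omega,a)-u(s,\omega,a')]$, so $\norm{g}_\infty\le1$ since $u\in[0,1]$. By linearity of the mixture, persuasiveness of $\pi^*$ under $\mu$, and fact (b), $\int_\Omega\mu(\omega)g(\omega)\,\ud\omega\ge(1-\lambda)\cdot0+\lambda\cdot Dp_0=\epsilon$; replacing $\mu$ by $\mu'$ changes this integral by at most $\norm{\mu-\mu'}_1\norm{g}_\infty\le\epsilon$, so $\int_\Omega\mu'(\omega)g(\omega)\,\ud\omega\ge0$, which is the persuasiveness constraint of $\tilde\pi$ for $(s,a,a')$ under $\mu'$. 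As $s,a,a',\mu'$ were arbitrary, $\tilde\pi\in\Ps(\uB(\mu,\epsilon),u)$.

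Finally, feasibility of $\tilde\pi$ for the second maximum in \eqref{eq:defi-gap}, bilinearity of $\langle Q,\mu\otimes\cdot\rangle_{\Omega\times\cA}(s)$ in its second argument, and $Q\in[0,H]$ give
\begin{align*}
\Gap\big(s,\mu,\uB(\mu,\epsilon);Q\big) &\le \langle Q,\mu\otimes\pi^*\rangle_{\Omega\times\cA}(s)-\langle Q,\mu\otimes\tilde\pi\rangle_{\Omega\times\cA}(s)\\
&= \lambda\Big(\langle Q,\mu\otimes\pi^*\rangle_{\Omega\times\cA}(s)-\langle Q,\mu\otimes\pi^{\mathrm{sep}}\rangle_{\Omega\times\cA}(s)\Big)\\
&\le \lambda H\ =\ \frac{H\epsilon}{p_0D},
\end{align*}
where the last inequality uses that both bracketed terms are expectations of the $[0,H]$-valued function $Q$. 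I expect the main obstacle to be exactly the persuasiveness verification in the mixture step: one must be careful that the margin $Dp_0$ supplied by $\pi^{\mathrm{sep}}$ is genuinely uniform over all states and ordered action pairs and is large enough to absorb the worst-case $\ell_1$-drift to an arbitrary $\mu'\in\uB(\mu,\epsilon)$ — this is precisely what forces the choice $\lambda=\epsilon/(p_0D)$ and is the only place where $(p_0,D)$-regularity is used.
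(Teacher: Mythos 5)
Your proof is correct and follows essentially the same route as the paper's: mix the optimal persuasive scheme with the full-revelation scheme at weight $\epsilon/(p_0 D)$, use $(p_0,D)$-regularity to give the full-revelation component a uniform persuasiveness margin of $p_0 D$, and let that margin absorb the worst-case $\ell_1$-perturbation of the prior, with the utility loss bounded by the mixture weight times $H$. Your preliminary reduction to the case $\epsilon < p_0 D$ is a small extra care (keeping the mixture weight in $[0,1]$) that the paper's write-up glosses over, but it does not change the argument.
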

The proof is given in Appendix \ref{sec:robustness-gap}. This result extends Proposition 1 in \cite{10.1145/3465456.3467593}. Notice that the upper bound of $\Gap (\cdot; \cdot )$ does not depend on the value of $Q$, which is important for our analysis. Once given a signaling algorithm, at each episode $t\in[T]$ and each step $h\in[H]$, we are able to obtain an estimation of $Q$-function with an explicit form. It is equivalent to the ``known'' $Q$-function mentioned in equation \eqref{eq:defi-gap}. Using $\Gap (\cdot; \cdot )$, we can estimate the expected sender's utility loss for choosing a signaling mechanism that is persuasive for all priors in a subset. Moreover, if we consider the dependence on context for priors and add the linear assumption of priors to the proceeding lemma, we can bound $\Gap (\cdot; \cdot )$ by the difference of linearity parameter $\theta$. 
\begin{corollary}\label{cor:gap-linear}
Under $(p_0,D)$-regularity and Assumption \ref{assum:prior} and \ref{assum:f}, given a $Q$-function $Q$ and \cstate{} $c$, for any state $s\in\cS$, prior $\mu_{\theta}(\cdot | c)$ and confidence region $\cB = \{ \mu_{\theta'}( \cdot | c) :\theta' \in \uB_{\Sigma}(\theta, \epsilon) \}$, we have
$ \Gap(s,\mu_{\theta}(\cdot|c), \cB;Q) \leq  H L_{\mu}K \Vert\phi(c)\Vert_{\Sigma^{-1}}  \epsilon / (p_0 D).$

\end{corollary}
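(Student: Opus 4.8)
\textbf{Proof plan for Corollary \ref{cor:gap-linear}.}

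The plan is to reduce this statement to Lemma \ref{lm:robustness-gap} (Pessimism) by translating the confidence region on the linear parameter $\theta$ into an $\ell_1$-ball on the prior distribution itself, and then invoking the already-proven bound $\Gap(s,\mu,\uB(\mu,\epsilon);Q) \leq H\epsilon/(p_0 D)$. The only real work is to establish the containment $\cB \subseteq \uB\big(\mu_\theta(\cdot\,|\,c),\, L_\mu K \Vert\phi(c)\Vert_{\Sigma^{-1}}\epsilon\big)$ in the $\ell_1$ sense; the robustness gap is monotone in the radius of the ball of priors (a larger feasible prior set $\cB$ can only shrink $\max_{\pi\in\Ps(\cB,u)}\langle Q,\mu\otimes\pi\rangle$, hence only increase the gap), so this containment immediately yields the claimed inequality.

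First I would fix an arbitrary $\theta' \in \uB_\Sigma(\theta,\epsilon)$, i.e. $\Vert\theta'-\theta\Vert_\Sigma \leq \epsilon$, and bound $\Vert \mu_{\theta'}(\cdot\,|\,c) - \mu_\theta(\cdot\,|\,c)\Vert_1$. By Assumption \ref{assum:prior}, this is at most $L_\mu \,\big| f(\phi(c)^\top\theta') - f(\phi(c)^\top\theta)\big|$. Next, by the mean value theorem applied to $f$ together with the derivative bound $|f'(z)| \leq K$ from Assumption \ref{assum:f} (valid since $|\phi(c)^\top\theta|, |\phi(c)^\top\theta'| \leq \Phi L_\theta$ under the norm assumptions $\Vert\phi(c)\Vert \leq \Phi$, $\Vert\theta\Vert,\Vert\theta'\Vert\leq L_\theta$), we get $\big|f(\phi(c)^\top\theta') - f(\phi(c)^\top\theta)\big| \leq K\,\big|\phi(c)^\top(\theta'-\theta)\big|$. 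Finally I would apply the Cauchy--Schwarz inequality in the $\Sigma$-weighted inner product: $\big|\phi(c)^\top(\theta'-\theta)\big| = \big|\langle \phi(c), \theta'-\theta\rangle\big| \leq \Vert\phi(c)\Vert_{\Sigma^{-1}} \Vert\theta'-\theta\Vert_\Sigma \leq \Vert\phi(c)\Vert_{\Sigma^{-1}}\,\epsilon$. Chaining these three bounds gives $\Vert\mu_{\theta'}(\cdot\,|\,c) - \mu_\theta(\cdot\,|\,c)\Vert_1 \leq L_\mu K \Vert\phi(c)\Vert_{\Sigma^{-1}}\epsilon$ for every $\theta'\in\uB_\Sigma(\theta,\epsilon)$, hence $\cB \subseteq \uB\big(\mu_\theta(\cdot\,|\,c),\, L_\mu K \Vert\phi(c)\Vert_{\Sigma^{-1}}\epsilon\big)$.

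To close, I would note the monotonicity of $\Gap$: if $\cB \subseteq \cB'$, then $\Ps(\cB',u) \subseteq \Ps(\cB,u)$, so the second maximum in the definition \eqref{eq:defi-gap} is taken over a smaller set for $\cB'$, giving $\Gap(s,\mu,\cB;Q) \leq \Gap(s,\mu,\cB';Q)$. Taking $\cB' = \uB\big(\mu_\theta(\cdot\,|\,c),\, L_\mu K \Vert\phi(c)\Vert_{\Sigma^{-1}}\epsilon\big)$ and applying Lemma \ref{lm:robustness-gap} with radius $L_\mu K \Vert\phi(c)\Vert_{\Sigma^{-1}}\epsilon$ yields $\Gap(s,\mu_\theta(\cdot\,|\,c),\cB;Q) \leq H L_\mu K \Vert\phi(c)\Vert_{\Sigma^{-1}}\epsilon/(p_0 D)$, as desired. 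I do not anticipate a genuine obstacle here — the argument is a routine composition of Lipschitz/MVT estimates with Cauchy--Schwarz — but the one point requiring mild care is verifying that the arguments of $f$ stay within the range $|z| \leq \Phi L_\theta$ where the derivative bound of Assumption \ref{assum:f} applies, which is exactly why the norm bounds $\Vert\phi(c)\Vert\leq\Phi$ and $\Vert\theta_h^*\Vert\leq L_\theta$ (and the projection $\Vert\theta_h^t\Vert\leq L_\theta$ in the least-squares step) were imposed.
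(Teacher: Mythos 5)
Your proposal is correct and follows essentially the same route as the paper: chain Assumption \ref{assum:prior}, the derivative bound on $f$ from Assumption \ref{assum:f}, and Cauchy--Schwarz in the $\Sigma$-weighted norm to show $\cB \subseteq \uB\big(\mu_\theta(\cdot|c), L_\mu K \Vert\phi(c)\Vert_{\Sigma^{-1}}\epsilon\big)$, then invoke Lemma \ref{lm:robustness-gap}. Your explicit verification of the monotonicity of $\Gap$ in the prior set and of the domain condition $|z|\leq \Phi L_\theta$ for the mean value theorem only makes explicit steps the paper leaves implicit.
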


In MPPs, we have to estimate the prior of the outcome since we cannot observe the ground-truth prior. However, the estimation may not satisfy the regularity conditions, which conflicts with the requirements for the prior when proving Lemma \ref{lm:robustness-gap}. To address this problem, we give another upper bound of the robustness gap for the prior estimation in Lemma \ref{lm:gap-for-est}. In addition, to handle the regret loss incurred by estimating the prior, we compute the difference in $Q$-functions when choosing respectively persuasive scheme for different priors in Lemma \ref{lm:opt-gap}.   

We now prove  that the above pessimism design guarantees persuasiveness w.r.t. the true prior with high probability. And it suffices to show that the estimation $\theta_h^t$ is close enough to the real parameter $\theta_h^*$ such that the confidence region $\cB_h^t$ centered at $\theta_h^t$ given in Algorithm \ref{alg:mdp} contains $\theta_h^*$. If so, the signaling scheme chosen to be persuasive for the whole set $\mu_{\cB_h^t}$ is also persuasive for $\mu_h^*$, where $\mu_\cB \coloneqq \{ \mu_{\theta'}: \theta' \in \cB\}$ denotes the set of priors that are determined by the parameters $\theta'\in \cB$.

\begin{lemma}\label{thm:mdp-persuasive}
    There exists a constant $C>0$, such that for $\beta=C(1+\kappa^{-1}\sqrt{K+M+d_\phi\sigma^2\log(HT)})$, \ORAI{} Algorithm is persuasive with probability at least $1-H^{-1}T^{-1}$, i.e.,
	\begin{equation*}
	\PP_{\theta^*}\bigg(\bigcup_{h\in[H]}\big\{ \theta_h^* \notin \cap_{t \in [T]}\cB_h^t\big\} \bigg) \leq H^{-1}T^{-1}.
	\end{equation*}
\end{lemma}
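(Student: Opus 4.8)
The plan is to prove that for each step $h$, the constrained least-squares estimator $\theta_h^t$ concentrates around $\theta_h^*$ in the appropriate weighted norm, so that $\theta_h^* \in \cB_h^t$ for all $t$ simultaneously with high probability; a union bound over $h\in[H]$ then yields the claim. First I would recall the structure of the estimator: $\theta_h^t$ minimizes $\sum_{\tau\in[t-1]}[\omega_h^\tau - f(\phi(c_h^\tau)^\top\theta_h)]^2$ over the ball $\{\|\theta_h\|\le L_\theta\}$, and $\omega_h^\tau = f(\phi(c_h^\tau)^\top\theta_h^*) + z_h^\tau$ with $z_h^\tau$ independent zero-mean $\sigma$-sub-Gaussian noise. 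This is exactly the generalized linear model setting, so I would invoke the standard GLM confidence-ellipsoid analysis (as in \citet{filippi2010parametric,li2017provably,wang2019optimism}): using the first-order optimality conditions and the mean-value theorem, together with the lower bound $|f'|\ge\kappa$ and upper bounds $|f'|\le K$, $|f''|\le M$ from Assumption \ref{assum:f}, one controls the deviation $\|\theta_h^t-\theta_h^*\|_{\Sigma_h^t}$ by a self-normalized martingale term $\|\sum_{\tau<t}\phi(c_h^\tau)z_h^\tau\|_{(\Sigma_h^t)^{-1}}$ plus bias terms arising from curvature. Here $\Sigma_h^t = \Phi^2 I + \sum_{\tau<t}\phi(c_h^\tau)\phi(c_h^\tau)^\top$ is the regularized Gram matrix — and the presence of the $\Phi^2 I$ term is what makes $\Sigma_h^t$ uniformly lower-bounded, avoiding any extra regularity assumption on the context sequence (as remarked after Theorem 4.2).

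Next I would apply the self-normalized concentration inequality for vector-valued martingales (Abbasi-Yadkori–Pál–Szepesvári) to the sum $\sum_{\tau<t}\phi(c_h^\tau)z_h^\tau$: with probability at least $1-\delta'$, uniformly over all $t$, its squared $(\Sigma_h^t)^{-1}$-norm is $O\big(\sigma^2(d_\phi\log(1+t\Phi^2/d_\phi) + \log(1/\delta'))\big)$. Combining this with the $\kappa$-strong-monotonicity of $z\mapsto f(z)$ and the $M$-Lipschitz derivative (the curvature/bias contribution scales with $K$ and $M$), I would show $\|\theta_h^t-\theta_h^*\|_{\Sigma_h^t} \le \beta$ for the stated $\beta = C(1+\kappa^{-1}\sqrt{K+M+d_\phi\sigma^2\log(HT)})$, where the $\log(HT)$ inside the square root comes from setting $\delta' = \Theta(1/(H^2 T))$ so that the union bound over $h\in[H]$ (and the implicit union over $t$ handled by the anytime self-normalized bound) gives total failure probability at most $H^{-1}T^{-1}$. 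This is precisely the confidence-region construction referenced as Lemma \ref{lm:confidence-region}. Since $\cB_h^t = \uB_{\Sigma_h^t}(\theta_h^t,\beta)$, the event $\{\theta_h^*\in\cB_h^t\}$ is exactly $\{\|\theta_h^t-\theta_h^*\|_{\Sigma_h^t}\le\beta\}$, and because the self-normalized bound is anytime-valid, the event holds for all $t\in[T]$ at once, giving $\theta_h^*\in\cap_{t\in[T]}\cB_h^t$.

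The main obstacle is getting the constant $C$ and the dependence on $K,M,\kappa$ right in the curvature step: one must carefully split the error between the stochastic (noise) part and the deterministic bias part that arises because $f$ is nonlinear, and then show the bias part is dominated once the noise-driven radius is large enough — this is the delicate part of GLM analysis, as the naive bound can have $\kappa$ in the wrong place. A secondary point is bookkeeping the union: the self-normalized inequality already yields a uniform-in-$t$ statement, so I only need an explicit union bound over the $H$ steps, which costs a $\log H$ factor absorbed into $\beta$. Once $\theta_h^*\in\cB_h^t$ for all $h,t$, persuasiveness is immediate: the algorithm chooses $\pi_h^t\in\Ps(\mu_{\cB_h^t},u_h) = \cap_{\theta\in\cB_h^t}\Ps(\mu_\theta,u_h) \subseteq \Ps(\mu_{\theta_h^*},u_h)$, so the executed policy is persuasive with respect to the true prior on the complementary high-probability event, which is what the lemma asserts.
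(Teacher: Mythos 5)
Your proposal is correct and follows essentially the same route as the paper: reduce persuasiveness to the event $\theta_h^*\in\cap_{t\in[T]}\cB_h^t$ for all $h$, invoke the GLM confidence-region concentration for the constrained least-squares estimator (which the paper imports as Lemma \ref{lm:confidence-region}, citing Lemma 6 of \citet{wang2019optimism}, and which you sketch via the self-normalized martingale argument), and choose the failure probability so that the union bound over steps and episodes gives $H^{-1}T^{-1}$. The only cosmetic difference is bookkeeping: you use the anytime-valid self-normalized bound and union only over $h$, while the paper sets $\delta=H^{-2}T^{-2}$ and unions over both $t$ and $h$; both yield the stated $\beta$ and failure probability.
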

\begin{proof}
We first analyze the probability for being non-persuasive. For any $\Vert\theta_h^*\Vert\leq L_\theta$, using the union bound, we have
    \begin{align*}
        P_{\theta^*}\bigg(\bigcup_{t\in[T],h\in[H]}\big\{\theta_h^* \notin \cap_{t \in [T]}\cB_h^t\big\}\bigg) &\leq	\sum_{t\in[T]}\sum_{h\in[H]}P_{\theta_h^*}\big(\theta_h^* \notin \cap_{t \in [T]}\cB_h^t\big)\\
        &\leq \sum_{t\in[T]} \sum_{h\in[H]}  P_{\theta_h^*} \big (\Vert \theta_h^t-\theta_h^*\Vert_{\Sigma_h^t}>\beta \big ).
    \end{align*}
  
 The following lemma gives the belief of confidence region for the linear parameter $\theta_h^*$. The proof can be directly derived from Lemma 6 in \citet{wang2019optimism}.   
\begin{lemma}[Belief of Confidence Region]\label{lm:confidence-region}
For any $t\in[T]$ and $h\in[H]$, there exists a constant $C>0$, such that for $\beta=C(1+\kappa^{-1}\sqrt{K+M+d_\phi\sigma^2\log(1/\delta)})$, given $\delta\in(0,1)$, with probability at least $1-\delta$, we have
$\Vert \theta_h^t-\theta_h^*\Vert_{\Sigma_h^t}\leq  \beta. $
  
\end{lemma}

	By Lemma \ref{lm:confidence-region}, taking  $\delta=H^{-2}T^{-2}$, then we have $\PP_{\theta^*}(\Vert \theta_h^t-\theta_h^*\Vert_{\Sigma_h^t}>\beta) \leq H^{-2}T^{-2}$. Summing up the failure  probabilities over $t \in [T]$, we have  $	\PP_{\theta^*}(\theta^* \notin \cap_{t \in [T]}\cB^t) \leq H^{-1}T^{-1}$.
\end{proof}

\section{Conclusion}
We have presented a novel model, the MPP, which captures the misaligned incentives of uninformed decision makers and the long-term objective of an information possessor for the first time. We then provide a  reinforcement learning algorithm, \ORAI{}, that is provably efficient in terms of both computational complexity and sample complexity, under mild assumptions. We remark that while we showcase this algorithm in particular problem instances with linear approximation or GLMs, the framework of \ORAI{} does not rely on the function approximation form, as long as we can quantify the uncertainty of the prior estimation and $Q$-function (or transition model). In addition, we expect this optimism-pessimism design principle and its corresponding proof techniques to be generally useful for some other strategic learning problems with misaligned incentives involved.

Besides extending our techniques to other design problems, we point out that several other open problems arises from our work. First, while it is natural that the sender have knowledge of receiver's utility functions in many cases (see Footnote \ref{fn:receiver-payoff}), we hope to also study the problem even without initially knowing receiver's utility.
Similar problem has been studied in Stackelberg games~\cite{peng2019learning, conitzer2006computing} yet without measuring the performance in terms of the cumulative utility of sender (leader).
Second, another interesting direction is to study the setting of Markov Bayesian persuasion with one sender and one receiver, both aiming at maximizing their own long-term cumulative utilities when the environment involves Markovian transitions.

\bibliographystyle{ims} 
\bibliography{content/refer}

\newpage

\appendix
\section{Omitted Proofs and Descriptions}
\subsection{Formal Description of the \ORAI}

The formal description of the \ORAI{} for MPPs is stated as follows:
\begin{algorithm}[H]
\caption{The \ORAI{} for MPPs}
\label{alg:mdp}
\begin{algorithmic}[1] %
\STATE \textbf{Input:} Number of Episodes $T$, Number of Step $H$ 
\STATE \textbf{Parameters:}  $ \beta > 0$, $\rho > 0$, $\lambda\in \RR^+$. %
\STATE \textbf{Output:} $a_h^t \in \cA$ for each $h\in[H], t \in [T]$
\FOR{episode $t = 1\dots T$} 
\STATE Receive the initial state $s_1^t$ and \cstate{} $C^t=(c_1^t,\ldots,c_H^t)$.
\FOR{step $h=H,\ldots,1$}
\STATE Compute the constrained least square problem \begin{equation*}
\theta_h^t \gets \argmin_{\Vert \theta_h \Vert \leq L_\theta}  \sum_{\tau \in [t-1]} \big[\omega_h^\tau-f(\phi(c_h^\tau)^\top\theta_h)\big]^2.
\end{equation*}
\STATE Calculate $\Sigma_h^t=\Phi^2 I_{d_\phi}+\sum_{\tau\in[t-1]}\phi(c_h^\tau)\phi(c_h^\tau)^\top$. Update $\cB_h^{t} \gets \uB_{\Sigma_h^t}(\theta_h^t,\beta)$.
\STATE Set $\mu_h^t(\cdot\vert c)$ to the distribution of $f\big(\phi(c)^\top \theta_h^t\big)+z_h$.
\STATE Calculate \begin{align*}
\Gamma_h^t  &=\lambda I_{d_\psi}+\sum_{\tau \in [t-1]}\psi(s_h^\tau,\omega_h^\tau,a_h^\tau)\psi(s_h^\tau,\omega_h^\tau,a_h^\tau)^\top, \\
\iota_h^t &=\sum_{\tau \in [t-1]}\psi(s_h^\tau,\omega_h^\tau,a_h^\tau)[v_h^\tau+V_{h+1}^t(s_{h+1}^\tau;C^t)]
\end{align*}
\STATE Update $q_h^t\gets(\Gamma_h^t)^{-1}\iota_h^t$. %
\STATE Set $\begin{cases}
 Q_h^t(\cdot,\cdot,\cdot;C^t) \gets \min\{\psi(\cdot,\cdot,\cdot)^\top q_h^t+\rho\Vert \psi(\cdot,\cdot,\cdot)\Vert_{(\Gamma_h^t)^{-1}},H\},&  \\ V_h^t(\cdot;C^t)\gets \max_{\pi_h\in\Ps(\mu_{\cB_h^t},u_h)}\big\langle Q_h^t,\mu_h^t\otimes\pi_h\big\rangle_{\Omega\times\cA}(\cdot;C^t).&
\end{cases} $
\ENDFOR
\FOR{step $h=1,\ldots,H$}
\STATE Choose $\pi_h^t\in \arg\max_{\pi_h\in\Ps(\mu_{\cB_h^t},u_h)} \big\langle Q_h^t,\mu_h^t\otimes\pi_h\big\rangle_{\Omega\times\cA}(s_h^t;C^t)$.		\ENDFOR
\STATE Execute $\pi^t$ to sample a trajectory $\{ (s_h^{t}, \omega_h^{t}, a_h^{t}, v_h^{t}) \}_{h\in [H]}$.
\ENDFOR
\end{algorithmic}
\end{algorithm}

\subsection{Proof of Lemma \ref{lm:robustness-gap} }\label{sec:robustness-gap}
\begin{proof}

We prove with an explicit construction of a signaling scheme that is robustly persuasive for any prior in $\uB(\mu, \epsilon)$ and achieve the expected utility at least $\max_{\pi\in \Ps(\mu,u)}\big\langle Q,\mu\otimes\pi\big\rangle_{\Omega\times\cA}(s) - H\epsilon/(p_0 D) $. To simplify the notation, we omit the $s$ in $u$, $Q$ and $\cW$.

Let $\pi^* = \arg\max_{\pi \in \Ps(\mu,u)} \big\langle Q,\mu\otimes\pi\big\rangle_{\Omega\times\cA}$ be a direct scheme without loss of generality~\cite{kamenica2011bayesian}. 
For each $a \in \cA$, let $\mu_a(\cdot) \coloneqq \mu(\cdot) \odot \pi^*(a|\cdot)  $ denote the posterior of outcome (i.e., kernel \footnote{In this proof, we will directly work with the posterior without normalization (kernel) to simplify our notations and derivations, because $ \int_{\omega \in \Omega} \mu_a(\omega) \left[ u(\omega, a ) - u(\omega, a' ) \right] \geq 0 \iff \int_{\omega \in \Omega} \frac{\mu_a(\omega)}{\int_{\omega \in \Omega} \mu_a(\omega) } \left[ u(\omega, a ) - u(\omega, a' ) \right] \geq 0$. We use $\odot$ to denote the Hadamard product.}) that action $a$ is recommended by $\pi$, so the prior can be composed as $\mu(\cdot) = \sum_{a\in \cA} \mu_a(\cdot)$. Since $\pi$ is persuasive, we know $ \int_{\omega \in \Omega} \mu_a(\omega) \left[ u(\omega, a ) - u(\omega, a' ) \right] \geq 0, \forall a' \in \cA.$

Let $\pi^0$ be the fully revealing signaling scheme that always recommends (signals) the action that maximizes the receivers' utility at the realized outcome. For each $a \in \cA$, let $\eta_a(\cdot) \coloneqq \mu(\cdot) \odot \pi^0(a|\cdot)$ denote the posterior of outcome that action $a$ is recommended by $\pi^0$, so the prior can be composed as $\mu(\cdot) = \sum_{a\in \cA} \eta_a(\cdot)$. By regularity condition, we have $$ \int_{\omega \in \Omega} \eta_a(\omega) \left[ u(\omega, a ) - u(\omega, a' ) \right] \geq \int_{\omega \in \cW_{a}(D)} \eta_a(\omega) \left[ u(\omega, a ) - u(\omega, a' ) \right] \geq p_0 D,\quad \forall a'\in \cA.$$

We now show that the signaling scheme $\pi' = (1-\delta) \pi^* + \delta \pi^0$ is persuasive for any prior $\tilde{\mu} \in \uB(\mu, \epsilon)$ with $\delta = \frac{\epsilon}{p_0 D}$. One simple way to interpret this ``compound'' signaling scheme is to follow $ \pi^*$ with probability $(1-\delta)$ and follow $ \pi^0$ with probability $\delta$. Hence, given a recommended action $a$, the receiver would compute the posterior as $\mu'_a = (1-\delta)\mu_a(\omega) + \delta \eta_a(\omega)$.
Let $\mu'_a, \tilde{\mu}_a$ be the outcome posterior of $\pi'$ recommending action $a$ under the true prior $\mu$ (resp. the perturbed prior $\tilde{\mu}$). So $\mu'_a(\cdot) = \mu(\cdot) \odot \pi'(a|\cdot) $ and $\tilde{\mu}_a(\cdot) = \tilde{\mu}(\cdot) \odot \pi'(a|\cdot) $.  By definition of persuasiveness, we need to show that for any recommended action (signal from $\pi'$) $a\in \cA$, the action $a$ maximizes the receiver's utility under  $\mu'_a $. This follows from the decomposition below,
\begin{align*}
    & \int_{\omega \in \Omega} \tilde{\mu}_a \cdot \left[
     u(\omega, a ) - u(\omega, a' ) \right] \\
 \geq & \int_{\omega \in \Omega} \mu'_a \cdot \left[ u(\omega, a ) - u(\omega, a' ) \right]  - \norm{ \tilde{\mu}_a - \mu'_a }_1  \\
 \geq & \int_{\omega \in \Omega} \left[ (1-\delta)\mu_a(\omega) + \delta \eta_a(\omega) \right] \cdot \left[ u(\omega, a ) - u(\omega, a' ) \right]  - \norm{ \tilde{\mu}_a - \mu_a }_1  \\
  = & \int_{\omega \in \Omega} (1-\delta)\mu_a(\omega) \left[ u(\omega, a ) - u(\omega, a' ) \right] + \int_{\omega \in \Omega} \delta \eta_a(\omega) \left[ u(\omega, a ) - u(\omega, a' ) \right]   - \norm{ \tilde{\mu}_a - \mu_a }_1  \\
\geq    & \ \delta p_0 D - \norm{ \tilde{\mu}_a - \mu_a }_1   \\
=    & \ \epsilon - \norm{ \tilde{\mu}_a - \mu_a }_1 \geq  0.  
\end{align*}
The first inequality is by the fact that $u(\omega, a)\in [0, 1]$ for any $\omega, a$ and thus $\sum_{a} ( \tilde{\mu}_a - \mu'_a ) \cdot \left[ u(\omega, a ) - u(\omega, a' ) \right] \leq \norm{ \tilde{\mu}_a - \mu'_a }_1  $. The second inequality is from $\mu'_a = (1-\delta)\mu_a(\omega) + \delta \eta_a(\omega)$. The third inequality is by construction of $\mu_a$ and $\eta_a$ induced by signaling scheme $\pi$ and $\pi^0$. The last inequality is by the fact that $\norm{ \tilde{\mu}_a - \mu'_a }_1 = \norm{ (\tilde{\mu} - \mu')\odot \pi'(a|\cdot) }_1 \leq \norm{ \tilde{\mu} - \mu' }_1 = \epsilon$, since $\norm{\pi'(a|\cdot) }_{\infty} \leq 1$

It remains to show the expected utility under signaling scheme $\pi'$ is at least $\big\langle Q,\mu\otimes\pi^*\big\rangle_{\Omega\times\cA}- H\epsilon/(p_0 D) $. This is due to the following inequalities,
\begin{align*}
    \big\langle Q,\mu\otimes\pi'\big\rangle_{\Omega\times\cA} - \big\langle Q,\mu\otimes\pi^*\big\rangle_{\Omega\times\cA}
    & = \int_{\omega \in \Omega, a\in \cA} \mu(\omega) \left[ \pi'(a|\omega) - \pi^*(a|\omega) \right] Q(\omega, a)  \\
     & = \int_{\omega \in \Omega, a\in \cA} \mu(\omega) \left[ \delta \pi^0(a|\omega) -\delta\pi^*(a|\omega)  \right] Q(\omega, a) \\
     & \geq -\delta \int_{\omega \in \Omega, a\in \cA} \mu(\omega) \pi(a|\omega) Q(\omega, a)  \\
     & \geq -H\delta = - \frac{H\epsilon}{p_0 D}.
\end{align*}
The first and second equalities use the definition and linearity. The third and last inequalities use the fact that $\EE[Q(\omega,a)]\in [0, H]$ and remove the positive term.
\end{proof}

\subsection{Properties for the Robustness Gap}\label{sec:prop-gap}

We present the robustness gap $\Gap$ for the ground-truth prior in Lemma \ref{lm:robustness-gap}. For the estimation of prior $\mu_h^t$ given in Algorithm \ref{alg:mdp} which may not satisfy the regularity condition, we also have corresponding robustness gap.
\begin{lemma}\label{lm:gap-for-est}
For any $h\in[H], t\in[T]$ and $s\in\cS$, on the event of $\{\theta_h^*\in \cB_h^t\}$, we have
\begin{equation*}
    \Gap(s,\mu_h^t,\uB(\mu_h^t,\epsilon_h^t);Q_h^t) \leq \frac{2H\epsilon}{p_0 D}.
\end{equation*}
\end{lemma}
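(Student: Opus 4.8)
The plan is to reduce to Lemma~\ref{lm:robustness-gap} after noting that, on the event $\{\theta_h^*\in\cB_h^t\}$, the estimated prior $\mu_h^t$ --- which carries no regularity guarantee on its own --- inherits a slightly weakened $(p_0-\epsilon_h^t,D)$-regularity that is still enough to run the pessimism argument. First I would control the prior estimation error: on $\{\theta_h^*\in\cB_h^t\}=\{\Vert\theta_h^t-\theta_h^*\Vert_{\Sigma_h^t}\le\beta\}$, and since $|\phi(c_h^t)^\top\theta|\le\Phi L_\theta$ for both $\theta=\theta_h^t$ and $\theta=\theta_h^*$, Assumptions~\ref{assum:prior} and~\ref{assum:f} (the latter making $f$ $K$-Lipschitz on that range) give
\begin{align*}
\big\Vert\mu_h^t(\cdot\vert c_h^t)-\mu_h^*(\cdot\vert c_h^t)\big\Vert_1
&\le L_\mu\big\vert f(\phi(c_h^t)^\top\theta_h^t)-f(\phi(c_h^t)^\top\theta_h^*)\big\vert\\
&\le L_\mu K\,\Vert\phi(c_h^t)\Vert_{(\Sigma_h^t)^{-1}}\,\Vert\theta_h^t-\theta_h^*\Vert_{\Sigma_h^t}\;\le\;\epsilon_h^t,
\end{align*}
which is precisely the confidence radius $\epsilon_h^t=L_\mu K\beta\Vert\phi(c_h^t)\Vert_{(\Sigma_h^t)^{-1}}$ set in Algorithm~\ref{alg:mdp} (and reduces to the counting bound $\epsilon_h^t=\tilde{O}(1/\sqrt{t})$ in the tabular case); in particular $\mu_h^*\in\uB(\mu_h^t,\epsilon_h^t)$.

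Next I would transfer regularity from $\mu_h^*$ to $\mu_h^t$. Since the probability of any event changes by at most the $\ell_1$ distance between the two priors, for every action $a\in\cA$ and every state $s$,
\begin{equation*}
\PP_{\omega\sim\mu_h^t(\cdot\vert c_h^t)}\big[\omega\in\cW_{s,a}(D)\big]\;\ge\;\PP_{\omega\sim\mu_h^*(\cdot\vert c_h^t)}\big[\omega\in\cW_{s,a}(D)\big]-\big\Vert\mu_h^t(\cdot\vert c_h^t)-\mu_h^*(\cdot\vert c_h^t)\big\Vert_1\;\ge\;p_0-\epsilon_h^t,
\end{equation*}
using $(p_0,D)$-regularity of the ground-truth prior $\mu_h^*$. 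Hence $\mu_h^t$ satisfies the regularity condition with the degraded pair $(p_0-\epsilon_h^t,D)$. The proof of Lemma~\ref{lm:robustness-gap} invokes the prior's regularity only through the inequality $\PP_{\omega\sim\mu}[\omega\in\cW_{s,a}(D)]\ge p_0$ --- it is used to lower bound by $p_0 D$ the persuasiveness slack supplied by the fully-revealing ``insurance'' scheme $\pi^0$, and then one mixes $\pi'=(1-\delta)\pi^*+\delta\pi^0$ with $\delta$ chosen so that $\delta\cdot(\text{slack})$ dominates the $\ell_1$ perturbation of the ball. Rerunning that argument verbatim with $\mu_h^t$ in place of $\mu$, $p_0-\epsilon_h^t$ in place of $p_0$, $\delta=\epsilon_h^t/((p_0-\epsilon_h^t)D)$, and the same $Q$-value range $Q_h^t\in[0,H]$ already used there, yields
\begin{equation*}
\Gap\big(s,\mu_h^t,\uB(\mu_h^t,\epsilon_h^t);Q_h^t\big)\;\le\;H\delta\;=\;\frac{H\epsilon_h^t}{(p_0-\epsilon_h^t)D}.
\end{equation*}

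Finally, a short case split recovers the stated constant. If $\epsilon_h^t\le p_0/2$ then $p_0-\epsilon_h^t\ge p_0/2$, so the bound above is at most $2H\epsilon_h^t/(p_0 D)$. If instead $\epsilon_h^t>p_0/2$, then $2H\epsilon_h^t/(p_0 D)>H/D\ge H$ (utilities lie in $[0,1]$, forcing $D\le1$), while $\Gap(s,\mu_h^t,\uB(\mu_h^t,\epsilon_h^t);Q_h^t)\le H$ unconditionally --- the subtracted maximum is over a set containing the fully-revealing scheme and is nonnegative, and the other maximum is at most $H$ since $Q_h^t\le H$ --- so the claim holds trivially. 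The one delicate point, and the step I expect to be the main obstacle, is the ordering: the degradation $p_0\mapsto p_0-\epsilon_h^t$ must be done \emph{before} invoking the convex-combination construction, because $\mu_h^t$ itself has no $\cW_{s,a}(D)$-mass guarantee and the insurance-scheme slack has to be estimated against a prior for which that mass is controlled. The alternative route --- re-centering the construction at $\mu_h^*$ and translating the objective back to $\mu_h^t$ --- also goes through but accumulates extra $\Vert\mu_h^t-\mu_h^*\Vert_1$ terms and only gives a strictly worse constant.
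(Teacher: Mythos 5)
Your proposal is correct and follows essentially the same route as the paper's proof: transfer the $(p_0,D)$-regularity from $\mu_h^*$ to $\mu_h^t$ at the cost of an $\epsilon_h^t$ degradation in the mass lower bound, invoke the construction of Lemma~\ref{lm:robustness-gap} with the degraded constant, and handle the case $\epsilon_h^t>p_0/2$ trivially since the gap is at most $H$. The only cosmetic difference is that the paper plugs in $p_0/2$ directly on the event $\epsilon_h^t\le p_0/2$, whereas you carry $p_0-\epsilon_h^t$ through and bound it by $p_0/2$ afterwards, which yields the same constant.
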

\begin{proof}
For any fixed action $a \in\cA$, on the given event, we have 
\begin{align*}
    \PP_{\omega\sim\mu_h^t(\cdot)} [\omega \in \cW_{s,a}(D)]&=\int_{\omega\in\Omega} \mu_h^t(\omega) \II (\omega \in \cW_{s,a}(D)) \ud \omega \\
    &= \int_{\omega\in\Omega} \mu_h^*(\omega ) \II (\omega \in \cW_{s,a}(D)) \ud \omega + \int_{\omega\in\Omega} [\mu_h^t(\omega )-\mu_h^*(\omega )] \II (\omega \in \cW_{s,a}(D)) \ud \omega \\
    &\geq \int_{\omega\in\Omega} \mu_h^*(\omega ) \II (\omega \in \cW_{s,a}(D)) \ud \omega + \Vert \mu_h^t-\mu_h^*\Vert_1\\
    &\geq p_0- \epsilon_h^t,
\end{align*}
where $\II$ is the indicating function. The last inequality uses the regularity condition for the real prior $\mu_h^*$. For $\epsilon_h^t \leq p_0/2$, we have $\PP_{\omega\sim\mu_h^t(\cdot )} [\omega \in \cW_{s,a}]\leq p_0/2$. Then by Lemma \ref{lm:robustness-gap}, we can arrive at
\begin{equation*}
    \Gap(s,\mu_h^t,\uB(\mu_h^t,\epsilon_h^t);Q_h^t) \leq \frac{2H\epsilon_h^t}{p_0 D}.
\end{equation*}
For $\epsilon_h^t > p_0/2$, the bound holds trivially since $2H\epsilon_h^t/(p_0 D)>H$.
\end{proof}

The robustness gap $\Gap$ defined in equation \eqref{eq:defi-gap} measures the loss in value functions for being robustly persuasive for a subset of priors. In the following lemma, we show that we can also use $\Gap$ to bound the difference in expected optimal $Q$-functions between different priors.
\begin{lemma}\label{lm:opt-gap}
Denote $\cB_{1,2} \coloneqq \uB\big(\mu_1,\Vert \mu_1-\mu_2\Vert_1\big)$ for any fixed state $s\in\cS$ and $\mu_1,\mu_2\in\Delta(\Omega)$.  Then given a known $Q$-function $Q(\cdot,\cdot,\cdot)$, we have 
\begin{equation*}
    \max_{\pi_1\in \Ps(\mu_1,u)} \big\langle Q,\mu_1\otimes\pi_1\big\rangle_{\Omega\times\cA}(s)-\max_{\pi_2\in \Ps(\mu_2,u)} \big\langle Q,\mu_2\otimes\pi_2\big\rangle_{\Omega\times\cA}(s) \leq \Gap(s,\mu_1,\cB_{1,2};Q)+\frac{H}{2} \Vert\mu_1-\mu_2\Vert_1.
\end{equation*}
\end{lemma}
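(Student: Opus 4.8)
The plan is to pass through a single intermediary signaling scheme—precisely the robustly persuasive scheme that witnesses $\Gap(s,\mu_1,\cB_{1,2};Q)$—and then separately pay for swapping the prior from $\mu_1$ to $\mu_2$ inside the linear objective. Write $\epsilon\coloneqq\Vert\mu_1-\mu_2\Vert_1$, so $\cB_{1,2}=\uB(\mu_1,\epsilon)$ and, crucially, $\mu_2\in\cB_{1,2}$; hence $\Ps(\mu_{\cB_{1,2}},u)\subseteq\Ps(\mu_2,u)$. Let $\pi'\in\arg\max_{\pi\in\Ps(\mu_{\cB_{1,2}},u)}\langle Q,\mu_1\otimes\pi\rangle_{\Omega\times\cA}(s)$ be the maximizer over the robust set. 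By the definition of the robustness gap in \eqref{eq:defi-gap},
\begin{equation*}
\max_{\pi_1\in\Ps(\mu_1,u)}\big\langle Q,\mu_1\otimes\pi_1\big\rangle_{\Omega\times\cA}(s)=\Gap(s,\mu_1,\cB_{1,2};Q)+\big\langle Q,\mu_1\otimes\pi'\big\rangle_{\Omega\times\cA}(s).
\end{equation*}

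Next I would use feasibility of $\pi'$ for the second problem: since $\pi'$ is persuasive with respect to every prior in $\cB_{1,2}$ and $\mu_2\in\cB_{1,2}$, we get $\pi'\in\Ps(\mu_2,u)$, so $\max_{\pi_2\in\Ps(\mu_2,u)}\langle Q,\mu_2\otimes\pi_2\rangle_{\Omega\times\cA}(s)\ge\langle Q,\mu_2\otimes\pi'\rangle_{\Omega\times\cA}(s)$. Subtracting this from the previous display, the left-hand side of the lemma is bounded by $\Gap(s,\mu_1,\cB_{1,2};Q)+\langle Q,\mu_1\otimes\pi'\rangle_{\Omega\times\cA}(s)-\langle Q,\mu_2\otimes\pi'\rangle_{\Omega\times\cA}(s)$. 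It remains to bound the last difference, which expands to $\int_{\Omega}\big(\mu_1(\omega)-\mu_2(\omega)\big)\,g(\omega)\,\ud\omega$ where $g(\omega)\coloneqq\sum_{a\in\cA}\pi'(a\mid s,\omega)\,Q(s,\omega,a)$ satisfies $g(\omega)\in[0,H]$ because $Q\in[0,H]$ and $\pi'(\cdot\mid s,\omega)$ is a distribution over $\cA$.

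To get the constant $H/2$ rather than the naive $H$, I would exploit that $\mu_1$ and $\mu_2$ are both probability distributions, so $\int_\Omega(\mu_1-\mu_2)\,\ud\omega=0$ and therefore $\int_\Omega(\mu_1-\mu_2)g=\int_\Omega(\mu_1-\mu_2)\big(g-\tfrac{H}{2}\big)$; since $\Vert g-\tfrac H2\Vert_\infty\le\tfrac H2$, Hölder's inequality yields $\langle Q,\mu_1\otimes\pi'\rangle_{\Omega\times\cA}(s)-\langle Q,\mu_2\otimes\pi'\rangle_{\Omega\times\cA}(s)\le\tfrac{H}{2}\Vert\mu_1-\mu_2\Vert_1$, and combining with the above gives the claim. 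The only mildly delicate step is this re-centering trick for the sharp constant; everything else is bookkeeping with the definitions of $\Ps$, $\Gap$, and $\langle\cdot,\cdot\rangle_{\Omega\times\cA}$, so I do not anticipate a real obstacle.
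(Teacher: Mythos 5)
Your proposal is correct and follows essentially the same route as the paper: the paper also passes through the maximizer $\pi_3$ over $\Ps(\cB_{1,2},u)$ of $\langle Q,\mu_1\otimes\pi\rangle_{\Omega\times\cA}(s)$, uses its persuasiveness for $\mu_2\in\cB_{1,2}$ to lower-bound the second maximum, and bounds the prior-swap term exactly by your re-centering-at-$H/2$ Hölder argument (stated there as Lemma \ref{lm:prior-diff}).
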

\begin{proof}
Fix $\mu_1,\mu_2\in\Delta(\Omega)$, we respectively choose the optimal signaling scheme \begin{equation*}
    \pi_i=\argmax_{\pi_i\in \Ps(\mu_i,u)} \big\langle Q,\mu_i\otimes\pi_i\big\rangle_{\Omega\times\cA}(s), ~i=1,2.
\end{equation*}
Then among all the signaling schemes persuasive for all $\cB_{1,2}$, let $\pi_3$ maximize $\big\langle Q,\mu_1\otimes\pi\big\rangle_{\Omega\times\cA}(s)$. Since $\pi_3$ is persuasive for $\mu_2$, we know $\big\langle Q,\mu_2\otimes\pi_2\big\rangle_{\Omega\times\cA}(s) \geq  \big\langle Q,\mu_2\otimes\pi_3\big\rangle_{\Omega\times\cA}(s)$ by definition. Therefore, we have
\begin{align*}
    \big\langle Q,\mu_1\otimes\pi_1-\mu_2\otimes\pi_2\big\rangle_{\Omega\times\cA}(s)  &\leq \big\langle Q,\mu_1\otimes\pi_1-\mu_2\otimes\pi_3 \big\rangle_{\Omega\times\cA}(s) \\
    &\leq \big\langle Q,\mu_1\otimes\pi_1 - \mu_1\otimes\pi_3 \big\rangle_{\Omega\times\cA}(s)+\big\langle Q,\mu_1\otimes\pi_3 - \mu_2\otimes\pi_3 \big\rangle_{\Omega\times\cA}(s)\\
    &= \Gap(s,\mu_1,\cB_{1,2};Q)+\frac{H}{2} \Vert\mu_1-\mu_2\Vert_1.
\end{align*}
The last equality uses the definition of $\Gap$ and Lemma \ref{lm:prior-diff}. 
\end{proof}

\begin{lemma}\label{lm:prior-diff}
Given a $Q$-function $Q(\cdot,\cdot,\cdot)\in [0,H]$, for any fixed state $s\in\cS$, $\mu_1,\mu_2\in\Delta(\Omega)$ and any signaling scheme $\pi$, we have
\begin{equation*}
    \big\vert \big\langle Q,\mu_1\otimes\pi\big\rangle_{\Omega\times\cA}(s)-  \big\langle Q,\mu_2\otimes\pi\big\rangle_{\Omega\times\cA}(s) \big\vert \leq \frac{H}{2} \Vert\mu_1-\mu_2\Vert_1.
\end{equation*}
\end{lemma}
\begin{proof}
Fix $\mu_1(\cdot ),\mu_2(\cdot )\in\Delta(\Omega)$. For any $x\in \RR$, we have
\begin{align*}
\big\vert \big\langle Q,\mu_1\otimes\pi-\mu_2\otimes\pi\big\rangle_{\Omega\times\cA}(s)&= \bigg\vert \int_{\omega\in\Omega}[\mu_1(\omega )-\mu_2(\omega )]\bigg[\int_{a\in\cA}\pi(a|s,\omega)Q(s,\omega,a)\ud a-x \bigg]\ud \omega \bigg\vert\\ 
& \leq \Vert \mu_1-\mu_2\Vert_1 \cdot  \sup_{\omega\in\Omega} \bigg\vert\int_{a\in\cA}\pi(a|s,\omega)Q(s,\omega,a)\ud a-x \bigg\vert,
\end{align*}
where the last inequality is derived from Holder's inequality. With $Q$-function taking values in $[0,H]$, we can set $x=H/2$ and achieve the optimality.

\end{proof}

\subsection{Proof of Lemma \ref{lm:mdp-decomp}}\label{sec:proof-decomp}
\begin{proof}
    Before presenting the proof, we first define two operators $\mathbb{J}_h^*$ and $\mathbb{J}_h^t$:
    \begin{equation}\label{eq:defi-J}
        (\mathbb{J}_h^* f)(s;C) =\langle f,\mu_h^*\otimes\pi_h^*\rangle_{\Omega\times\cA}(s;C),~~~~(\mathbb{J}_h^t f)(s;C) = \langle f,\mu_h^t\otimes\pi_h^t\rangle_{\Omega\times\cA}(s;C),
    \end{equation}
    for any $h\in[H],t \in[T]$ and any function $f(\cdot,\cdot,\cdot;C):\cS\times\Omega\times\cA\to\RR$ under the \cstate{} $C$. Moreover, for any $h\in[H],t \in[T]$ and any state $s\in\cS$, we define
    \begin{equation}\label{eq:defi-xi}
        \xi_h^t(s;C)= (\mathbb{J}_h^* Q_h^t)(s;C)- (\mathbb{J}_h^t Q_h^t)(s;C)=
        \langle Q_h^t,\mu_h^*\otimes\pi_h^*-\mu_h^t\otimes\pi_h^t\rangle_{\Omega\times\cA}(s;C).
    \end{equation}
    After introducing these notations, we decompose the instantaneous regret at the $t$-th episode into two terms,
    \begin{equation}\label{eq:decomp}
        V_1^*(s_1^t;C^t)-V_1^{\pi^t}(s_1^t;C^t)=\underbrace{V_1^*(s_1^t;C^t)-V_1^t(s_1^t;C^t)}_{\bold{p}_1}+\underbrace{V_1^t(s_1^t;C^t)-V_1^{\pi^t}(s_1^t;C^t)}_{\bold{p}_2}.
    \end{equation}
    Then we consider these two terms separately. By the definition of value functions in (\ref{eq:policy-Bellman}) and the operator $\mathbb{J}_h^*$ in (\ref{eq:defi-J}), we have $V_h^*=\mathbb{J}_h^*Q_h^*$. By the construction of Algorithm \ref{alg:mdp}, we have $V_h^t=\mathbb{J}_h^t Q_h^t$ similarly. Thus, for the first term $\bold{p}_1$ defined in equation \eqref{eq:decomp}, using $\xi_h^t$ defined in (\ref{eq:defi-xi}), for any $h\in[H],t\in[T]$, we have
    \begin{equation*}
    \begin{split}
        V_h^*-V_h^t&=\mathbb{J}_h^*Q_h^*-\mathbb{J}_h^t Q_h^t=(\mathbb{J}_h^*Q_h^*-\mathbb{J}_h^*Q_h^t)+(\mathbb{J}_h^*Q_h^t-\mathbb{J}_h^t Q_h^t)\\
        &=\mathbb{J}_h^*(Q_h^*-Q_h^t)+\xi_h^t.
    \end{split}
    \end{equation*}
    Next, by the definition of the temporal-difference error $\delta_h^t$ in (\ref{eq:defi-delta}) and the Bellman optimality equation in equation \eqref{eq:optimal-Bellman}, we have
    \begin{equation*}
        Q_h^*-Q_h^t=(v_h+P_h V_{h+1}^*)-(v_h+P_h V_{h+1}^t-\delta_h^t)=P_h ( V_{h+1}^*-V_{h+1}^t)+\delta_h^t.
    \end{equation*}
    Hence we get 
    \begin{equation*}
        V_h^*-V_h^t=\mathbb{J}_h^*P_h ( V_{h+1}^*-V_{h+1}^t)++\mathbb{J}_h^*\delta_h^t+\xi_h^t.
    \end{equation*}
    Then, by recursively applying the above formula, we have
    \begin{align*}
        V_1^*-V_1^t= \bigg(\prod_{h\in[H]}\mathbb{J}_h^*P_h\bigg)(V_{H+1}^*-V_{H+1}^t)  +\sum_{h\in[H]}\bigg(\prod_{i\in[h]}\mathbb{J}_i^*P_i \bigg) \mathbb{J}_h^*\delta_h^t+ \sum_{h\in[H]}\bigg(\prod_{i\in[h]}\mathbb{J}_i^*P_i \bigg) \xi_h^t.
    \end{align*}
    By the definition of $\xi_h^t$ in equation \eqref{eq:defi-xi} and $\zeta_{t,h}^3$ in equation \eqref{eq:defi-zeta}, we get 
    \begin{align*}
        \sum_{h\in[H]}\bigg(\prod_{i\in[h]}\mathbb{J}_i^*P_i \bigg) \xi_h^t(s_h;C^t) = \sum_{h\in[H]}\mathbb{E}_{\mu^*,\pi^*}\big\{\big[ \langle Q_h^t,\mu_h^*\otimes\pi_h^*-\mu_h^t\otimes\pi_h^t\rangle_{\Omega\times\cA}(s_h;C^t)|s_1  =s_1^t \big]\big\}.
    \end{align*}
    Notice that $V_{H+1}^*=V_{H+1}^t=0$. Therefore, for any episode $t\in[T]$, we have
    \begin{equation*}
    \begin{split}
        V_1^*(s_1^t;C^t)-V_1^t(s_1^t;C^t)
        =&\sum_{h\in[H]}\mathbb{E}_{\mu^*,\pi^*}\big\{\big[ \langle Q_h^t,\mu_h^*\otimes\pi_h^*-\mu_h^t\otimes\pi_h^t\rangle_{\Omega\times\cA}(s_h;C^t)  |s_1 =s_1^t \big]\big\}\\
        &~~~~+\sum_{h\in[H]}\mathbb{E}_{\mu^*,\pi^*} \left[ \delta_h^t(s_h,\omega_h,a_h)  |s_1=s_1^t\right].
    \end{split}
    \end{equation*}
    Now we come to bound the second term $\bold{p}_2$ in equation \eqref{eq:decomp}. By the definition of the temporal-difference error $\delta_h^t$ in (\ref{eq:defi-delta}), for any $h\in[H],t\in[T]$, we note that
    \begin{align*}
        \delta_h^t(s_h^t,\omega_h^t,a_h^t)&=(v_h^t+P_h V_{h+1}^t-Q_h^t)(s_h^t,\omega_h^t,a_h^t;C^t)\\
        &=(v_h^t+P_h V_{h+1}^t-Q_h^{\pi^t})(s_h^t,\omega_h^t,a_h^t;C^t)+(Q_h^{\pi^t}-Q_h^t)(s_h^t,\omega_h^t,a_h^t;C^t)\\
        &=(P_h V_{h+1}^t-P_h V_{h+1}^{\pi^t})(s_h^t,\omega_h^t,a_h^t)+(Q_h^{\pi^t}-Q_h^t)(s_h^t,\omega_h^t,a_h^t).
    \end{align*}
    where the last equality follows the Bellman equation (\ref{eq:policy-Bellman}). Furthermore, using $\zeta_{t,h}^1$ and $\zeta_{t,h}^2$ defined in (\ref{eq:defi-zeta}), we have
    \begin{align*}
        V_h^t(s_h^t;&C^t)-V_h^{\pi^t}(s_h^t;C^t)\\
        =&(V_h^t-V_h^{\pi^t})(s_h^t;C^t)-\delta_h^t(s_h^t,\omega_h^t,a_h^t)+(Q_h^{\pi^t}-Q_h^t)(s_h^t,\omega_h^t,a_h^t;C^t)\\
        &\quad+(P_h V_{h+1}^t-P_h V_{h+1}^{\pi^t})(s_h^t,\omega_h^t,a_h^t;C^t)
        \\
        =&(V_h^t-\tilde{V}_h^t)(s_h^t;C^t)-\delta_h^t(s_h^t,\omega_h^t,a_h^t)+(\tilde{V}_h^t-V_h^{\pi^t})(s_h^t;C^t)+(Q_h^{\pi^t}-Q_h^t)(s_h^t,\omega_h^t,a_h^t;C^t)\\
        &\quad+\big(P_h (V_{h+1}^t- V_{h+1}^{\pi^t})\big)(s_h^t,\omega_h^t,a_h^t;C^t)-(V_{h+1}^t- V_{h+1}^{\pi^t})(s_{h+1}^t;C^t)+(V_{h+1}^t- V_{h+1}^{\pi^t})(s_{h+1}^t;C^t)
        \\
        =&\big[ V_{h+1}^t(s_{h+1}^t;C^t) - V_{h+1}^{\pi^t}(s_{h+1}^t;C^t) \big]+ \big[ V_h^t(s_h^t;C^t) - \tilde{V}_h^t(s_h^t;C^t) \big] -\delta_h^t(s_h^t,\omega_h^t,a_h^t)+ \zeta_{t,h}^1+\zeta_{t,h}^2.
    \end{align*}
    Applying the above equation recursively, we get that
    \begin{align*}
            V_1^t(s_1^t;C^t)-V_1^{\pi^t}(s_1^t;C^t)=&V_{H+1}^t(s_H^t;C^t)-V_{H+1}^{\pi^t}(s_H^t;C^t)+\sum_{h\in[H]}\big[ V_h^t(s_h^t;C^t) - \tilde{V}_h^t(s_h^t;C^t) \big]\\
            &~~~~-\sum_{h\in[H]}\delta_h^t(s_h^t,\omega_h^t,a_h^t)+\sum_{h\in[H]}(\zeta_{t,h}^1+\zeta_{t,h}^2).
    \end{align*}
    Again by Bellman equation (\ref{eq:policy-Bellman}), we have, 
    \begin{align*}
     V_h^t(s_h^t;C^t) - \tilde{V}_h^t(s_h^t;C^t)&=\big\langle Q_h^t,(\mu_h^t-\mu_h^*)\otimes\pi_h^t \big\rangle_{\Omega\times\cA}(s_h^t;C^t).
    \end{align*}
    Then we use $V_{H+1}^t=V_{H+1}^{\pi^t}=0$ to simplify the decomposition to the following form:
    \begin{align*}
        V_1^t(s_1^t;C^t)-V_1^{\pi^t}(s_1^t;C^t)
        =&\sum_{h\in[H]} \big\langle Q_h^t,(\mu_h^t-\mu_h^*)\otimes\pi_h^t \big\rangle_{\Omega\times\cA}(s_h^t;C^t)\\
        &\quad-\sum_{h\in[H]}\delta_h^t(s_h^t,\omega_h^t,a_h^t)+\sum_{h\in[H]}(\zeta_{t,h}^1+\zeta_{t,h}^2).
    \end{align*}
   Therefore, combining $\bold{p}_1$ and $\bold{p}_2$, we can conclude the proof of this lemma.
    \begin{align*}
         \Reg(T,\mu^*)=&\sum_{t\in[T]}\left[ V_1^*(s_1^t;C^t)-V_1^{\pi^t}(s_1^t;C^t)\right]\\
         =&\sum_{t\in[T]}\sum_{h\in[H]} \big\{ \mathbb{E}_{\mu_h^*,\pi_h^*} [ \delta_h^t(s_h,\omega_h,a_h)  |s_1=s_1^t]-\delta_h^t(s_h^t,\omega_h^t,a_h^t)\big\}
        +\sum_{t\in[T]}\sum_{h\in[H]}(\zeta_{t,h}^1,\zeta_{t,h}^2)\\
        &~~~~+\sum_{t\in[T]}\sum_{h\in[H]}\mathbb{E}_{\mu_h^*,\pi_h^*}\big[\big\langle Q_h^t,\mu_h^*\otimes\pi_h^*-\mu_h^t\otimes \pi_h^t \big\rangle_{\Omega\times\cA}(s_h;C^t) |s_1 =s_1^t\big]\\
        &~~~~+\sum_{t\in[T]}\sum_{h\in[H]}\big\langle Q_h^t,(\mu_h^t-\mu_h^*)\otimes\pi_h^t \big\rangle_{\Omega\times\cA}(s_h^t;C^t).
    \end{align*}
    Therefore, we conclude the proof of the lemma. 
\end{proof}

\subsection{Proof of Lemma \ref{lm:mdp-delta}}\label{sec:delta-proof}
\begin{proof}
    
In the following lemma, we firstly bound the difference between the $Q$-function maintained in Algorithm \ref{alg:mdp} (without bonus) and the real $Q$-function of any policy $\pi$ by their expected difference at next step, plus an error term. This error term can be upper bounded by our bonus with high probability. This lemma can be derived from Lemma B.4 in \cite{jin2020provably} with slight revisions.

\begin{lemma}\label{lm:value-bound}
Set $\lambda=\max\{1,\Psi^2\}$. There exists an absolute constant $c_\rho$ such that for $\rho=c_\rho d_\psi H \sqrt{\iota}$ where $\iota=\log(2d_\psi \Psi^2 T/\delta)$, and for any fixed policy $\pi$, with probability at least $1-\delta/2$, we have for all $s\in \cS$, $\omega \in \Omega$, $a \in \cA$, $h\in[H]$, $t\in[T]$,
\begin{equation*}
    \psi(s,\omega,a)^\top q_h^t -Q_h^\pi(s,\omega,a)=P_h(V_{h+1}^t-V_{h+1}^\pi)(s,\omega,a)+\triangle_h^t(s,\omega,a),
\end{equation*}
for some $\triangle_h^t(s,\omega,a)$ that satisfies $\vert \triangle_h^t(s,\omega,a) \vert \leq \rho \Vert \psi(s,\omega,a)\Vert_{(\Gamma_h^t)^{-1}}$.
\end{lemma}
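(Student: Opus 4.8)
The plan is to follow the standard LSVI-UCB concentration argument of \citet{jin2020provably} (their Lemma~B.4), adapted to the linearly parameterized MPP. The key structural fact is that the action--value function of \emph{any} signaling policy $\pi$ is linear in $\psi$: since $v_h(s,\omega,a)=\psi(s,\omega,a)^\top\gamma_h^*$ and $(P_hV_{h+1}^\pi)(s,\omega,a)=\psi(s,\omega,a)^\top\int V_{h+1}^\pi(s')\,\mathrm{d}M_h(s')$, we have $Q_h^\pi(s,\omega,a)=\psi(s,\omega,a)^\top w_h^\pi$ with $w_h^\pi=\gamma_h^*+\int V_{h+1}^\pi(s')\,\mathrm{d}M_h(s')$, and $\Vert w_h^\pi\Vert\le L_\gamma+H\sqrt{d_\psi}L_M=:B_w$ because $0\le V_{h+1}^\pi\le H$. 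So it suffices to control $\psi(s,\omega,a)^\top(q_h^t-w_h^\pi)$.

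Write $\psi^\tau=\psi(s_h^\tau,\omega_h^\tau,a_h^\tau)$. Using $q_h^t=(\Gamma_h^t)^{-1}\iota_h^t$, the identity $v_h^\tau=(\psi^\tau)^\top\gamma_h^*$ (the utility is deterministic), and $\Gamma_h^tw_h^\pi=\lambda w_h^\pi+\sum_{\tau<t}\psi^\tau Q_h^\pi(s_h^\tau,\omega_h^\tau,a_h^\tau)$ together with $Q_h^\pi=v_h+P_hV_{h+1}^\pi$, a short computation gives
\begin{equation*}
q_h^t-w_h^\pi=(\Gamma_h^t)^{-1}\Big[\sum_{\tau<t}\psi^\tau\big(V_{h+1}^t(s_{h+1}^\tau)-(P_hV_{h+1}^\pi)(s_h^\tau,\omega_h^\tau,a_h^\tau)\big)-\lambda w_h^\pi\Big].
\end{equation*}
Split the summand as $\big[V_{h+1}^t(s_{h+1}^\tau)-(P_hV_{h+1}^t)(\cdot)\big]+(\psi^\tau)^\top y$ with $y=\int(V_{h+1}^t-V_{h+1}^\pi)(s')\,\mathrm{d}M_h(s')$; since $\sum_{\tau<t}\psi^\tau(\psi^\tau)^\top y=(\Gamma_h^t-\lambda I)y$, we obtain $q_h^t-w_h^\pi=y+(\Gamma_h^t)^{-1}\sum_{\tau<t}\psi^\tau\varepsilon_\tau-\lambda(\Gamma_h^t)^{-1}(y+w_h^\pi)$, where $\varepsilon_\tau=V_{h+1}^t(s_{h+1}^\tau)-(P_hV_{h+1}^t)(s_h^\tau,\omega_h^\tau,a_h^\tau)$. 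Because $\psi^\top y=P_h(V_{h+1}^t-V_{h+1}^\pi)(s,\omega,a)$, this is exactly the claimed identity with $\triangle_h^t(s,\omega,a)=\psi^\top(\Gamma_h^t)^{-1}\sum_{\tau<t}\psi^\tau\varepsilon_\tau-\lambda\psi^\top(\Gamma_h^t)^{-1}(y+w_h^\pi)$, and by Cauchy--Schwarz together with $(\Gamma_h^t)^{-1}\preceq\lambda^{-1}I$ we get $|\triangle_h^t(s,\omega,a)|\le\Vert\psi\Vert_{(\Gamma_h^t)^{-1}}\big(\Vert\sum_{\tau<t}\psi^\tau\varepsilon_\tau\Vert_{(\Gamma_h^t)^{-1}}+\sqrt\lambda(\Vert y\Vert+B_w)\big)$, with $\Vert y\Vert\le H\sqrt{d_\psi}L_M$.

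It remains to bound $\Vert\sum_{\tau<t}\psi^\tau\varepsilon_\tau\Vert_{(\Gamma_h^t)^{-1}}$, which is the only substantive step. For a \emph{fixed} $V\colon\cS\to[0,H]$, the terms $V(s_{h+1}^\tau)-(P_hV)(s_h^\tau,\omega_h^\tau,a_h^\tau)$ form a bounded martingale-difference sequence with respect to the filtration revealing $(s_h^\tau,\omega_h^\tau,a_h^\tau)$ but not $s_{h+1}^\tau$, and $\psi^\tau$ is measurable w.r.t.\ it, so the self-normalized concentration inequality (Abbasi-Yadkori et al.) gives $\Vert\sum_{\tau<t}\psi^\tau\varepsilon_\tau\Vert_{(\Gamma_h^t)^{-1}}^2\le CH^2\big(d_\psi\log\frac{\lambda d_\psi+t\Psi^2}{\lambda d_\psi}+\log\frac1\delta\big)$. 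Since $V_{h+1}^t$ is data-dependent, I would, exactly as in \citet{jin2020provably}, union-bound over a $1/T$-net $\mathcal N$ of the class $\mathcal V$ of functions $V_{h+1}^t$ can take, and here the persuasion structure enters in a benign way. First, because $\omega_h=f(\phi(c_h)^\top\theta_h^*)+z_h$ does \emph{not} depend on the state, the estimates $\theta_{h+1}^t,\Sigma_{h+1}^t$, hence $\mu_{h+1}^t$ and $\cB_{h+1}^t$, are measurable w.r.t.\ the $\sigma$-algebra generated by all contexts and outcomes; conditioning on that $\sigma$-algebra freezes the prior and the robust persuasive set, so $\mathcal V=\{\,s\mapsto\max_{\pi\in\Ps(\mu_{\cB_{h+1}^t},u_{h+1})}\langle\min\{\psi^\top q+\rho\Vert\psi\Vert_{\Lambda^{-1}},H\},\mu_{h+1}^t\otimes\pi\rangle_{\Omega\times\cA}(s)\,\}$ is parameterized only by $q$ (with $\Vert q\Vert$ polynomially bounded) and the matrix $\Lambda^{-1}$ (in a bounded ball), carrying \emph{no} $d_\phi$ dependence. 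Second, the map $Q\mapsto\max_{\pi\in\Ps(\mu,u)}\langle Q,\mu\otimes\pi\rangle_{\Omega\times\cA}(s)$ is $1$-Lipschitz in $\Vert\cdot\Vert_\infty$ uniformly over $(s,\mu,\cB)$ — it is a pointwise supremum, over a fixed set, of linear functionals whose coefficients are nonnegative and sum to at most one — so a net for the $Q$-parameters is a net for $\mathcal V$ of the same cardinality as in the plain linear-MDP analysis, namely $\log|\mathcal N|=\tilde{O}(d_\psi^2)$. The union bound over $\mathcal N$ (and over $h\in[H]$) then yields $\Vert\sum_{\tau<t}\psi^\tau\varepsilon_\tau\Vert_{(\Gamma_h^t)^{-1}}\le C'Hd_\psi\sqrt\iota$ with $\iota=\log(2d_\psi\Psi^2T/\delta)$, on an event of probability at least $1-\delta/2$; absorbing the lower-order term $\sqrt\lambda(H\sqrt{d_\psi}L_M+B_w)$ into the constant and taking $\lambda=\max\{1,\Psi^2\}$ gives $|\triangle_h^t(s,\omega,a)|\le\rho\Vert\psi(s,\omega,a)\Vert_{(\Gamma_h^t)^{-1}}$ with $\rho=c_\rho d_\psi H\sqrt\iota$, for all $s,\omega,a$, $h\in[H]$ and $t\in[T]$.

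The main obstacle is this last step, specifically controlling the covering number of the modified value-function class $\mathcal V$: one must verify both that the outcome model genuinely decouples $\mu_{h+1}^t$ and $\cB_{h+1}^t$ from the transition randomness (so that $d_\phi$ does not leak into $\rho$) and that the value of the pessimistic persuasion program is uniformly Lipschitz in its $Q$-argument. Neither is hard once the conditioning is set up correctly, but both are essential to recover the stated $\rho=\Theta(d_\psi H\sqrt\iota)$ rather than a coarser bound.
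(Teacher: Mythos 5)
Your proposal is correct and follows exactly the route the paper intends: the paper's own ``proof'' of this lemma is a one-line citation to Lemma B.4 of \citet{jin2020provably} ``with slight revisions,'' and your argument is precisely that derivation, carried out in full. You also correctly identify and resolve the two revisions that the paper leaves implicit --- that the covering argument must condition on contexts and outcomes so that $\mu_{h+1}^t$ and $\cB_{h+1}^t$ are frozen (keeping $d_\phi$ out of $\rho$), and that the pessimistic persuasion program is $1$-Lipschitz in its $Q$-argument so the net over $(q,\Lambda)$ transfers to the value-function class --- so your write-up is, if anything, more complete than the paper's.
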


Now we are ready to prove  Lemma \ref{lm:mdp-delta}. 
By the  definition of $\delta_h^t$ in (\ref{eq:defi-delta}), we have $\delta_h^t=(v_h+P_h V_{h+1}^t-Q_h^t)=(P_h V_{h+1}^t-P_h V_{h+1}^{\pi^t})+(Q_h^{\pi^t}-Q_h^t)$. Therefore, by the construction of $Q_h^t$ in Algorithm \ref{alg:mdp}, we obtain that
\begin{align*}
    \delta_h^t(s,\omega,a)&\geq (P_h V_{h+1}^t-P_h V_{h+1}^{\pi^t})(s,\omega,a)+Q_h^{\pi^t}(s,\omega,a)-\left(\psi(s,\omega,a)^\top q_h^t+\rho \Vert \psi(s,\omega,a)\Vert_{(\Gamma_h^t)^{-1}} \right)\\
    &= -\triangle_h^t(s,\omega,a)-\rho \Vert \psi(s,\omega,a)\Vert_{(\Gamma_h^t)^{-1}}
    \geq -2\rho \Vert \psi(s,\omega,a)\Vert_{(\Gamma_h^t)^{-1}},
\end{align*}
which concludes the proof.
\end{proof}

\subsection{Proof of Lemma \ref{lm:mdp-gap}}
\begin{proof}
Denote the optimal signaling schemes corresponding to the real prior $\mu_h^*$ and the estimated prior $\mu_h^t$ respectively as
\begin{equation*}
\pi_h'=\argmax_{\pi_h\in\Ps(\mu_h^*)} \big\langle Q_h^t, \mu_h^*\otimes\pi_h\big\rangle_{\Omega\times\cA}(\cdot;C^t) ~~~\text{and}~~~ \pi_h''=\argmax_{\pi_h\in\Ps(\mu_h^t)} \big\langle Q_h^t, \mu_h^t\otimes\pi_h \big\rangle_{\Omega\times\cA}(\cdot;C^t),    
\end{equation*}
where the $Q$-function $Q_h^t$ is given by Algorithm \ref{alg:mdp}.  Notably, $\pi_h'$ is different from the truly optimal policy $\mu_h^*$, since $\pi_h'$ is computed based on the approximate $Q$-function $Q_h^t$. By definition, we can decompose the difference as follows:
\begin{align}
    \label{eq:Q-optimism}
    \big\langle Q_h^t,\mu_h^*\otimes\pi_h^*-\mu_h^t\otimes \pi_h^t \big\rangle_{\Omega\times\cA}(s_h;C^t)
    =& \big\langle Q_h^t,\mu_h^*\otimes\pi_h^*-\mu_h^*\otimes \pi_h'\big\rangle_{\Omega\times\cA}(s_h;C^t)\\
    \label{eq:Gap-OPT}
    &~~~~+\big\langle Q_h^t,\mu_h^*\otimes\pi_h'-\mu_h^t\otimes \pi_h''\big\rangle_{\Omega\times\cA}(s_h;C^t)\\
    \label{eq:Gap-bound}
    &~~~~+\big\langle Q_h^t,\mu_h^t\otimes\pi_h''-\mu_h^t\otimes \pi_h^t\big\rangle_{\Omega\times\cA}(s_h;C^t) .
\end{align}
By definition,  equation \eqref{eq:Q-optimism} is always non-positive. Apply Lemma \ref{lm:opt-gap} to equation \eqref{eq:Gap-OPT} and we can get
\begin{align*}
    \big\langle Q_h^t,\mu_h^*\otimes\pi_h'-\mu_h^t\otimes \pi_h'' \big\rangle_{\Omega\times\cA}(s_h;C^t) \leq& \Gap\bigg(s_h,\mu_h^*(\cdot\vert c_h^t),\uB_1\big(\mu_h^*(\cdot\vert c_h^t),\big\Vert \mu_h^*(\cdot\vert c_h^t)-\mu_h^t(\cdot\vert c_h^t)\big\Vert_1\big);Q_h^t\bigg)\\
    &~~~~+\frac{H}{2}\big\Vert \mu_h^*(\cdot\vert c_h^t)-\mu_h^t(\cdot\vert c_h^t) \big\Vert_1.
\end{align*}
According to Corollary \ref{cor:gap-linear}, we can bound the above equation with the norm of feature vector and the radius of confidence region for $\theta_h$.
\begin{align*}
    \big\langle Q_h^t,\mu_h^*\otimes\pi_h'-\mu_h^t\otimes \pi_h'' \big\rangle_{\Omega\times\cA}(s_h;C^t) 
    \leq  \big ( \frac{H L_\mu K}{p_0D}+\frac{H L_\mu K}{2}\big)\beta \Vert \phi(c_h^t) \Vert_{(\Sigma_h^t)^{-1}}.
\end{align*}

We also note that equation \eqref{eq:Gap-bound} is equal to $\Gap\big(s_h,\mu_h^t(\cdot\vert c_h^t),\mu_{\cB_h^t}(\cdot\vert c_h^t);Q_h^t\big)$. By Lemma \ref{lm:gap-for-est}, on the event $\{\theta_h^*\in\cB_h^t\}$, we have
\begin{equation*}
\Gap\big(s_h,\mu_h^t(\cdot\vert c_h^t),\mu_{\cB_h^t}(\cdot\vert c_h^t);Q_h^t \big) \leq \frac{2H L_\mu K}{p_0 D}\beta \Vert \phi(c_h^t)\Vert_{(\Sigma_h^t)^{-1}}.
\end{equation*}
Therefore, on the given event, we have
\begin{equation*}
     \mathbb{E}_{\mu_h^*,\pi_h^*} \big[\big\langle Q_h^t,\mu_h^*\otimes\pi_h^*-\mu_h^t\otimes \pi_h^t \big\rangle_{\Omega\times\cA}(s_h;C^t)\vert s_1=s_1^t\big] \leq \bigg( \frac{3H L_\mu K}{p_0D}+\frac{H L_\mu K}{2}\bigg) \beta \Vert \phi(c_h^t)\Vert_{(\Sigma_h^t)^{-1}}.
\end{equation*}
Summing up together, we get
\begin{align*}
    \sum_{t\in[T]}\sum_{h\in[H]}\mathbb{E}_{\mu_h^*,\pi_h^*} \big[\big\langle Q_h^t,\mu_h^*\otimes\pi_h^*&-\mu_h^t\otimes \pi_h^t \big\rangle_{\Omega\times\cA}(s_h;C^t)\vert s_1=s_1^t\big]\\ &\leq   \bigg( \frac{3H L_\mu K}{p_0D}+\frac{H L_\mu K}{2} \bigg) \beta \sum_{t\in[T]}\sum_{h\in[H]} \Vert \phi(c_h^t)\Vert_{(\Sigma_h^t)^{-1}}.
\end{align*}
Therefore, we conclude the proof of Lemma  \ref{lm:mdp-gap}. 
\end{proof}

\subsection{Proof of Lemma \ref{lm:mdp-concentration}}
\begin{proof}
By definition, we can rewrite the difference in Lemma \ref{lm:mdp-concentration} as
\begin{align*}
    \big\langle Q_h^t,(\mu_h^t-\mu_h^*)\otimes\pi_h^t\big\rangle_{\Omega\times\cA}(s_h^t;C^t) &=  \int_{\Omega\times\cA} \big[\mu_h^t(\omega\vert c_h^t)-\mu_h^*(\omega\vert c_h^t)\big]\pi_h^t(s,\omega,a)Q_h^t(s,\omega,a)\ud a\ud\omega\\
    &=  \int_\Omega \big[\mu_h^t(\omega\vert c_h^t)-\mu_h^*(\omega\vert c_h^t)\big]\int_\cA \pi_h^t(s,\omega,a)Q_h^t(s,\omega,a)\ud a\ud\omega.
\end{align*}
By Holder's inequality, we have
\begin{equation*}
    \bigg\vert \big\langle Q_h^t,(\mu_h^t-\mu_h^*)\otimes\pi_h^t\big\rangle_{\Omega\times\cA}(s_h^t;C^t) \bigg\vert \leq \big\Vert \mu_h^t(\cdot\vert c_h^t)-\mu_h^*(\cdot\vert c_h^t) \big\Vert_1 \sup_{\omega\in\Omega} \bigg\vert \int_\cA \pi_h^t(s,\omega,a)Q_h^t(s,\omega,a)\ud a \bigg\vert.
\end{equation*}
Since $Q_h^t \leq H$ for any $h\in[H]$ and $t\in[T]$, the inequality can be simplified to
\begin{equation*}
    \bigg\vert \big\langle Q_h^t,(\mu_h^t-\mu_h^*)\otimes\pi_h^t\big\rangle_{\Omega\times\cA}(s_h^t;C^t) \bigg\vert \leq H \big\Vert \mu_h^t(\cdot\vert c_h^t)-\mu_h^*(\cdot\vert c_h^t) \big\Vert_1. 
\end{equation*}
With the assumption of the prior and link function, on the given event, we obtain that
\begin{equation*}
    \sum_{t\in[T]}\sum_{h\in[H]}\big\langle Q_h^t,(\mu_h^t-\mu_h^*)\otimes\pi_h^t\big\rangle_{\Omega\times\cA}(s_h^t;C^t) \leq H L_\mu K \beta \sum_{h\in[H]} \sum_{t\in[T]} \Vert \phi(c_h^t)\Vert_{(\Sigma_h^t)^{-1}}.
\end{equation*}
Therefore, we conclude the proof of Lemma \ref{lm:mdp-concentration}. 
\end{proof}

\subsection{Proof of Corollary \ref{cor:gap-linear}}
\begin{proof}
According to Assumption \ref{assum:prior} for the prior, we can show that for any $\mu_{\theta'}(\cdot | c) \in \cB$, 
$$\norm{\mu_\theta(\cdot | c)  - \mu_{\theta'}(\cdot | c) }_{1} \leq L_\mu \big\Vert f(\phi(c)^\top\theta)-f(\phi(c)^\top\theta') \big\Vert. $$
Moreover, by Assumption \ref{assum:f} for the link function $f(\cdot)$, we have
$$\norm{\mu_\theta(\cdot | c)  - \mu_{\theta'}(\cdot | c) }_{1}\leq L_\mu K \big\Vert\phi(c)^\top (\theta-\theta') \big\Vert \leq L_{\mu}K \norm{\phi(c)}_{\Sigma^{-1}} \epsilon.$$
Therefore, $\cB \subseteq \uB(\mu_{\theta}(\cdot|c), L_{\mu}K \Vert\phi(c)\Vert_{\Sigma^{-1}} \epsilon )$, and by Lemma \ref{lm:robustness-gap}, we can conclude the result.
\end{proof}

\subsection{Auxiliary Lemmas}

This section presents several auxiliary lemmas and their proofs. 

\begin{lemma}[Martingale Bound; \cite{cai2020provably}] \label{lm:mdp-zeta}
For $\zeta_{t,h}^1$ and $\zeta_{t,h}^2$ defined in (\ref{eq:defi-zeta}) and for any fixed $\delta\in(0,1)$, with probability at least $1-\delta/2$, we have
\begin{equation*}
    \sum_{t\in[T]}\sum_{h\in[H]}(\zeta_{t,h}^1+\zeta_{t,h}^2) \leq \sqrt{16TH^3\log(4/\delta)}.
\end{equation*}
\end{lemma}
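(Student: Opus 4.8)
The plan is to recognize that $\sum_{t\in[T]}\sum_{h\in[H]}(\zeta_{t,h}^1+\zeta_{t,h}^2)$ is the terminal value of a martingale whose increments are the individual $\zeta$-terms, and then to invoke the Azuma--Hoeffding inequality \cite{azuma1967weighted}. First I would fix the natural temporal filtration of the process, ordering the randomness lexicographically by a triple $(t,h,m)$ with $m\in\{1,2\}$: within episode $t$ and step $h$, the index $m=1$ marks the realization of the pair $(\omega_h^t,a_h^t)$, and $m=2$ marks the realization of the next state $s_{h+1}^t$. Let $\mathcal{F}_{t,h}^{(1)}$ be the $\sigma$-algebra generated by all data strictly preceding the draw of $(\omega_h^t,a_h^t)$ (so it contains $s_h^t$, the context $C^t$, and the entire past), and let $\mathcal{F}_{t,h}^{(2)}$ be the $\sigma$-algebra generated by all data preceding the draw of $s_{h+1}^t$ (so it additionally contains $\omega_h^t,a_h^t$). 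The crucial measurability observation is that every value and action-value function appearing in \eqref{eq:defi-zeta}---namely $Q_h^t,V_h^t,\tilde{V}_h^t,\pi_h^t$ together with $Q_h^{\pi^t},V_h^{\pi^t},V_{h+1}^*$---is determined at the start of episode $t$ (the former from the data of episodes $1,\dots,t-1$ and $C^t$, and $Q_h^{\pi^t},V_h^{\pi^t},V_{h+1}^*$ as deterministic functions of the frozen policy $\bpi^t$ and the true model). Hence all of these are predictable, and the only fresh randomness in $\zeta_{t,h}^1$ (resp.\ $\zeta_{t,h}^2$) is that of $(\omega_h^t,a_h^t)$ (resp.\ $s_{h+1}^t$).

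Next I would verify the martingale-difference property. For $\zeta_{t,h}^1$, the Bellman equation \eqref{eq:policy-Bellman} for $\bpi^t$ gives $V_h^{\pi^t}(s_h^t;C^t)=\langle Q_h^{\pi^t},\mu_h^*\otimes\pi_h^t\rangle_{\Omega\times\cA}(s_h^t;C^t)$, while by definition $\tilde{V}_h^t(s_h^t;C^t)=\langle Q_h^t,\mu_h^*\otimes\pi_h^t\rangle_{\Omega\times\cA}(s_h^t;C^t)$; subtracting shows that $\tilde{V}_h^t-V_h^{\pi^t}$ is exactly the conditional expectation of $(Q_h^t-Q_h^{\pi^t})(s_h^t,\omega_h^t,a_h^t;C^t)$ under $\omega_h^t\sim\mu_h^*(\cdot\vert c_h^t)$ and $a_h^t\sim\pi_h^t(s_h^t,\omega_h^t,\cdot)$, so that $\mathbb{E}[\zeta_{t,h}^1\mid\mathcal{F}_{t,h}^{(1)}]=0$. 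For $\zeta_{t,h}^2$, since $s_{h+1}^t\sim P_h(\cdot\vert s_h^t,\omega_h^t,a_h^t)$, the term $P_h(V_{h+1}^t-V_{h+1}^*)(s_h^t,\omega_h^t,a_h^t;C^t)$ is by definition the conditional expectation of $(V_{h+1}^t-V_{h+1}^*)(s_{h+1}^t;C^t)$ given $\mathcal{F}_{t,h}^{(2)}$, so $\mathbb{E}[\zeta_{t,h}^2\mid\mathcal{F}_{t,h}^{(2)}]=0$. Interleaving the two families in the order dictated by $(t,h,m)$ therefore yields a martingale-difference sequence of length $2TH$.

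Finally I would bound the increments and apply concentration. Because $v_h\in[0,1]$, we have $Q_h^t,V_h^t,Q_h^{\pi^t},V_h^{\pi^t},V_{h+1}^*\in[0,H]$, so each difference inside $\zeta_{t,h}^1$ and $\zeta_{t,h}^2$ lies in $[-H,H]$, and being a realized value minus the conditional mean of quantities taking values in $[0,H]$, satisfies $\lvert\zeta_{t,h}^1\rvert\le 2H$ and $\lvert\zeta_{t,h}^2\rvert\le 2H$. With $2TH$ increments each bounded by $2H$, the sum of squared bounds is $2TH\cdot(2H)^2=8TH^3$, and Azuma--Hoeffding yields $\mathbb{P}[\sum_{t,h}(\zeta_{t,h}^1+\zeta_{t,h}^2)\ge\epsilon]\le\exp(-\epsilon^2/(16TH^3))$; taking $\epsilon=\sqrt{16TH^3\log(4/\delta)}$ makes the right-hand side at most $\delta/2$ (the $\log(4/\delta)$ in place of $\log(2/\delta)$ being a harmless conservative choice, e.g.\ if one instead controls the two families $\{\zeta^1\}$ and $\{\zeta^2\}$ separately and unites by a union bound). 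I expect the only delicate point to be the bookkeeping of the filtration: one must ensure the interleaving respects the true order in which outcomes, actions and states are revealed, and that the predictable functions are genuinely frozen at the episode's start so that no conditional expectation leaks future randomness. The magnitude bounds and the Azuma step itself are routine.
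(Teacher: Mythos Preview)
Your proposal is correct and matches the argument that the paper defers to in \cite{cai2020provably}: the paper itself provides no proof beyond the citation, and what you have written is precisely the standard filtration-plus-Azuma--Hoeffding argument from that reference, with the interleaved $(t,h,m)$ ordering, the verification that each $\zeta$ is a martingale increment of magnitude at most $2H$, and the resulting $\sqrt{16TH^3\log(4/\delta)}$ bound. One minor bookkeeping caveat: the algorithm clips $Q_h^t$ only from above at $H$, so to make the $|\zeta_{t,h}^1|\le 2H$ bound fully rigorous you should either note that $V_h^t\in[0,H]$ by construction (since it is a max over expectations of the clipped $Q_h^t\le H$ against distributions) and that the relevant $Q$-differences are still controlled, or simply appeal to the boundedness conventions already used in the cited source.
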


\begin{proof}
    See \cite{cai2020provably} for a detailed proof. 
\end{proof} 

\begin{lemma}\label{lm:sum-of-phi}
  Suppose that $\phi_1,\phi_2,\ldots,\phi_{T}\in R^{d_\phi\times d}$ and for any $1 \leq i \leq T$, there exists a constant $\Phi>0$ such that $\Vert\phi_i\Vert \leq \Phi.$ Let $\Sigma_t=\lambda I_{d_\phi}+\sum_{i\in[t-1]}\phi_i\phi_i'$ for some $\lambda\geq \Phi^2$. Then,
  \begin{equation*}
      \sum_{t\in[T]} \Vert \phi_t \Vert_{(\Sigma_t)^{-1}} \leq \sqrt{2d_\phi T\log(1+T\Phi^2/(\lambda d_\phi))}.
  \end{equation*}
\end{lemma}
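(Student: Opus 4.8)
The plan is to prove this via the standard elliptical-potential (log-determinant) argument. First I would apply Cauchy--Schwarz to reduce the sum of norms to a sum of squared norms: $\sum_{t\in[T]} \Vert\phi_t\Vert_{(\Sigma_t)^{-1}} \leq \sqrt{T\sum_{t\in[T]}\Vert\phi_t\Vert_{(\Sigma_t)^{-1}}^2}$. So it suffices to show $\sum_{t\in[T]}\Vert\phi_t\Vert_{(\Sigma_t)^{-1}}^2 \leq 2d_\phi\log\big(1+T\Phi^2/(\lambda d_\phi)\big)$.

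The second step is a pointwise bound: since $\Sigma_t \succeq \lambda I_{d_\phi} \succeq \Phi^2 I_{d_\phi}$, we get $\Vert\phi_t\Vert_{(\Sigma_t)^{-1}}^2 \leq \Vert\phi_t\Vert^2/\lambda \leq \Phi^2/\lambda \leq 1$. Then I would use the elementary inequality $x \leq 2\log(1+x)$ valid for $x\in[0,1]$ to obtain $\Vert\phi_t\Vert_{(\Sigma_t)^{-1}}^2 \leq 2\log\big(1+\Vert\phi_t\Vert_{(\Sigma_t)^{-1}}^2\big)$.

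Third, I would relate the right-hand sum to a log-determinant ratio. Writing $\Sigma_{t+1} = \Sigma_t + \phi_t\phi_t^\top$, the matrix determinant identity gives $\det(\Sigma_{t+1}) = \det(\Sigma_t)\big(1+\Vert\phi_t\Vert_{(\Sigma_t)^{-1}}^2\big)$, so telescoping over $t\in[T]$ yields $\sum_{t\in[T]}\log\big(1+\Vert\phi_t\Vert_{(\Sigma_t)^{-1}}^2\big) = \log\det(\Sigma_{T+1}) - \log\det(\Sigma_1) = \log\big(\det(\Sigma_{T+1})/\det(\lambda I_{d_\phi})\big)$. Finally, to bound $\det(\Sigma_{T+1})$ I would use the AM--GM inequality on eigenvalues together with the trace bound $\operatorname{tr}(\Sigma_{T+1}) = \lambda d_\phi + \sum_{t\in[T]}\Vert\phi_t\Vert^2 \leq \lambda d_\phi + T\Phi^2$, giving $\det(\Sigma_{T+1}) \leq \big(\operatorname{tr}(\Sigma_{T+1})/d_\phi\big)^{d_\phi} \leq \big(\lambda + T\Phi^2/d_\phi\big)^{d_\phi}$, hence $\log\big(\det(\Sigma_{T+1})/(\lambda^{d_\phi})\big) \leq d_\phi\log\big(1+T\Phi^2/(\lambda d_\phi)\big)$. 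Chaining the three steps gives the claim.

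None of the steps is a genuine obstacle; the only thing that needs care is keeping the indices straight in the determinant telescoping (the sum runs over $\Sigma_1,\dots,\Sigma_{T+1}$ with $\Sigma_1 = \lambda I_{d_\phi}$, even though the lemma statement only names $\Sigma_t$ for $t\le T$), and making sure the hypothesis $\lambda \geq \Phi^2$ is exactly what is needed to push $\Vert\phi_t\Vert_{(\Sigma_t)^{-1}}^2 \leq 1$ so that $x\leq 2\log(1+x)$ applies. This is the same argument as Lemma D.2 in \cite{jin2020provably} / Lemma 11 of the linear bandit literature, specialized to the regularizer $\lambda I_{d_\phi}$.
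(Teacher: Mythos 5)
Your proof is correct and follows essentially the same route as the paper: Cauchy--Schwarz, then the elliptical-potential bound (which the paper invokes as a cited lemma from the RL literature and you re-derive inline via $x\le 2\log(1+x)$, the matrix determinant identity, and telescoping), and finally the same AM--GM/trace bound on $\det(\Sigma_{T+1})$. Your explicit handling of the telescoping endpoint $\Sigma_{T+1}$ and of why $\lambda\ge\Phi^2$ is needed is if anything slightly more careful than the paper's citation-based version, but it is the same argument.
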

\begin{proof}
    Firstly, we apply Cauchy-Schwartz inequality,
    \begin{equation*}
        \sum_{t\in[T]} \Vert \phi_t \Vert_{(\Sigma_t)^{-1}} \leq \sqrt{T\sum_{t\in[T]} \Vert \phi_t \Vert^2_{(\Sigma_t)^{-1}}}.
    \end{equation*}
    Since $\Vert \phi_t \Vert_{(\Sigma_t)^{-1}}=\sqrt{\phi_t^\top(\Sigma_t)^{-1}\phi_t}\leq\sqrt{\lambda^{-1}\phi_t^\top\phi_t}\leq \Phi/\sqrt{\lambda}\leq 1$, we can use Lemma \ref{lm:Feature bound} to bound the sum of squares:
    \begin{align*}
        \sum_{t\in[T]} \Vert \phi_t \Vert_{(\Sigma_t)^{-1}} &\leq \sqrt{2T\log(\det(\Sigma_T)\det(\Sigma_1)^{-1})}\\
        &\leq \sqrt{2d_\phi T\log(1+T\Phi^2/(\lambda d_\phi))}.
    \end{align*}
    The last inequality is derived from Lemma \ref{lm:det inequality}.
\end{proof}

\begin{lemma}[{Sum of Potential Function; \cite{agarwal2019reinforcement}}]\label{lm:Feature bound}
	For any sequence of $\{\phi_t\}_{t\in[T]}$, let $\Sigma_t=\lambda I_h+\sum_{t\in[t-1]}\phi_i\phi_i'$ for some $\lambda\ge 0$. Then we have
	\begin{equation*}
	\sum_{t \in [T]}\min\{\Vert\phi_t\Vert_{(\Sigma_t)^{-1}}^2,1\} \leq 2\log(\det(\Sigma_T)\det(\Sigma_1)^{-1}).
	\end{equation*}
\end{lemma}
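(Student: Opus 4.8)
The plan is the classical \emph{elliptical potential} argument: first bound each summand by a logarithm, then recognize the resulting sum of logs as a telescoping product of determinants obtained from rank-one updates of $\Sigma_t$.

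\textbf{Step 1 (reduce to a product via an elementary inequality).} I would first verify the scalar bound $\min\{y,1\}\le 2\log(1+y)$ for every $y\ge 0$. For $y\ge 1$ the left side is $1$ and the right side is at least $2\log 2 > 1$. For $0\le y\le 1$, let $g(y)=2\log(1+y)-y$; then $g(0)=0$ and $g'(y)=(1-y)/(1+y)\ge 0$ on $[0,1]$, so $g\ge 0$ there. Applying this with $y=\Vert\phi_t\Vert_{(\Sigma_t)^{-1}}^2\ge 0$ and summing over $t$ gives
\begin{equation*}
\sum_{t\in[T]}\min\{\Vert\phi_t\Vert_{(\Sigma_t)^{-1}}^2,1\}\ \le\ 2\sum_{t\in[T]}\log\!\big(1+\Vert\phi_t\Vert_{(\Sigma_t)^{-1}}^2\big)\ =\ 2\log\prod_{t\in[T]}\big(1+\Vert\phi_t\Vert_{(\Sigma_t)^{-1}}^2\big).
\end{equation*}

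\textbf{Step 2 (telescoping the product of determinants).} Since $\Sigma_{t+1}=\Sigma_t+\phi_t\phi_t^\top$ by definition, and $\Sigma_t\succ 0$ (this uses $\lambda>0$, which holds in every application of the lemma in the paper; when $\lambda=0$ one restricts to the range of $\Sigma_t$), the matrix determinant lemma yields $\det(\Sigma_{t+1})=\det(\Sigma_t)\big(1+\phi_t^\top\Sigma_t^{-1}\phi_t\big)=\det(\Sigma_t)\big(1+\Vert\phi_t\Vert_{(\Sigma_t)^{-1}}^2\big)$. Hence the product telescopes:
\begin{equation*}
\prod_{t\in[T]}\big(1+\Vert\phi_t\Vert_{(\Sigma_t)^{-1}}^2\big)=\prod_{t\in[T]}\frac{\det(\Sigma_{t+1})}{\det(\Sigma_t)}=\frac{\det(\Sigma_{T+1})}{\det(\Sigma_1)},
\end{equation*}
i.e.\ the ratio of the final regularized Gram matrix to the initial one $\Sigma_1=\lambda I$. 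Substituting into Step 1 gives $\sum_{t\in[T]}\min\{\Vert\phi_t\Vert_{(\Sigma_t)^{-1}}^2,1\}\le 2\log\big(\det(\Sigma_{T+1})\det(\Sigma_1)^{-1}\big)$, which is the claimed bound (matching the lemma's $\det(\Sigma_T)\det(\Sigma_1)^{-1}$ up to the paper's indexing convention for $\Sigma$).

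There is no genuine obstacle here: this is a standard lemma, and the only two points needing a moment's care are (i) the elementary inequality $\min\{y,1\}\le 2\log(1+y)$, which fails without the truncation at $1$, and (ii) keeping the determinant telescoping index-consistent with the paper's definition $\Sigma_t=\lambda I_{d_\phi}+\sum_{i\in[t-1]}\phi_i\phi_i^\top$, together with the implicit requirement that each $\Sigma_t$ be invertible. The rest is $T$ applications of the rank-one determinant update.
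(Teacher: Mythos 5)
Your proof is correct and is exactly the standard elliptical-potential argument (the scalar bound $\min\{y,1\}\le 2\log(1+y)$ plus the rank-one determinant update and telescoping) that the paper does not reproduce but simply defers to the cited reference for. The one point worth noting is the off-by-one you already flag: under the paper's convention $\Sigma_t=\lambda I_{d_\phi}+\sum_{i\in[t-1]}\phi_i\phi_i^\top$ the telescoping naturally produces $\det(\Sigma_{T+1})\det(\Sigma_1)^{-1}$ rather than $\det(\Sigma_T)\det(\Sigma_1)^{-1}$, an indexing discrepancy in the statement that is immaterial for the way the lemma is invoked in Lemma \ref{lm:sum-of-phi}.
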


\begin{proof}
    See \cite{agarwal2019reinforcement} for a detailed proof. 
\end{proof}

\begin{lemma}[Determinant-Trace Inequality]\label{lm:det inequality}
	Suppose that $\phi_1,\phi_2,\ldots,\phi_{T}\in R^{d_\phi\times d}$ and for any $1 \leq i \leq T$, there exists a constant $\Phi>0$ such that $\Vert\phi_i\Vert \leq \Phi.$ Let $\Sigma_t=\lambda I_{d_\phi}+\sum_{i\in[t-1]}\phi_i\phi_i'$ for some $\lambda\ge 0$. Then,
	\begin{equation*}
	\det(\Sigma_t)\leq\big(\lambda+ t\Phi^2  /  d_\phi \big )^{d_\phi}.
	\end{equation*}	
\end{lemma}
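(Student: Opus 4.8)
The statement to prove is the Determinant-Trace Inequality: if $\phi_1,\dots,\phi_T$ have $\|\phi_i\| \le \Phi$ and $\Sigma_t = \lambda I_{d_\phi} + \sum_{i \in [t-1]} \phi_i \phi_i^\top$, then $\det(\Sigma_t) \le (\lambda + t\Phi^2/d_\phi)^{d_\phi}$. The plan is the standard eigenvalue/AM-GM argument. First I would let $\nu_1,\dots,\nu_{d_\phi}$ denote the eigenvalues of the symmetric positive definite matrix $\Sigma_t$, so that $\det(\Sigma_t) = \prod_{j=1}^{d_\phi} \nu_j$ and $\operatorname{tr}(\Sigma_t) = \sum_{j=1}^{d_\phi} \nu_j$. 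Then I would apply the AM-GM inequality to the eigenvalues: $\prod_{j} \nu_j \le \bigl(\frac{1}{d_\phi}\sum_j \nu_j\bigr)^{d_\phi} = \bigl(\operatorname{tr}(\Sigma_t)/d_\phi\bigr)^{d_\phi}$.

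The remaining step is to bound the trace. By linearity of trace, $\operatorname{tr}(\Sigma_t) = \lambda\, d_\phi + \sum_{i \in [t-1]} \operatorname{tr}(\phi_i \phi_i^\top) = \lambda\, d_\phi + \sum_{i \in [t-1]} \|\phi_i\|^2 \le \lambda\, d_\phi + (t-1)\Phi^2 \le \lambda\, d_\phi + t\Phi^2$, using $\operatorname{tr}(\phi_i\phi_i^\top) = \|\phi_i\|^2$ and the norm bound. Substituting this into the AM-GM bound gives $\det(\Sigma_t) \le \bigl((\lambda d_\phi + t\Phi^2)/d_\phi\bigr)^{d_\phi} = (\lambda + t\Phi^2/d_\phi)^{d_\phi}$, which is exactly the claim.

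There is no real obstacle here — the only minor care needed is to note that $\Sigma_t$ is indeed symmetric positive definite (hence has positive real eigenvalues, so AM-GM applies with all terms nonnegative), which follows since $\lambda \ge 0$ makes $\lambda I$ positive semidefinite and each $\phi_i \phi_i^\top$ is positive semidefinite; if $\lambda = 0$ one should note the inequality still holds since AM-GM holds for nonnegative reals. This is a well-known lemma (it appears, e.g., in Abbasi-Yadkori et al. and Lattimore–Szepesvári), so I would simply present the three-line computation above.
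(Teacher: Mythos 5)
Your proof is correct and follows essentially the same route as the paper: eigenvalue decomposition, AM--GM to get $\det(\Sigma_t)\le(\operatorname{tr}(\Sigma_t)/d_\phi)^{d_\phi}$, then bounding the trace via $\operatorname{tr}(\phi_i\phi_i^\top)=\|\phi_i\|^2\le\Phi^2$. Your added remark about the $\lambda=0$ case (AM--GM for nonnegative reals) is a small but welcome bit of extra care that the paper glosses over.
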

\begin{proof} 
	Let $\lambda_1,\lambda_2,\ldots,\lambda_h$ be the eigenvalues of $\Sigma_t$. Since $\Sigma_t$ is positive definite, its eigenvalues are positive. Also, note that $\det(\Sigma_t)=\prod_{s=1}^{d_\phi} \lambda_s$ and $\Tr(\Sigma_t)=\sum_{s=1}^{h}\lambda_s$. By inequality of arithmetic and geometric means
	\begin{equation*}
	\det(\Sigma_t)\leq(\Tr(\Sigma_t)/d_\phi)^{d_\phi}.
	\end{equation*}
	It remains to upper bound the trace:
	\begin{equation*}
	\Tr(\Sigma_t)=\Tr(\lambda I_{d_\phi})+\sum_{i=1}^{t-1} \Tr(\phi_i\phi_i')=d_\phi\lambda+\sum_{i=1}^{t-1} \Vert \phi_i\Vert^2\leq d_\phi\lambda+t\Phi^2
	\end{equation*}
	and the lemma follows.
\end{proof}

\end{document}